\theoremstyle{plain}
\newtheorem{theorem}{Theorem}[section]
\newtheorem{corollary}[theorem]{Corollary}
\theoremstyle{definition}
\theoremstyle{remark}
\newtheorem{remark}[theorem]{Remark}
\def\Algname{Dual Activation Precision}
\def\Algnameunderline{\underline{D}ual Act\underline{IV}ation Prec\underline{ISION}}
\def\Algnameabbr{DIVISION}
\icmltitlerunning{\Algnameabbr{}: Memory Efficient Training via Dual Activation Precision}
\begin{document}

\twocolumn[
\icmltitle{\Algnameabbr{}: Memory Efficient Training via Dual Activation Precision}


\begin{icmlauthorlist}
\icmlauthor{Guanchu Wang}{aff1}
\icmlauthor{Zirui Liu}{aff1}
\icmlauthor{Zhimeng Jiang}{aff2} \\
\icmlauthor{Ninghao Liu}{aff4} 
\icmlauthor{Na Zou}{aff3}
\icmlauthor{Xia Hu}{aff1}
\end{icmlauthorlist}

\icmlaffiliation{aff1}{Department of Computer Science, Rice University}
\icmlaffiliation{aff2}{Department of Computer Science and Engineering, Texas A\&M University}
\icmlaffiliation{aff3}{Department of Engineering Technology, Texas A\&M University}
\icmlaffiliation{aff4}{Department of Computer Science, University of Georgia}

\icmlcorrespondingauthor{Xia Hu}{xia.hu@rice.edu}


\icmlsetsymbol{equal}{*}

\icmlkeywords{Machine Learning, ICML}

\vskip 0.3in
]



\printAffiliationsAndNotice{}  


\begin{abstract}
Activation compressed training provides a solution towards reducing the memory cost of training deep neural networks~(DNNs).
However, state-of-the-art work combines a search of quantization bit-width with the training, which makes the procedure complicated and less transparent. To this end, we propose a simple and effective method to compress DNN training. Our method is motivated by an instructive observation: \emph{DNN backward propagation mainly utilizes the low-frequency component~(LFC) of the activation maps, while the majority of memory is for caching the high-frequency component~(HFC) during the training}. This indicates the HFC of activation maps is highly redundant and compressible, which inspires our proposed \Algnameunderline{}~(\Algnameabbr{}). During the training, \Algnameabbr{} preserves a high-precision copy of LFC and compresses the HFC into a light-weight copy with low numerical precision. This can significantly reduce the memory cost while maintaining a competitive model accuracy. 
Experiment results show \Algnameabbr{} has better comprehensive performance than state-of-the-art methods, including over $10\times$ compression of activation maps and competitive training throughput, without loss of model accuracy.
The source code is available at \url{https://github.com/guanchuwang/division}.


\end{abstract}



\vspace{-2mm}
\section{Introduction}
\label{sec:introduction}

Deep neural networks (DNNs) have been widely applied to real-world tasks such as language understanding~\cite{devlin2018bert}, machine translation~\cite{vaswani2017attention}, visual detection and tracking~\cite{redmon2016you}. 
With increasingly larger and deeper architectures, DNNs achieve remarkable improvement in representation learning and generalization capacity~\cite{krizhevsky2012imagenet}. %
Nevertheless, training a larger model requires more memory resources to cache the activation values of all intermediate layers during the backward propagation\footnote{The activation map of each layer is required for estimating the gradient during backward propagation.}.
For example, training a DenseNet-121~\cite{huang2017densely} on the ImageNet dataset~\cite{deng2009imagenet} requires to cache over 1.3 billion float activation values~(4.8GB) during backward propagation; 
and training a ResNet-50~\cite{he2016deep} requires to cache over 4.6 billion float activation values~(17GB). 
Some techniques have been developed to reduce the training cache of DNNs, such as checkpointing~\cite{chen2016training, gruslys2016memory}, mix precision training~\cite{vanholder2016efficient}, low bit-width training~\cite{lin2017towards, chen2020statistical} and activation compressed training~\cite{georgiadis2019accelerating, evans2021ac}. 
Among these, the activation compressed training (ACT) has emerged as a promising method due to its significant reduction of training memory and the competitive learning performance~\cite{liu2021exact}.

\begin{figure}
\setlength{\abovecaptionskip}{1mm}
\setlength{\belowcaptionskip}{-6mm}
\centering
\includegraphics[width=0.8\linewidth]{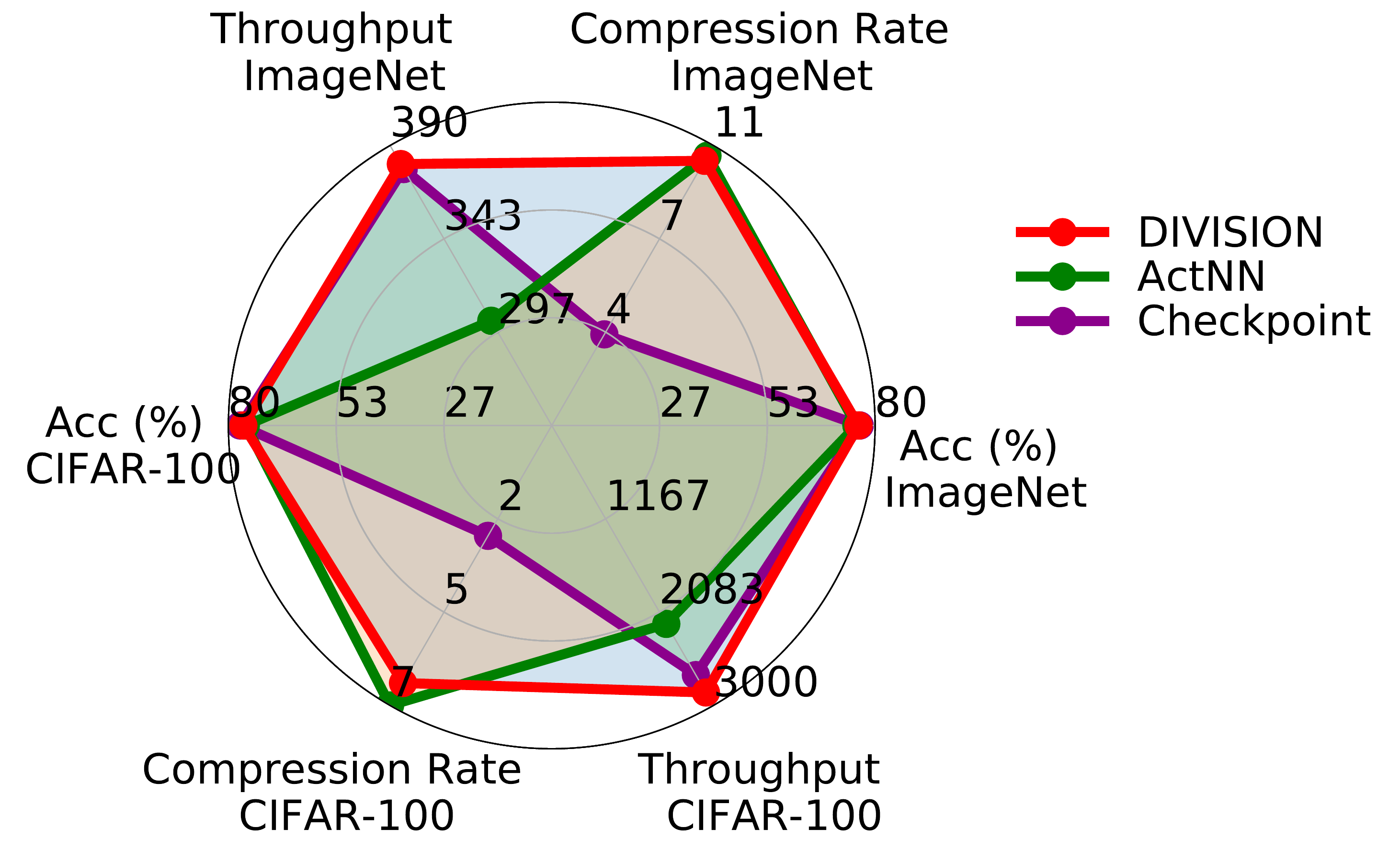}
\caption{\label{fig:radar} Performance of \Algnameabbr{} vs baseline methods.}
\end{figure}

Existing work of ACT relies on quantizing the activation maps to reduce the memory consumption of DNN training, such as BLPA~\cite{chakrabarti2019backprop}, TinyScript~\cite{fu2020don} and ActNN~\cite{chen2021actnn}.
Although ACT could significantly reduce the training memory cost, the quantization process introduces noises in backward propagation, which makes the training suffer from undesirable degradation of accuracy~\cite{fu2020don}.
Due to this reason, BLPA requires 4-bit ACT to ensure the convergence to optimal solution on the ImageNet dataset, which has only a $6\times\!$ compression rate\footnote{\footnotesize A $6 \times$ compression rate indicates the memory of cached activation maps is $1/6$ of that of normal training.} of activation maps~\cite{chakrabarti2019backprop}.
Other works propose to search for optimal bit-width to match different samples during training, such as ActNN~\cite{chen2021actnn} and AC-GC~\cite{evans2021ac}.
Although they can moderately reduce the quantization noise and achieves optimal solution under 2-bit ACT~(nearly 10$\times\!$ compression rate), the following issues cannot be ignored.
First, it is time-consuming to search for the optimal bit-width during training.
Second, the framework of bit-width searching is complicated and non-transparent, which brings new challenges to follow-up studies on the ACT and its applicability.


In this work, we propose a simple and transparent method to reduce the memory cost of DNN training.
Our method is motivated by an instructive observation:
\emph{DNN backward propagation mainly utilizes the low-frequency component~(LFC) of the activation maps, while the majority of memory is for the storage of high-frequency component~(HFC).}
This indicates the HFC of activation map is highly redundant and compressible during the training.
Following this direction, we propose \Algname{}~(\Algnameabbr{}), which preserves the high-precision copy of LFC and compresses the HFC into a light-weight copy with low numerical precision during the training.
In this way, \Algnameabbr{} can significantly reduce the memory cost.
Meanwhile, it will not negatively affect the quality of backward propagation and could maintain competitive model accuracy.

Compared with existing work that integrates searching into learning~\cite{chen2021actnn}, \Algnameabbr{} has a more simplified compressor and decompressor, speeding up the procedure of ACT.
More importantly, it reveals the compressible (HFC) and non-compressible factors (LFC) during DNN training, improving the transparency of ACT.
Figure~\ref{fig:radar} gives the comprehensive performance of \Algnameabbr{} compared with state-of-the-art methods, which demonstrates the competitiveness of \Algnameabbr{} in terms of the model accuracy, compression rate, and training throughput.
The contributions of this work are summarized as follows:
\begin{itemize}[leftmargin=9pt, topsep=-1mm]
\setlength{\parskip}{1mm}
\setlength{\parsep}{0mm}
\setlength{\itemsep}{0mm}

    \item We experimentally and theoretically prove DNN backward propagation mainly utilizes the LFC of the activation maps.
    The HFC is highly redundant and compressible.
    
    
    

    \item We propose a simple and effective framework called \Algnameabbr{} to reduce the memory cost of DNN training via removing the redundancy in the HFC of activation maps.

 
    \item Experiments on three benchmark datasets demonstrate the effectiveness of \Algnameabbr{} in terms of memory cost, model accuracy, and training throughput.
    
    
\end{itemize}

\section{Preliminary}
\subsection{Notations}

Without loss of generality, we consider an $L$-layer deep neural network in this work.
During the forward pass, for each layer $l$~($1 \leq l \leq L$), the activation map is given by
\begin{equation}
\setlength{\abovedisplayskip}{1mm}
\setlength{\belowdisplayskip}{1mm}
\mathbf{H}_{l} = \mathrm{forward}(\mathbf{H}_{l-1}; \mathbf{W}_l),
\end{equation}
where $\mathbf{H}_{l}$ denotes the activation map of layer $l$; $\mathbf{H}_0$ takes a mini-batch of input images; $\mathbf{W}_{l}$ denotes the weight of layer $l$; and $\mathrm{forward}(\cdot)$ denotes a feed-forward operation.
During the backward pass, the gradients of the loss value towards the activation maps and weights are be estimated by
\begin{equation}
\setlength{\abovedisplayskip}{1mm}
\setlength{\belowdisplayskip}{1mm}
\label{eq:grad_estimate}
\big[ \hat{\nabla}_{\mathbf{H}_{l-1}}, \hat{\nabla}_{\mathbf{W}_{l}} \big]  = \mathrm{backward}( \hat{\nabla}_{\mathbf{H}_{l}}, \mathbf{H}_{l-1}, \mathbf{W}_{l} ),
\end{equation}
where $\hat{\nabla}_{\mathbf{H}_{l-1}}$ and $\hat{\nabla}_{\mathbf{H}_{l}}$ denote the gradient towards the activation map of layer $l\!-\!1$ and $l$, respectively; $\hat{\nabla}_{\mathbf{W}_{l}}$ denotes the gradient towards the weight of layer $l$;  and $\mathrm{backward}(\cdot)$\footnote{We do not focus on the closed from the backward function, which is implemented by \texttt{torch.autograd}.} denotes the backward function which takes $\hat{\nabla}_{\mathbf{H}_{l}}$, ${\mathbf{H}}_{l-1}$ and $\mathbf{W}_{l}$, and outputs the gradients $\hat{\nabla}_{\mathbf{H}_{l-1}}$ and $\hat{\nabla}_{\mathbf{W}_{l}}$.
Equation~(\ref{eq:grad_estimate}) indicates it is required to cache the activation maps $\mathbf{H}_{0}, \cdots, \mathbf{H}_{L\!-\!1}$ after the feed-forward operations for gradient estimation during backward propagation.


\vspace{-1mm}
\subsection{Activation Compressed Training}

It has been proved in existing work~\cite{chen2020statistical} that majority of memory~(nearly 90\%) is for caching activation maps during the training of DNNs.
Following this direction, the activation compressed training~(ACT) reduces the memory cost via real-time compressing the activation maps during the training.
A typical ACT framework in existing work~\cite{chakrabarti2019backprop} is shown in Figure~\ref{fig:act_framework}.
Specifically, after the feed-forward operation of each layer $l$, activation map $\mathbf{H}_{l-1}$ is compressed into a representation for caching.
The compression enables a significant reduction of memory compared with caching the original~(exact) activation maps.
During the backward pass of layer $l$, ACT decompresses the cached representation into $\hat{\mathbf{H}}_{l-1}$, and estimates the gradient by taking the reconstructed $\hat{\mathbf{H}}_{l-1}$ into Equation~(\ref{eq:grad_estimate}): $[ \hat{\nabla}_{\mathbf{H}_{l-1}}, \hat{\nabla}_{\mathbf{W}_{l}} ]  = \mathrm{backward}( \hat{\nabla}_{\mathbf{H}_{l}}, \hat{\mathbf{H}}_{l-1}, \mathbf{W}_{l} )$.


\vspace{-1mm}
Even though the pipeline of compression and decompression is lossy, i.e. $\hat{\mathbf{H}}_l \neq \mathbf{H}_l$ for $1 \leq l \leq L$.
It has been proved ACT can limit the reconstruction error flowing back to early layers and enables the training to approach an approximately optimal solution~\cite{chen2021actnn}.

\begin{figure}
\setlength{\abovecaptionskip}{2mm}
\setlength{\belowcaptionskip}{-5mm}
    \centering
    \includegraphics[width=0.99\linewidth]{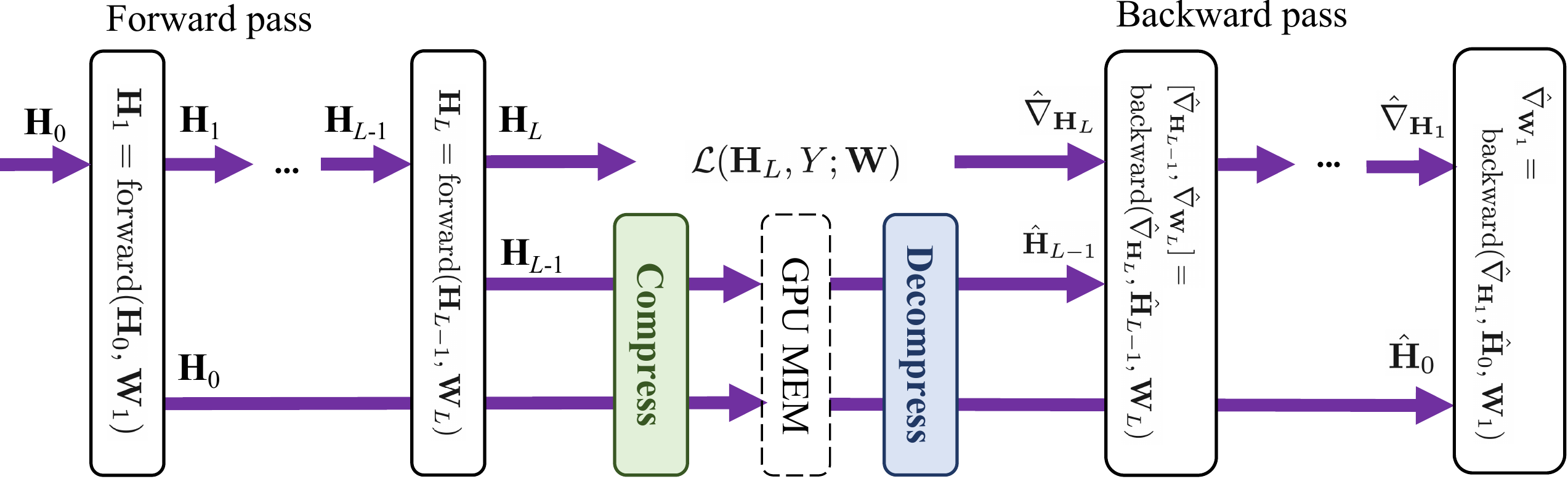}
\caption{\label{fig:act_framework} 
\small Activation compressed training.}
\end{figure}

\begin{figure*}
\vspace{-1mm}
\setlength{\abovecaptionskip}{-1mm}
\setlength{\belowcaptionskip}{-6mm}
    \centering
    \subfigure[]{
    \centering
    \begin{minipage}[t]{0.5\linewidth}
    \includegraphics[width=1.0\linewidth]{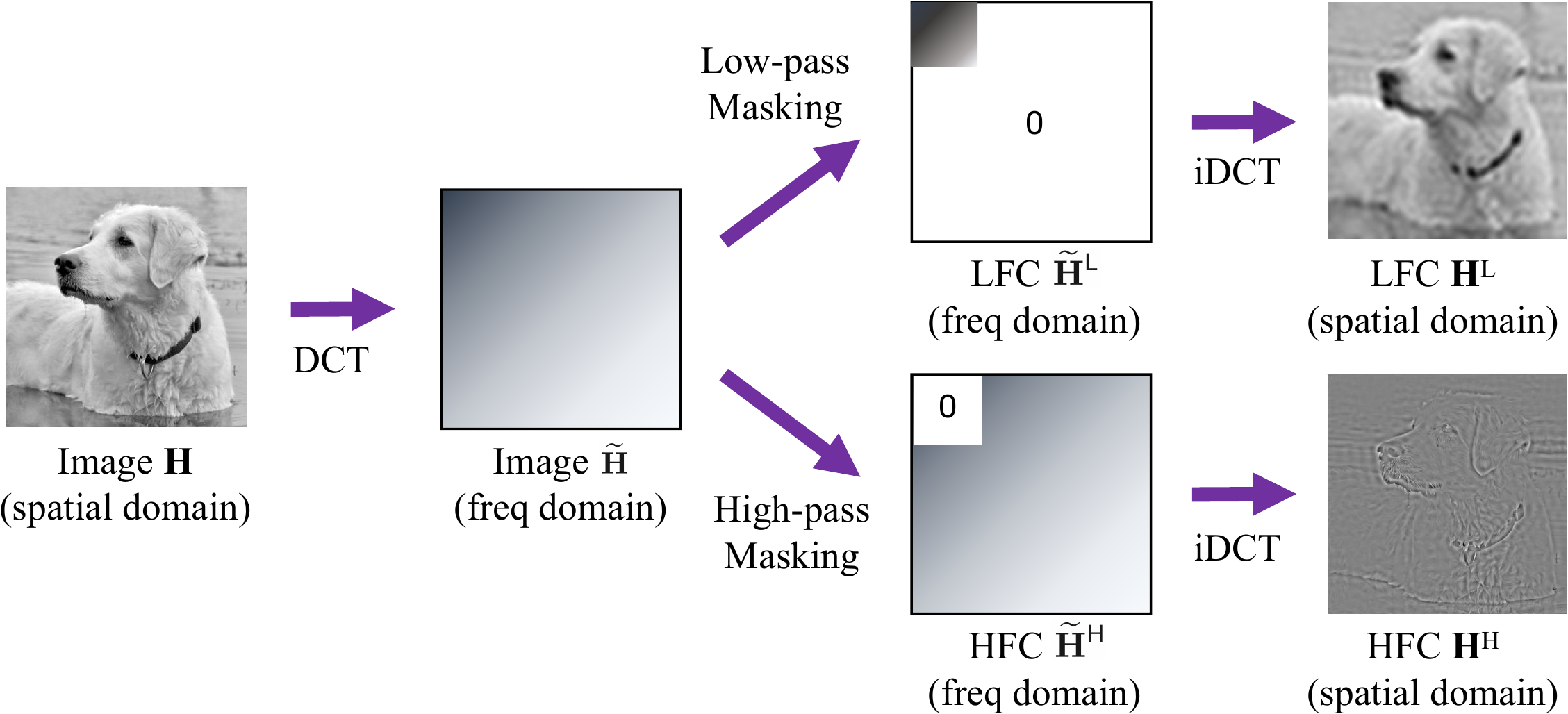}
    \end{minipage}%
    }
    \quad
    \subfigure[]{
    \centering
    \begin{minipage}[t]{0.15\linewidth} 
    	\includegraphics[width=0.99\linewidth]{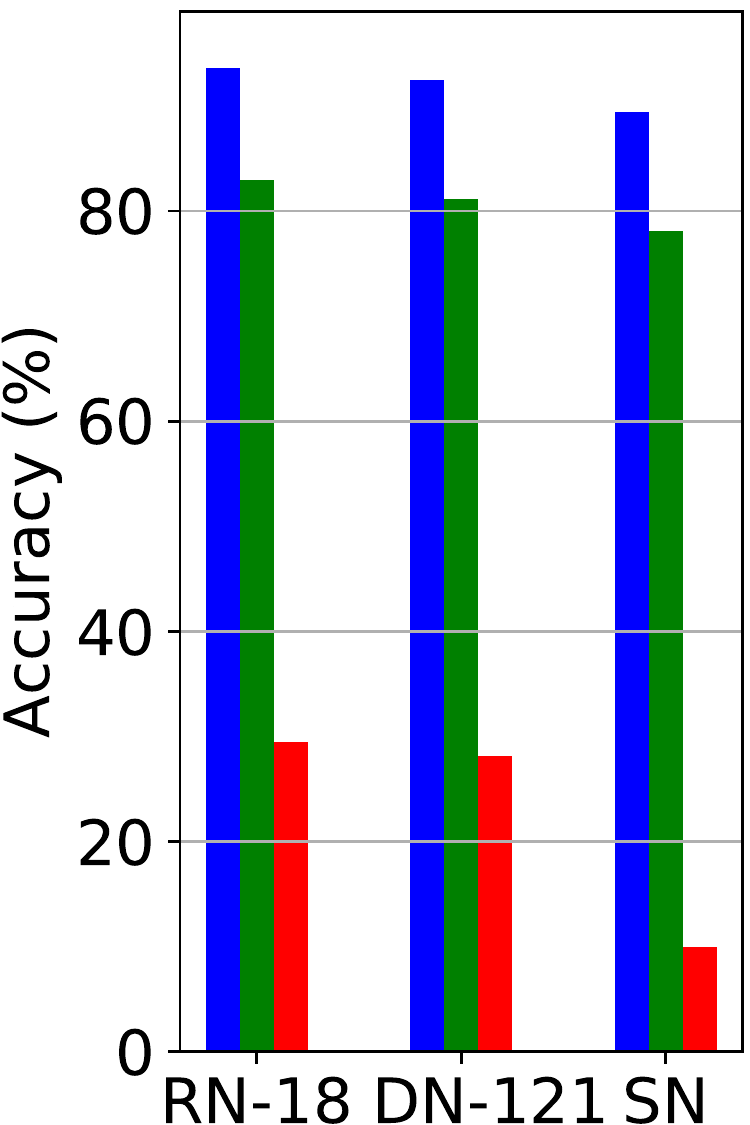}
    \end{minipage}%
    }
    \subfigure[]{
    \centering
    \begin{minipage}[t]{0.15\linewidth}
    	\includegraphics[width=0.99\linewidth]{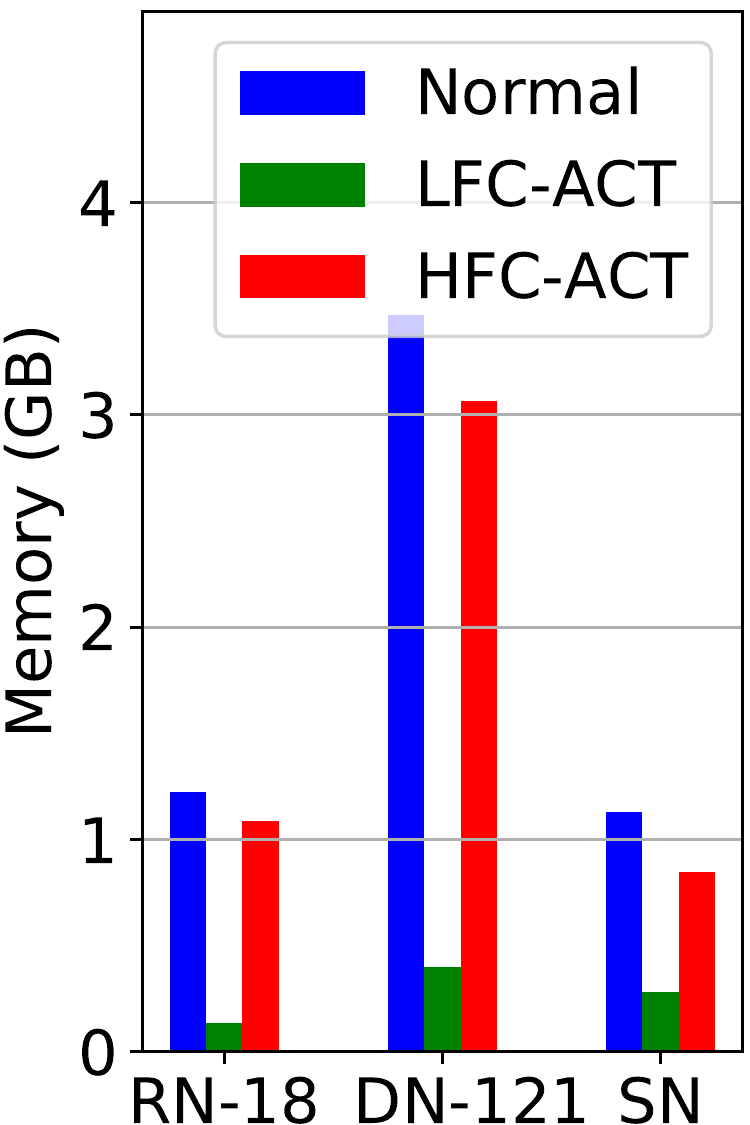}
    \end{minipage}%
    }
\caption{\label{fig:dct_framework} \small (a) Adopting DCT to estimate the low frequency component~(LFC) and high frequency component~(HFC) of an image.
(b) Top-1 accuracy and (c) Memory cost of normal training, LFC-ACT and HFC-ACT, where RN-18, DN-121, and SN refer to the ResNet-18, DenseNet-121, and ShuffleNet-V2, respectively.
}
\end{figure*}

\vspace{-1mm}
\subsection{Discrete Cosine Transformation}

\label{sec:preli_dct}

Discrete Cosine Transformation~(DCT) projects the target data from the spatial domain to the frequency domain via the inner-production of the data and a collection of cosine functions with different frequency~\cite{rao2014discrete}.
We focus on the 2D-DCT in this section, where the target data is the input image and activation maps of DNNs.
The cases of 1D/3D-DCT for 1D/3D activation maps are considered in Appendix~\ref{appendix:1D/3D-DCT}.
Specifically, for 2D-matrix data $\mathbf{H}$, the frequency-domain feature $\widetilde{\mathbf{H}}$ is estimated by $\widetilde{\mathbf{H}} \!=\! \mathrm{DCT}(\mathbf{H})$, where $\mathbf{H}$ and $\widetilde{\mathbf{H}}$ have the same shape of $N \!\times\! N$;
and each of the element $\tilde{h}_{i,j}$ is given by
\begin{equation}
\setlength{\abovedisplayskip}{0mm}
\setlength{\belowdisplayskip}{0mm}
\small
\!\!\!\!\tilde{h}_{i,j} \!=\! \sum^{N-1}_{m=0} \sum^{N-1}_{n=0} h_{m,n} \cos \! \bigg[ \frac{\pi}{N} \bigg( m \!+\! \frac{1}{2}  \bigg) i \bigg] \! \cos \! \bigg[ \frac{\pi}{N} \bigg( n \!+\! \frac{1}{2}  \bigg) j \bigg] \!,
\end{equation}
where $h_{m,n}$, $0\!\leq\! m,n \!\leq\! N\!-\!1$, are elements in the original matrix $\mathbf{H}$.
During the training of DNNs, an image or activation map has the shape of $\mathrm{Minibatch} \!\times\! \mathrm{Channel} \!\times\! N \!\times\! N$.
In this case, the frequency-domain feature is estimated via operating 2D-DCT for each $N \!\!\times\!\! N$ matrix in each channel.


With DCT, we could extract the low-frequency/high-frequency component~(LFC/HFC) of an image or activation map, using a pipeline of low-pass/high-pass masking and inverse DCT, as shown in Figure~\ref{fig:dct_framework}~(a).
To be concrete, the estimation of LFC and HFC is given by
{\setlength\abovedisplayskip{1mm}
\setlength\belowdisplayskip{1mm}
\begin{align}
    \label{eq:lfc_estimate}
    \mathbf{H}^{\mathsf{L}} &= \mathrm{iDCT}(\widetilde{\mathbf{H}} \odot \mathbf{M})
    \\[-4bp]
    \label{eq:hfc_estimate}
    \mathbf{H}^{\mathsf{H}} &= \mathrm{iDCT}(\widetilde{\mathbf{H}} \odot (\mathbf{1}_{N \times N} - \mathbf{M})),
\end{align}}
\!\!\! where $\mathrm{iDCT}(\cdot)$ denotes the inverse DCT~\cite{rao2014discrete}; $\mathbf{M} = [m_{i,j} | 1 \leq i,j \leq N]$ denotes an $N \!\times\! N$ low-pass mask satisfying $m_{i,j} \!=\! 1$ for $1 \leq i,j \leq W$ and $m_{i,j} \!=\! 0$ for other elements; and $\mathbf{1}_{N \times N} \!-\! \mathbf{M}$ indicates the high-pass mask.
Intuitively, $\mathbf{H}^{\mathsf{L}}$ has $W^2$ non-zero float numbers in each channel, in contrast with $N^2\!-\!W^2$ non-zero float numbers in each channel of $\mathbf{H}^{\mathsf{H}}$.
Generally, we have $W \!\ll\! N$ in practical scenarios, e.g. $W/N=0.1$ in Figure~\ref{fig:dct_framework}~(a).
This indicates the HFC takes the majority of the memory cost in the caching of activation maps.





\section{Contribution of LFC and HFC to Backward Propagation}
\label{sec:pre_exp}




In this section, we experimentally prove the LFC of activation maps has significantly more contribution to DNN backward propagation than the HFC.
Meanwhile, we theoretically prove that LFC makes the estimated gradient to be bounded into a tighter range around the optimal value, leading to a more accurate learned model, which is consistent with the experimental results.


\subsection{Experimental Analysis}
\label{sec:prelim_exp}


To study the contribution of LFC and HFC to DNN backward propagation, we design three training methods with different backward propagations:
\textbf{LFC-ACT} takes LFC into the backward function as shown in Equation~(\ref{eq:lfc_grad_estimate}), where $\mathbf{H}_{l}^{\mathsf{L}}$ is estimated by Equations~(\ref{eq:lfc_estimate});
\textbf{HFC-ACT} takes HFC into the backward function as given in Equation~(\ref{eq:hfc_grad_estimate}), where $\mathbf{H}_{l}^{\mathsf{H}}$ is according to Equation~(\ref{eq:hfc_estimate});
\textbf{Normal training}~(for comparison) estimates the gradients by Equation~(\ref{eq:grad_estimate}).\!\!\!\!
{\setlength\abovedisplayskip{0mm}
\setlength\belowdisplayskip{-1mm}
\begin{align}
    \label{eq:lfc_grad_estimate}
    [ \hat{\nabla}_{\mathbf{H}_{l-1}}, \hat{\nabla}_{\mathbf{W}_{l}} ]  &= \mathrm{backward}( \hat{\nabla}_{\mathbf{H}_{l}}, \mathbf{H}_{l}^{\mathsf{L}}, \mathbf{W}_{l} ),  
    \\[-2bp]
    \label{eq:hfc_grad_estimate}
    [ \hat{\nabla}_{\mathbf{H}_{l-1}}, \hat{\nabla}_{\mathbf{W}_{l}} ]  &= \mathrm{backward}( \hat{\nabla}_{\mathbf{H}_{l}}, \mathbf{H}_{l}^{\mathsf{H}}, \mathbf{W}_{l} ).  
\vspace{-0mm}
\end{align}}

We conduct the experiments on the CIFAR-10 dataset.
The implementation details are given in Appendix~\ref{appendix:hyper_pre_exp}.
The top-1 accuracy and memory cost of $\mathrm{LFC}\text{-}\mathrm{ACT}$, $\mathrm{HFC}\text{-}\mathrm{ACT}$, and normal training are shown in Figure~\ref{fig:dct_framework}~(b) and~(c), respectively.
Overall, we have the following observations:
\begin{itemize}[leftmargin=8pt, topsep=0pt]
\setlength{\parskip}{1mm}
\setlength{\parsep}{0pt}
\setlength{\itemsep}{0pt}
\vspace{-1mm}

    \item \textbf{Accuracy:} According to Figure~\ref{fig:dct_framework}~(b), $\mathrm{HFC}\text{-}\mathrm{ACT}$ suffers from significantly more degradation of accuracy than $\mathrm{LFC}\text{-}\mathrm{ACT}$.
    This indicates \emph{DNN backward propagation mainly utilizes the LFC of activation maps}.
    
    \item \textbf{Memory:} According to Figure~\ref{fig:dct_framework}~(c), the storage of HFC requires significantly more memory than that of the LFC,
    i.e., \emph{the storage of HFC consumes the majority of memory.}

\vspace{-1mm}
\end{itemize}

To better understand the results of model accuracy, we theoretically prove the gradient for backward propagation is bounded into a tighter range around the optimal value in $\mathrm{LFC}\text{-}\mathrm{ACT}$. This enables $\mathrm{LFC}\text{-}\mathrm{ACT}$ to learn a more accurate model than $\mathrm{HFC}\text{-}\mathrm{ACT}$.


\subsection{Theoretical Analysis}

We theoretically analyze the gradient estimation error of $\mathrm{LFC}\text{-}\mathrm{ACT}$ and $\mathrm{HFC}\text{-}\mathrm{ACT}$ which adopt
Equations~(\ref{eq:lfc_grad_estimate}) and~(\ref{eq:hfc_grad_estimate}) for backward propagation, respectively.
Formally, for $\mathrm{LFC}\text{-}\mathrm{ACT}$ and $\mathrm{HFC}\text{-}\mathrm{ACT}$, let $\hat{\nabla}_{\mathbf{W}_{l}}^{\mathsf{L}}$ and $\hat{\nabla}_{\mathbf{W}_{l}}^{\mathsf{H}}$ denote the estimated gradient of layer $l$, respectively.
In this way, $||\hat{\nabla}_{\mathbf{W}_{l}}^{\mathsf{L}} \!-\! \nabla_{\mathbf{W}_{l}}||_F$\footnote{\tiny The Frobenius norm of $n\!\times\!n$ matrix $\mathbf{A}$ is given by $||\mathbf{A}||_F = \sqrt{\sum_{i=1}^n\sum_{j=1}^n a_{ij}^2}$.} and $||\hat{\nabla}_{\mathbf{W}_{l}}^{\mathsf{H}} \!-\! \nabla_{\mathbf{W}_{l}}||_F$ indicates the gradient estimation errors, taking the complete gradient $\nabla_{\mathbf{W}_{l}}$ as a reference. 
To compare the distortion of backward propagation in $\mathrm{LFC}\text{-}\mathrm{ACT}$ and $\mathrm{HFC}\text{-}\mathrm{ACT}$, let $\mathrm{GEB}^{\mathsf{L}}_l$ and $\mathrm{GEB}^{\mathsf{H}}_l$ denote the gradient error upper bound~($\mathrm{GEB}$), respectively,  i.e. $||\hat{\nabla}_{\mathbf{W}_{l}}^{\mathsf{L}} \!-\! \nabla_{\mathbf{W}_{l}}||_F \!\leq\! \mathrm{GEB}^{\mathsf{L}}_l$ and $||\hat{\nabla}_{\mathbf{W}_{l}}^{\mathsf{H}} \!-\! \nabla_{\mathbf{W}_{l}}||_F \!\leq\! \mathrm{GEB}^{\mathsf{H}}_l$.
Intuitively, higher $\mathrm{GEB}$ indicates less accurate backward propagation, leading to a less accurate model after the training.
To this end, we give Theorem~\ref{theorem:grad_error} to compare $\mathrm{GEB}^{\mathsf{L}}_l$ and $\mathrm{GEB}^{\mathsf{H}}_l$, where a convolutional layer is considered. The proof is given in Appendix~\ref{appendix:proof_grad_error}.
A similar analysis of $\mathrm{GEB}$ for a linear layer applied to MLPs and Transformers is provided in Appendix~\ref{appendix:linear_grad_error}.
\begin{theorem}
\label{theorem:grad_error}
\vspace{-1mm}
\setlength{\abovedisplayskip}{0mm}
\setlength{\belowdisplayskip}{0mm}
During the backward pass of a convolutional layer~$l$, $\mathrm{GEB}^{\mathsf{L}}_l$ and $\mathrm{GEB}^{\mathsf{H}}_l$ satisfy
\begin{equation}
\begin{aligned}
\setlength{\abovedisplayskip}{0mm}
\setlength{\belowdisplayskip}{0mm}
\label{eq:grad_error}
\mathrm{GEB}^{\mathsf{L}}_l \!-\! \mathrm{GEB}^{\mathsf{H}}_l 
&\!=\! \Big( \alpha_{l,l} || \mathbf{H}_{l-1}^{\mathsf{T}} ||_F \!+\! \beta_l \Big) ( \lambda^{\mathsf{H}}_l \!-\! \lambda^{\mathsf{L}}_l ) 
\\
&\!+ || \mathbf{H}_{l-1}^{\mathsf{T}} ||_F \!\!\!\! \sum_{i=l+1}^L \!\! \alpha_{l,i} ( \lambda^{\mathsf{H}}_i \!-\! \lambda^{\mathsf{L}}_i ) \prod_{j=l}^{i-1} \gamma_j,
\end{aligned}
\end{equation}
where $\alpha_{l,i}, \beta_l, \gamma_l \!>\! 0$ for $1 \!\leq\! l,i \!\leq\! L$ depend on the model weights before backward propagation (given by Equations~(\ref{eq:beta}) in Appendix~\ref{appendix:proof_grad_error}); $\lambda^{\mathsf{L}}_l \!=\! || \widetilde{\mathbf{H}}_l \!\odot\! \mathbf{M} ||_F$; $\lambda^{\mathsf{H}}_l \!=\! || \widetilde{\mathbf{H}}_l \!\odot\! ( \mathbf{1} \!-\! \mathbf{M}) ||_F$; $\widetilde{\mathbf{H}}_l \!=\! \mathrm{DCT}({\mathbf{H}}_l)$; and $\mathbf{M}$ denotes the loss-pass mask given by Equation~(\ref{eq:lfc_estimate}).


\vspace{-1mm}
\end{theorem}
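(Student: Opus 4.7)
The plan is to track how replacing each cached activation by its LFC or HFC version perturbs the backward pass, then bound the resulting gradient error by unrolling the backward recursion from layer $L$ down to layer $l$. The key enabling fact is that DCT is an orthonormal transform, so by Parseval we have $\|\mathbf{H}_i - \mathbf{H}_i^{\mathsf{L}}\|_F = \|\mathbf{H}_i^{\mathsf{H}}\|_F = \lambda^{\mathsf{H}}_i$ and symmetrically $\|\mathbf{H}_i - \mathbf{H}_i^{\mathsf{H}}\|_F = \lambda^{\mathsf{L}}_i$, so that the LFC and HFC schemes produce bounds of identical structure differing only by the swap $\lambda^{\mathsf{L}}_i \!\leftrightarrow\! \lambda^{\mathsf{H}}_i$; the claimed telescoping subtraction in Equation~(\ref{eq:grad_error}) is then immediate.

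First I would write the exact backward for a convolutional layer as $\nabla_{\mathbf{W}_l} = \phi(\nabla_{\mathbf{H}_l}) \cdot \mathbf{H}_{l-1}^{\mathsf{T}}$ and $\nabla_{\mathbf{H}_{l-1}} = \mathbf{W}_l^{\mathsf{T}} \cdot \phi(\nabla_{\mathbf{H}_l})$, where $\phi$ absorbs the nonlinearity's Jacobian (so $\phi$ depends on $\mathbf{H}_l$) and convolution is viewed as a linear map. Substituting $\mathbf{H}_{l-1}^{\mathsf{L}}$ for $\mathbf{H}_{l-1}$ (LFC case) and expanding, split the weight-gradient error into a local part $\phi(\hat{\nabla}_{\mathbf{H}_l}^{\mathsf{L}})\cdot(\mathbf{H}_{l-1}^{\mathsf{L}\mathsf{T}}-\mathbf{H}_{l-1}^{\mathsf{T}})$ and an upstream part $(\phi(\hat{\nabla}_{\mathbf{H}_l}^{\mathsf{L}}) - \phi(\nabla_{\mathbf{H}_l}))\cdot \mathbf{H}_{l-1}^{\mathsf{T}}$. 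Submultiplicativity of the Frobenius norm together with Parseval bounds the local part by $\alpha_{l,l}\|\mathbf{H}_{l-1}^{\mathsf{T}}\|_F\,\lambda^{\mathsf{H}}_l$ plus a $\beta_l\lambda^{\mathsf{H}}_l$ contribution from the Jacobian perturbation at layer $l$; these are exactly the two summands in the first parenthesis of Equation~(\ref{eq:grad_error}).

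For the upstream part, I would unroll the recursion $\hat{\nabla}_{\mathbf{H}_{i-1}}^{\mathsf{L}} - \nabla_{\mathbf{H}_{i-1}} = \mathbf{W}_i^{\mathsf{T}} (\phi(\hat{\nabla}_{\mathbf{H}_i}^{\mathsf{L}}) - \phi(\nabla_{\mathbf{H}_i}))$ backwards from $i=L$ to $i=l+1$. At each layer $i$, a fresh perturbation of magnitude $\lambda^{\mathsf{H}}_i$ enters through the nonlinearity Jacobian evaluated at the compressed activation; it then propagates through $i-l$ backward layers, each contributing a factor $\gamma_j$ (collecting the operator norm of $\mathbf{W}_{j+1}^{\mathsf{T}}$ and the Lipschitz constant of $\phi$). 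Multiplying by the final $\mathbf{H}_{l-1}^{\mathsf{T}}$ in the weight-gradient formula and using the triangle inequality gives $\|\mathbf{H}_{l-1}^{\mathsf{T}}\|_F \sum_{i=l+1}^{L} \alpha_{l,i}\,\lambda^{\mathsf{H}}_i \prod_{j=l}^{i-1}\gamma_j$. Together with the local term this yields $\mathrm{GEB}^{\mathsf{L}}_l$; repeating the whole derivation with $\mathbf{H}_i^{\mathsf{H}}$ in place of $\mathbf{H}_i^{\mathsf{L}}$ gives $\mathrm{GEB}^{\mathsf{H}}_l$ with every $\lambda^{\mathsf{H}}_i$ replaced by $\lambda^{\mathsf{L}}_i$, and subtracting produces Equation~(\ref{eq:grad_error}).

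The main obstacle I anticipate is the bookkeeping inside the recursion: one must carefully identify which constants are absorbed into $\alpha_{l,i}$, $\beta_l$, and $\gamma_j$ so that the final expression matches the paper's form with weight-dependent, activation-independent coefficients. This requires treating the nonlinearity Jacobian perturbation via a Lipschitz argument (so that $\|\phi(\hat{\nabla})-\phi(\nabla)\|_F$ splits into a part linear in the activation perturbation $\lambda^{\mathsf{H}}_i$ and a part linear in the upstream gradient perturbation), and it requires using Parseval uniformly at every layer rather than only at layer $l$. Once these steps are carried out consistently, the subtraction step is routine because the LFC and HFC bounds have identical coefficient structure by construction.
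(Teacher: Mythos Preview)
Your proposal is correct and follows essentially the same route as the paper: split the weight-gradient error into a local activation-perturbation term and an upstream gradient-perturbation term, bound the upstream term by unrolling the backward recursion with a Lipschitz estimate on $\sigma'$, invoke Parseval for the DCT to convert $\|\mathbf{H}_i-\mathbf{H}_i^{\mathsf{L}}\|_F$ and $\|\mathbf{H}_i-\mathbf{H}_i^{\mathsf{H}}\|_F$ into $\lambda^{\mathsf{H}}_i$ and $\lambda^{\mathsf{L}}_i$, and subtract the two structurally identical bounds. The only notable technical detail you gloss over is that the paper does not simply appeal to Frobenius submultiplicativity for the convolution but proves a dedicated inequality $\|\mathbf{W}\ast\mathbf{H}\|_F\le (K+N-1)\|\mathbf{W}\|_F\|\mathbf{H}\|_F$ via the FFT and Parseval, which is where the dimension-dependent factors hidden in $\alpha_{l,i},\beta_l,\gamma_j$ come from.
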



Theorem~\ref{theorem:grad_error} indicates the $\mathrm{GEB}$ difference depends on $\lambda^{\mathsf{H}}_l \!-\! \lambda^{\mathsf{L}}_l$ for $1 \!\leq\! l \!\leq\! L$ during the training.
Following this direction, we estimate $\lambda^{\mathsf{L}}_l$ and $\lambda^{\mathsf{H}}_l$ via $\lambda^{\mathsf{L}}_l \!=\! || \widetilde{\mathbf{H}}_l \odot \mathbf{M} ||_F$ and $\lambda^{\mathsf{H}}_l = || \widetilde{\mathbf{H}}_l \!\odot\! ( \mathbf{1} \!-\! \mathbf{M}) ||_F$ during the training of ResNet-18 and DenseNet-121 on the CIFAR-10 dataset.
Specifically, $\mathbf{H}_l$ takes the activation maps of the BasicBlocks in ResNet-18, and Denseblocks in DenseNet-121.
The estimation of $\lambda^{\mathsf{L}}_l$ and $\lambda^{\mathsf{H}}_l$ is based on the checkpoint of ResNet-18 in epoches 20, 40, and 60, and visualized in Figures~\ref{fig:lfc_vs_hfc}~(a)-(c), respectively.
The implementation details and results of DenseNet-121 are given in Appendix~\ref{appendix:densenet121_lambda}.
It is consistently observed that $\lambda^{\mathsf{L}}_l \!>\! \lambda^{\mathsf{H}}_l$ for different instances and layers.
This leads to $\mathrm{GEB}^{\mathsf{L}}_l \!<\! \mathrm{GEB}^{\mathsf{H}}_l$ according to Theorem~\ref{theorem:grad_error}. 
Therefore, $\mathrm{HFC}\text{-}\mathrm{ACT}$ suffers from a worse distortion of backward propagation during the training, eventually leading to less accurate learned model than $\mathrm{LFC}\text{-}\mathrm{ACT}$.

With both experiments and theoretical analysis, we have proved the HFC of activation maps has less contribution to backward propagation than LFC.
However, according to Figure~\ref{fig:dct_framework}~(c), the HFC takes the majority of memory cost during the training, so it is highly redundant and compressible during the training.
Motivated by this, we propose \Algnameabbr{} to compress the activation maps into a dual precision representation: \emph{high-precision LFC} and \emph{low-precision HFC}.
On the one hand, both LFC and low-precision HFC requires much less memory to cache.
On the other hand, removing redundancy from HFC will not cause much distortion in backward propagation.
In this way, \Algnameabbr{} significantly reduces training memory without affecting model accuracy.

\begin{figure} 
\setlength{\abovecaptionskip}{-0mm}
\setlength{\belowcaptionskip}{-5mm}
    \centering
    \!\!\!\!\!\!\!\!
    \subfigure[Epoch 20.]{
    \centering
    \begin{minipage}[t]{0.33\linewidth}
    \vspace{0pt}
    	\includegraphics[width=1.0\linewidth]{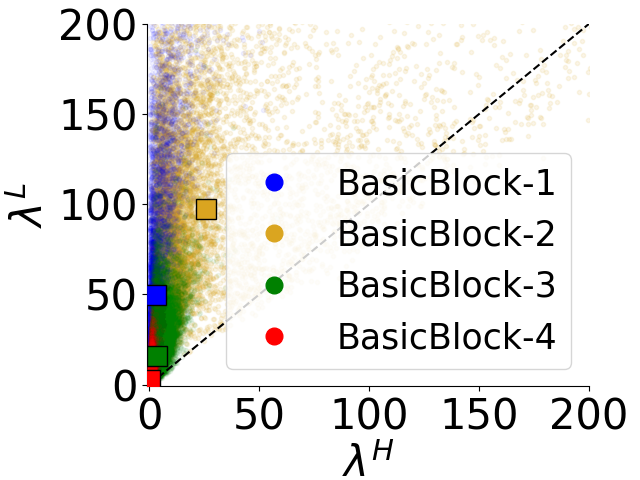}
    \end{minipage}%
    }
    \!\!\!\!\!
    \subfigure[\vspace{-10mm}Epoch 40.]{
    \centering
    \begin{minipage}[t]{0.33\linewidth}
    \vspace{0pt}
    	\includegraphics[width=1.0\linewidth]{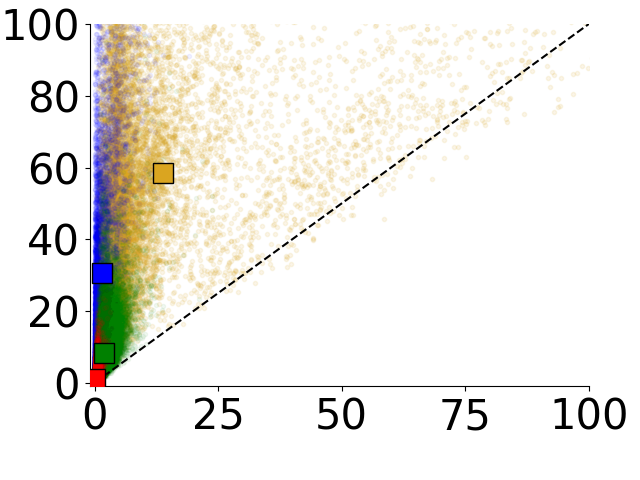}
    \end{minipage}%
    }
    \!\!\!\!\!
    \subfigure[\vspace{-10mm}Epoch 60.]{
    \centering
    \begin{minipage}[t]{0.33\linewidth}
    \vspace{0pt}
    	\includegraphics[width=1.0\linewidth]{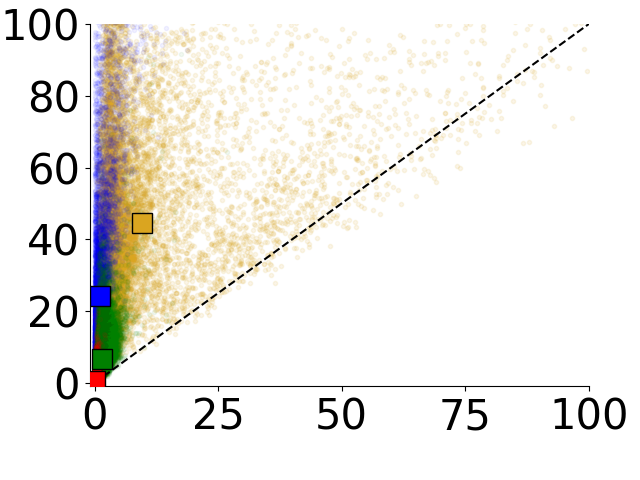}
    \end{minipage}%
    }
\caption{\small \label{fig:lfc_vs_hfc}$\lambda^{\mathsf{L}}_l \!=\! || \widetilde{\mathbf{H}}_l \!\odot\! \mathbf{M} ||_F$ versus $\lambda^{\mathsf{H}}_l \!=\! || \widetilde{\mathbf{H}}_l \!\odot\! ( \mathbf{1} \!-\! \mathbf{M}) ||_F$ in training epoches 20, 40, and 60 of ResNet-18. 
$\mathbf{H}_l$ takes the activation maps of four BasicBlocks in ResNet-18; the y- and x-axis of $\square$ indicates the expectation of $\lambda^{\mathsf{L}}_l$ and $\lambda^{\mathsf{H}}_l$, respectively.
}
\end{figure}




\section{Dual Activation Precision Training}

\begin{figure*} 
\setlength{\abovecaptionskip}{2mm}
\setlength{\belowcaptionskip}{-3mm}
    \centering
    \includegraphics[width=0.95\linewidth]{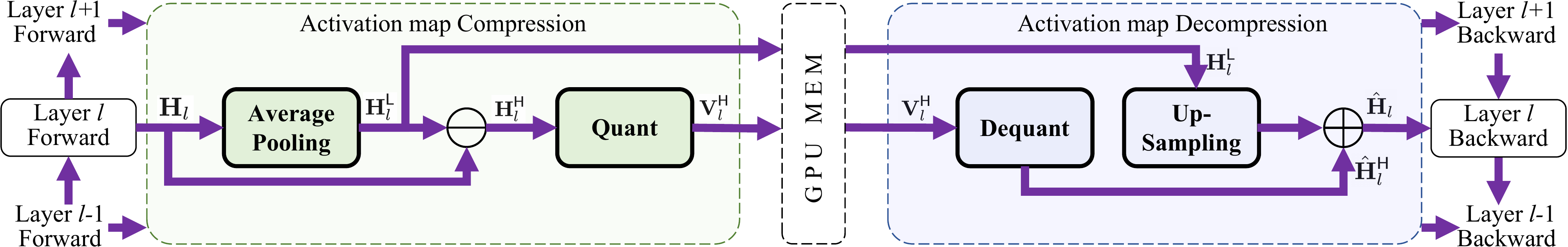}
    \caption{\label{fig:algorithm_framework} \small The proposed framework of Dual Activation Precision Training. }
\end{figure*}

We introduce the proposed \Algnameunderline{}~(\Algnameabbr{}) in this section.
The framework of \Algnameabbr{} is shown in Figure~\ref{fig:algorithm_framework}.
Specifically, after the feed-forward operation of each layer, \Algnameabbr{} estimates the LFC and compresses the HFC into a low-precision copy such that the total memory cost is significantly reduced.
Before the backward propagation of each layer, the low-precision HFC is decompressed and combined with LFC to reconstruct the activation map.
To facilitate illustration, the compression and decompression are formalized based on 2D activation maps in this section.
A similar processing of 1D/3D activation maps is discussed in Appendix~\ref{appendix:1D2D3D}.




\subsection{Activation Map Compression}

To compress the activation map $\mathbf{H}_{l}$ of layer $l$, \Algnameabbr{} estimates the LFC $\mathbf{H}_{l}^{\mathsf{L}}$ and HFC $\mathbf{H}_{l}^{\mathsf{H}}$ using DCT after the feed-forward operation.
However, the high computational complexity of DCT prevents us from directly applying it to real-time algorithms.
We thus give Theorem~\ref{theorem:low_pass} to introduce a moving average operation that can approximate the loss-pass filter. 
The proof is given in Appendix~\ref{appendix:proof_low_pass}.

\begin{theorem}
\vspace{-1mm}
\label{theorem:low_pass}
For any real-valued function $f(x)$ and its moving average $\bar{f}(x) = \frac{1}{2B} \int_{x}^{x+2B} f(t) \mathrm{d}t$, let $F(\omega)$ and $\overline{F}(\omega)$ denote the Fourier transformation of $f(x)$ and $\bar{f}(x)$, respectively.
Generally, we have $\overline{F}(\omega) = H(\omega) F(\omega)$, where $|H(\omega)| = \big| \frac{\sin \omega B}{\omega B} \big|$.
\vspace{-1mm}
\end{theorem}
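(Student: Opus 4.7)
The plan is to recognize the moving average as a convolution of $f$ with a rectangular window, then invoke the convolution theorem of Fourier analysis to read off $H(\omega)$ as the Fourier transform of that window, which is a sinc function up to a complex phase. Taking the modulus removes the phase and yields the advertised $|\sin(\omega B)/(\omega B)|$.

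Concretely, the first step is to rewrite
\[
\bar f(x) \;=\; \frac{1}{2B}\int_{x}^{x+2B} f(t)\,\mathrm{d}t \;=\; \int_{-\infty}^{\infty} r(x-t)\, f(t)\,\mathrm{d}t \;=\; (r * f)(x),
\]
where $r(u) = \frac{1}{2B}\,\mathbf{1}_{[-2B,0]}(u)$ is the rectangular kernel (the shift to $[-2B,0]$ comes from the change of variable $u = x - t$ in the integral, noting that $t\in[x,x+2B]$ corresponds to $u\in[-2B,0]$). At this point the result is just a consequence of the standard convolution theorem $\overline{F}(\omega) = R(\omega)\, F(\omega)$, with $H(\omega)=R(\omega)$.

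Next I would explicitly compute the Fourier transform of the window:
\[
R(\omega) \;=\; \frac{1}{2B}\int_{-2B}^{0} e^{-i\omega u}\,\mathrm{d}u \;=\; \frac{1 - e^{\,2i\omega B}}{-2iB\omega} \;=\; \frac{e^{\,i\omega B}\bigl(e^{\,i\omega B} - e^{-i\omega B}\bigr)}{2iB\omega} \;=\; e^{\,i\omega B}\,\frac{\sin(\omega B)}{\omega B}.
\]
Thus $H(\omega) = e^{\,i\omega B}\,\sin(\omega B)/(\omega B)$, and since $|e^{\,i\omega B}| = 1$ we obtain $|H(\omega)| = |\sin(\omega B)/(\omega B)|$, as claimed.

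There is no serious obstacle here; the calculation is a textbook exercise. The only thing to be careful about is the direction of the convolution and the corresponding orientation of the window: the asymmetric integration limits $[x, x+2B]$ (rather than a symmetric window around $x$) produce the phase factor $e^{\,i\omega B}$, which is precisely why the theorem states the equality only for $|H(\omega)|$ and not for $H(\omega)$ itself. Everything else reduces to evaluating one elementary complex integral and recognising the resulting expression as $2i\sin(\omega B)$ after factoring out $e^{\,i\omega B}$.
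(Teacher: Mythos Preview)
Your argument is correct and complete: writing $\bar f = r * f$ with the rectangular window $r = \tfrac{1}{2B}\mathbf{1}_{[-2B,0]}$, invoking the convolution theorem, and evaluating the one elementary integral for $R(\omega)$ all check out, including the handling of the phase factor.

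The paper, however, does \emph{not} appeal to the convolution theorem. Instead it discretizes the moving average as a Riemann sum $\bar f(x) = \lim_{N\to\infty}\tfrac{1}{N}\sum_{n=0}^{N-1} f(x + 2Bn/N)$, pushes the Fourier integral through the sum, uses the shift property $\int f(x+a)e^{-i\omega x}\,\mathrm{d}x = e^{i\omega a}F(\omega)$ term by term, sums the resulting finite geometric series $\sum_n e^{i\omega 2Bn/N}$, and takes the limit $N\to\infty$ to recover $H(\omega) = (1-e^{2i\omega B})/(-2i\omega B)$; the magnitude is then computed via trigonometric identities rather than by factoring out $e^{i\omega B}$. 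Your route is the textbook one and is shorter and more transparent; the paper's route is essentially a from-scratch derivation of the convolution theorem in this particular instance, trading generality for self-containment. Both land on the identical closed form for $H(\omega)$, so the conclusions coincide.
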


\begin{remark}
\label{remark:low_pass}
The frequency response of $H(\omega)$ depends on its envelope function $\frac{1}{|\omega B|}$.
Note that $\frac{1}{|\omega B|}$ decreases with $|\omega|$ such that $\frac{1}{|\omega B|} \to 0$ as $\omega \to \infty$. 
Hence, $H(\omega)$ is an approximate loss-pass filter.
\vspace{-2mm}
\end{remark}

According to Remark~\ref{remark:low_pass}, we approximate the LFC $\mathbf{H}_{l}^{\mathsf{L}}$ with the moving average of $\mathbf{H}_{l}$.
Notably, the average pooling operator provides efficient moving average, so
\Algnameabbr{} adopts average pooling to estimate the LFC as $\mathbf{H}_{l}^{\mathsf{L}} \!=\! \mathrm{Average Pooling}(\mathbf{H}_{l})$.
The value of block-size and moving stride is a unified hyper-parameter $B$, which controls the memory of $\mathbf{H}_{l}^{\mathsf{L}}$\footnote{\footnotesize For the case $N \!<\! B$, the pooling block-size and stride will be $N$ such that the shape of $\mathbf{H}_{l}^{\mathsf{L}}$ is $\mathrm{Minibatch} \!\times\! \mathrm{Channel} \!\times\!\! 1 \!\!\times\!\! 1$.}.
Moreover, $\mathbf{H}_{l}^{\mathsf{L}}$ is cached in the format of $\texttt{bfloat16}$ for saving the memory.
In our experiments, we found $B \!=\! 8$ can provide representative LFC for backward propagation, where the memory cost of $\mathbf{H}_{l}^{\mathsf{L}}$ is only $0.8\%$ of $\mathbf{H}_{l}$.


To estimate the HFC, \Algnameabbr{} calculates the residual given by $\mathbf{H}_{l}^{\mathsf{H}} \!=\! \mathbf{H}_{l} \!-\! \mathrm{UpSampling}(\mathbf{H}_{l}^{\mathsf{L}})$, where the up-sampling operation enlarges $\mathbf{H}_{l}^{\mathsf{L}}$ to shape $\mathrm{Minibatch} \!\times\! \mathrm{Channel} \!\times\! N \!\times\! N$ via nearest interpolation. 
Then, \Algnameabbr{} compress $\mathbf{H}_{l}^{\mathsf{H}}$ into low-precision because it plays a less important role during the backward propagation but consumes most of the memory.
Specifically, \Algnameabbr{} adopts $Q$-bit per-channel quantization\footnote{\footnotesize A fixed bit-width is adopted for the quantization of all layers to maximize the efficiency of data processing.}\footnote{\footnotesize Per-channel quantization is more efficient and light than per-group quantization in state-of-the-art work.} for the compression, where the bit-width $Q$ controls the precision and memory cost of HFC after the compression.
Let $\mathbf{V}_{l}^{\mathsf{H}}$ denote a $Q$-bit integer matrix, as the low-precision representation of $\mathbf{H}_{l}^{\mathsf{H}}$.
The procedure of compressing $\mathbf{H}_{l}^{\mathsf{H}}$ into $\mathbf{V}_{l}^{\mathsf{H}}$ is given by
\begin{equation}
\setlength{\abovedisplayskip}{1mm}
\setlength{\belowdisplayskip}{1mm}
\label{eq:quantization}
\mathbf{V}_{l}^{\mathsf{H}} = \mathrm{Quant} (\mathbf{H}_{l}^{\mathsf{H}}) = \big\lfloor \Delta_{l}^{-1} (\mathbf{H}_{l}^{\mathsf{H}} - \delta_{l}) \big\rceil,
\end{equation}
where $\delta_{l}$ denotes the minimum element in $\mathbf{H}^{\mathsf{H}}_{l}$; $\Delta_{l} = (h_{\max}-\delta_{l})/(2^Q-1)$ denotes the quantization step; $h_{\max}$ denotes the maximum element in $\mathbf{H}^{\mathsf{H}}_{l}$; $\lfloor \bullet \rceil$ denotes the \emph{stochastic rounding}\footnote{\footnotesize $\lfloor x \rceil$ takes the value of $\lfloor x \rfloor$ with a probability of $x\!-\!\lfloor x \rfloor$ and takes $\lceil x \rceil$ with a probability of $\lceil x \rceil \!-\! x$.}\footnote{\footnotesize The stochastic rounding enables the pipeline of quantization and dequantization to be unbiased, i.e. $\mathbb{E}[\mathbf{V}_{l}^{\mathsf{H}}] \!=\! \mathbf{H}_{l}^{\mathsf{H}}$.}~\cite{gupta2015deep};
and $\delta_{l}$ and $\Delta_{l}$ are cached in the formate of $\texttt{bfloat16}$ for saving memory.
In this way, the memory cost of $(\mathbf{V}_{l}^{\mathsf{H}}, \delta_{l}, h_{\max})$ is $(N^2 Q / 8 + 4)$ bytes per channel, in contrast with that of $\mathbf{H}_{l}$ being $4 N^2$ bytes per channel.
In our experiments, we found $Q \!=\! 2$ can provide enough representation for backward propagation, where the memory cost of $\mathbf{V}_{l}^{\mathsf{H}}$ is only $8.3\%$ of $\mathbf{H}_{l}$.

After the compression, as the representation of $\mathbf{H}_l$, the tuple of $( \mathbf{H}_{l}^{\mathsf{L}}, \mathbf{V}_{l}^{\mathsf{H}}, \Delta_{l}, \delta_{l} )$ is cached to the memory for reconstructing the activation maps during the backward pass.




\subsection{Activation Map Decompression}  

During the backward pass, \Algnameabbr{} adopts the cached tuples of $\{( \mathbf{H}_{l}^{\mathsf{L}}, \mathbf{V}_{l}^{\mathsf{H}}, \Delta_{l}, \delta_{l} ) \!\mid\! 0 \leq l \leq L\!-\!1 \}$ to reconstruct the activation map layer-by-layer.
Specifically, for each layer $l$, \Algnameabbr{} dequantizes the HFC via $\hat{\mathbf{H}}_{l}^{\mathsf{H}} \!=\! \Delta_{l} \mathbf{V}_{l}^{\mathsf{H}} \!+\! \delta_{l}$, which is the inverse process of Equation~(\ref{eq:quantization}). 
Then, the activation map is reconstructed via
\begin{equation}
\setlength{\abovedisplayskip}{1mm}
\setlength{\belowdisplayskip}{1mm}
\hat{\mathbf{H}}_{l} = \mathrm{UpSampling}(\mathbf{H}_{l}^{\mathsf{L}}) + \hat{\mathbf{H}}_{l}^{\mathsf{H}},
\end{equation}
where $\mathrm{UpSampling}(\cdot)$ enlarges $\mathbf{H}_{l}^{\mathsf{L}}$ to the shape of $\mathrm{Minibatch} \!\!\times\!\! \mathrm{Channel} \!\!\times\!\! N \!\!\times\!\! N$ via nearest interpolation. 
After the decompression, \Algnameabbr{} frees the caching of $( \mathbf{H}_{l}^{\mathsf{L}}, \mathbf{V}_{l}^{\mathsf{H}}, \Delta_{l}, \delta_{l} )$, and takes $\hat{\mathbf{H}}_{l}$ into $[\hat{\nabla}_{\mathbf{H}_{l-1}}, \hat{\nabla}_{\mathbf{W}_{l}}] = \mathrm{backward}( \hat{\nabla}_{\mathbf{H}_{l}}, \hat{\mathbf{H}}_{l-1}, \mathbf{W}_{l} )$ to estimate the gradient for backward propagation.


\begin{algorithm} 
\caption{\small Mini-batch updating of \Algnameabbr{}}
\label{alg:division}
\small
\textbf{Input:} Mini-batch samples $\mathbf{x}$ and labels $\mathrm{y}$. \\
\textbf{Output:} Weight and bias $\{ \mathbf{W}_{l}, \mathbf{B}_{l} | 1 \!\leq\! l \!\leq\! L \}$.\\
\vspace{-4mm}
\begin{algorithmic}[1]


\FOR{\emph{layer} $l := 1 \text{ to } L$}  

\STATE $\mathbf{H}_{l} = f(\mathbf{W}_l \mathbf{H}_{l-1} + \mathbf{B}_l)$ \textcolor{gray}{// $\mathbf{H}_0 = \mathbf{x}$} 

\STATE $\mathbf{H}_{l-1}^{\mathsf{L}} = \mathrm{Average Pooling}(\mathbf{H}_{l-1})$ 

\STATE $\mathbf{H}_{l-1}^{\mathsf{H}} = \mathbf{H}_{l-1} - \mathrm{UpSampling}(\mathbf{H}_{l-1}^{\mathsf{L}}$) 

\STATE $\mathbf{V}_{l-1}^{\mathsf{H}}, \Delta_{l-1}, \delta_{l-1} = \mathrm{Quant}( \mathbf{H}_{l-1}^{\mathsf{H}} )$ 

\STATE Cache $(\mathbf{H}_{l-1}^{\mathsf{L}}, \mathbf{V}_{l-1}^{\mathsf{H}}, \Delta_{l-1}, \delta_{l-1})$ 

\ENDFOR


\STATE Estimate the loss value and gradient $\hat{\nabla}_{\mathbf{H}_{L}}$.

\FOR{\emph{layer} $l := L \text{ to } 1$}

\STATE $\hat{\mathbf{H}}_{l-1}^{\mathsf{H}} = \mathrm{Dequant}(\mathbf{V}_{l-1}^{\mathsf{H}}, \Delta_{l-1}, \delta_{l-1})$ 

\STATE $\hat{\mathbf{H}}_{l-1} = \mathrm{UpSampling}(\mathbf{H}_{l-1}^{\mathsf{L}}) + \hat{\mathbf{H}}_{l-1}^{\mathsf{H}}$ 

\STATE Estimate $[\hat{\nabla}_{\mathbf{H}_{l-1}}, \hat{\nabla}_{\mathbf{W}_{l}}]$ and update $\mathbf{W}_{l}$.

\STATE Free $(\mathbf{H}_{l-1}^{\mathsf{L}}, \mathbf{V}_{l-1}^{\mathsf{H}}, \Delta_{l-1}, \delta_{l-1})$.




\ENDFOR

\end{algorithmic}
\end{algorithm}

\begin{figure*} 
\setlength{\abovecaptionskip}{0mm}
\setlength{\belowcaptionskip}{-5mm}
    \centering
    \subfigure[\Algnameabbr{} vs BLPA]{
    \centering
    \begin{minipage}[t]{0.19\linewidth}
    	\includegraphics[width=1.0\linewidth]{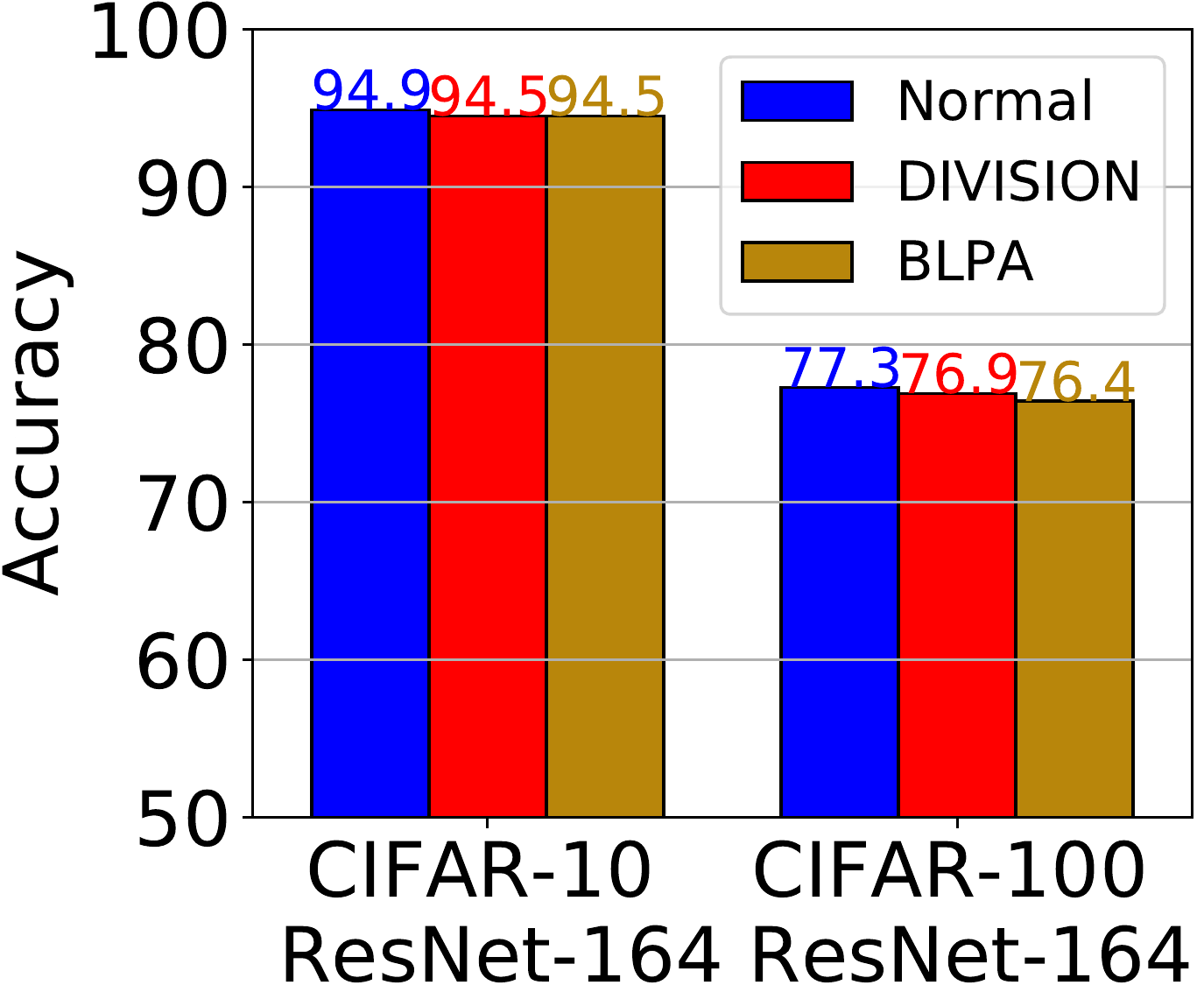}
    \end{minipage}%
    \quad
    }
    \subfigure[\Algnameabbr{} vs AC-GC]{
    \centering
    \begin{minipage}[t]{0.26\linewidth}
    	\includegraphics[width=1.0\linewidth]{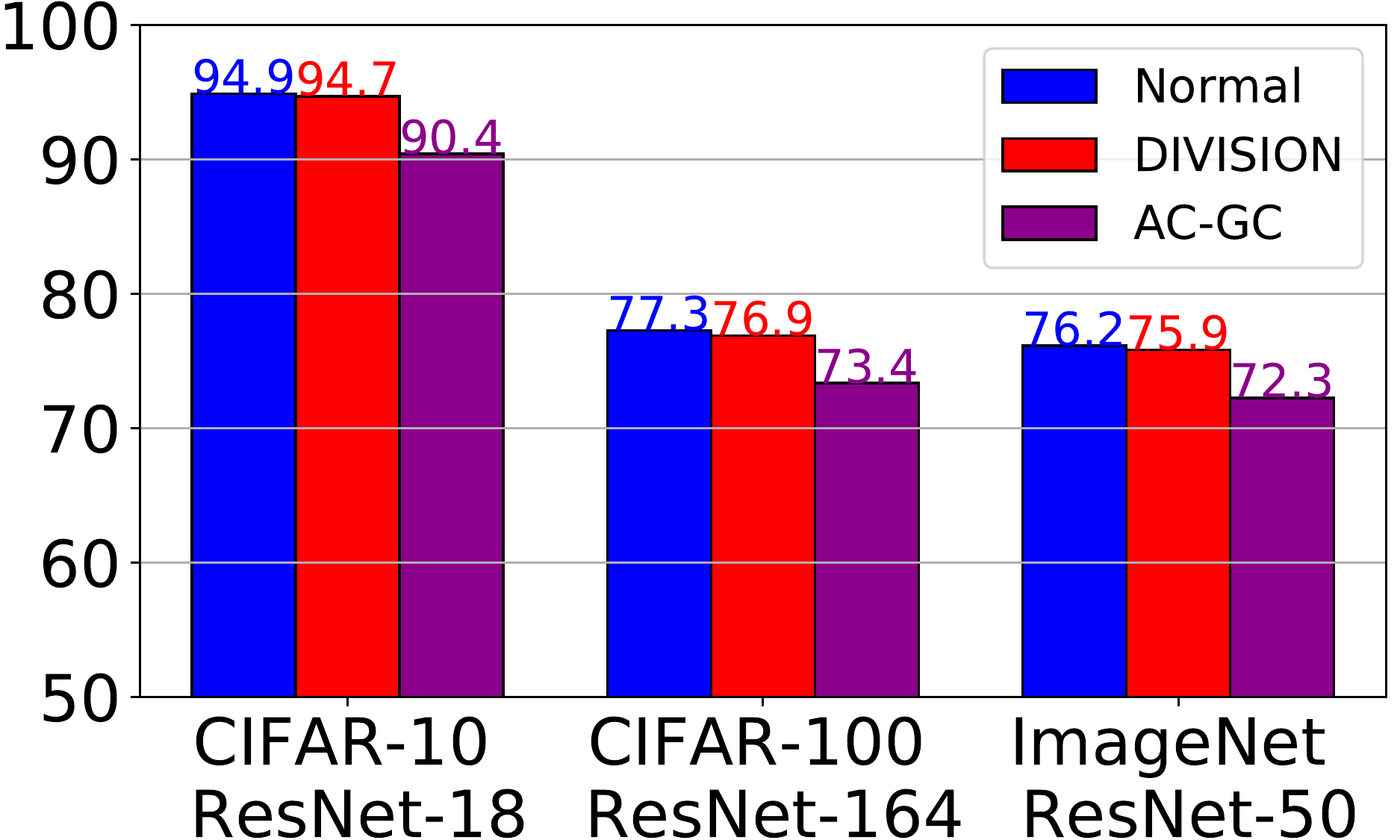}
    \end{minipage}%
    \quad
    }
    \subfigure[\Algnameabbr{} vs ActNN]{
    \centering
    \begin{minipage}[t]{0.35\linewidth}
    	\includegraphics[width=1.0\linewidth]{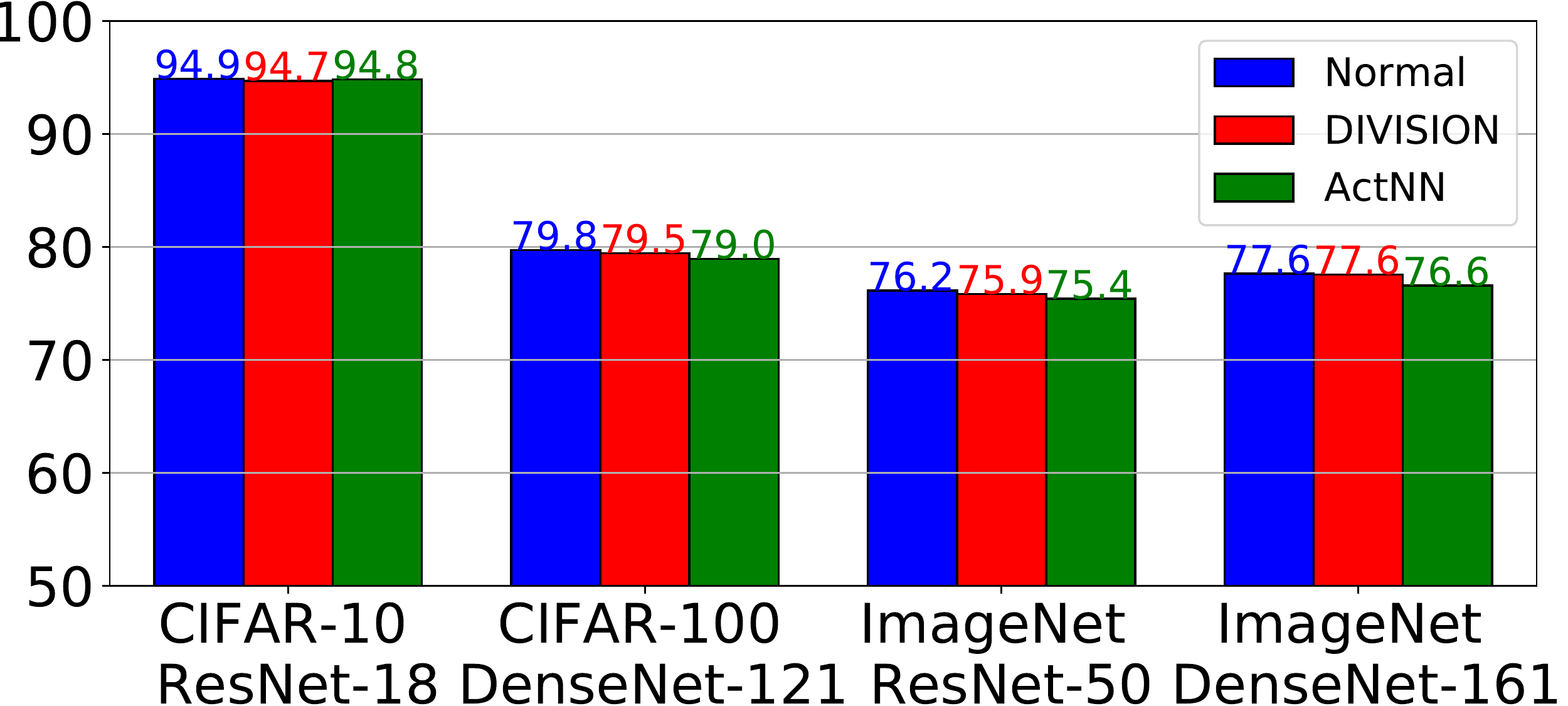}
    \end{minipage}%
    }
\caption{\label{fig:acc_div} \small Top-1 accuracy (\%) $\uparrow$ of Normal training, \Algnameabbr{}, BLPA~(a), AC-GC~(b), and ActNN~(c).
}
\end{figure*}

\vspace{-2mm}
\subsection{The Algorithm of \Algnameabbr{}}


Algorithm~\ref{alg:division} presents a mini-batch updating of \Algnameabbr{}, which includes a forward pass and backward pass.
During the forward pass of each layer, \Algnameabbr{} first forwards the exact activation map to the next layer~(line 2); then, estimates the LFC and HFC~(line 3-4); after this, achieves the low precision copy of HFC~(lines 5); finally caches the representation to the memory~(line 6).
During the backward pass of each layer, 
\Algnameabbr{} first decompresses the HFC~(line 10); reconstructs the activation map~(line 11); estimates the gradients and updates the weights of layer $l$~(line 12); finally frees the caching of $( \mathbf{H}_{l\!-\!1}^{\mathsf{L}}, \! \mathbf{V}_{l\!-\!1}^{\mathsf{H}}, \! \Delta_{l\!-\!1}, \! \delta_{l\!-\!1} )$~(line 13).
For each mini-batch updating, the memory usage reaches the maximum value after the forward pass~(caching the representation of activation maps layer-by-layer), and reduces to the minimum value after the backward pass~(freeing the cache layer-by-layer).
Existing work~\cite{chen2021actnn} estimates the memory cost of activation maps by
\begin{equation}
\setlength{\abovedisplayskip}{1mm}
\setlength{\belowdisplayskip}{1mm}
\text{MEM Cost} = \text{MEM Util}_{\text{after forward}} - \text{MEM Util}_{\text{after backward}},
\nonumber
\end{equation}
where existing deep learning tools provide APIs\footnote{\tiny {\ttfamily torch.cuda.memory\_allocated} returns the memory occupied by tensors in bytes.} to estimate the memory utilization.

The theoretical compression rate $R$ of \Algnameabbr{} is given in Appendix~\ref{appendix:compression_rate}, where general cases of convolutional neural networks and multi-layer perception are considered for the estimation.
For the model architectures in our experiments, we have $R_{\text{ResNet-50}}, R_{\text{WRN-50-2}} \geq 10.35$.

\section{Evaluation of \Algnameabbr{}}

We conduct experiments to evaluate \Algnameabbr{} by answering the following research questions.
\textbf{RQ1:} How does \Algnameabbr{} perform compared with state-of-the-art baseline methods in terms of the model accuracy, memory cost, and training throughput?
\textbf{RQ2:} Does the strategy of dual-precision compression contribute to \Algnameabbr{}?
\textbf{RQ3:} What is the effect of hyper-parameters on \Algnameabbr{}?

\subsection{Experiment Setup}

The experiment settings including the datasets, baseline methods and DNN architectures are given in Appendix~\ref{appendix:exp_set}.
The implementation details and configuration of computational infrastructure are given in Appendix~\ref{appendix:config_division} and \ref{appendix:infrastructure}, respectively.
Experiments on MLPs are given in Appendix~\ref{appendix:exp_mlp}.









\begin{table*} 
\setlength{\abovecaptionskip}{0mm}
\setlength{\belowcaptionskip}{0mm}
\caption{\label{tab:exp_mem} (a) \small Memory cost~$\!\downarrow$ and compression rate~$\!\uparrow$. 
\emph{Total Mem} refers to total memory cost of weights, optimizer, data and activation maps. 
\emph{Act Mem} refers to memory cost of activation maps. 
(b) Performance of \Algnameabbr{}, Checkpoint, and Mesa on the Swin-Transformer.
(c) Model accuracy with fixed bit-width quantization of \Algnameabbr{} w/o LFC, \Algnameabbr{} w/o HFC, and \Algnameabbr{}.} 
\scriptsize
\begin{minipage}[h]{0.68\linewidth}
\subtable[]{
\begin{tabular}{c|c|cccc|cccc|}
    \toprule[1pt]
         \multicolumn{2}{c|}{Architecture} &  \multicolumn{4}{c|}{ResNet-50} & \multicolumn{4}{c|}{WRN-50-2} \\
    \hline
        \multicolumn{2}{c|}{Batch-size} & 64 & 128 & 256 & 512 & 64 & 128 & 256 & 512 \\
    \hline
        \multirow{6}{*}{\makecell[c]{\!\!\!\!Total \!\!\!\! \\ \!\!\!\!Mem\!\!\!\! \\ (GB)}}
        & Normal & 5.46 & 10.62 & 20.92 & \!\!\!\!\!\!\!\!\!\!\emph{OOM}\!\!\!\!\!\!\!\!\!\! & 7.52 & 14.23 & 27.68 & \!\!\!\!\!\!\!\!\!\!\emph{OOM}\!\!\!\!\!\!\!\!\!\! \\
        \cline{2-10}

        & \!\!\!\!Checkpoint\!\!\!\! & \!\!\!\!2.57~(2.1\!$\times$\!)\!\!\!\! & \!\!\!\!4.84~(2.2\!$\times$\!)\!\!\!\! & \!\!\!\!9.39~(2.2\!$\times$\!)\!\!\!\! & \!\!\!\!\!\!\!\!18.49\!\!\!\!\!\!\!\! & \!\!\!\!3.05~(2.5\!$\times$\!)\!\!\!\! & \!\!\!\!5.33~(2.7\!$\times$\!)\!\!\!\! & \!\!\!\!9.88~(2.8\!$\times$\!)\!\!\!\! & \!\!\!\!\!\!\!\!\!\!\emph{OOM}\!\!\!\!\!\!\!\!\!\! \\
        \cline{2-10}

        & BLPA & \!\!\!\!1.15~(4.7\!$\times$\!)\!\!\!\! & \!\!\!\!2.01~(5.3\!$\times$\!)\!\!\!\! & \!\!\!\!3.72~(5.6\!$\times$\!)\!\!\!\! & \!\!\!\!\!\!\!\!7.14\!\!\!\!\!\!\!\! & \!\!\!\!1.87~(4.0\!$\times$\!)\!\!\!\! & \!\!\!\!2.96~(4.8\!$\times$\!)\!\!\!\! & \!\!\!\!5.15~(5.4\!$\times$\!)\!\!\!\! & \!\!\!\!\!\!\!\!9.51\!\!\!\!\!\!\!\! \\
        \cline{2-10}
        
        & AC-GC & \!\!\!\!1.80~(3.0\!$\times$\!)\!\!\!\! & \!\!\!\!3.31~(3.2\!$\times$\!)\!\!\!\! & \!\!\!\!6.31~(3.3\!$\times$\!)\!\!\!\! & \!\!\!\!\!\!\!\!12.33\!\!\!\!\!\!\!\! & \!\!\!\!2.72~(2.8\!$\times$\!)\!\!\!\! & \!\!\!\!4.66~(3.1\!$\times$\!)\!\!\!\! & \!\!\!\!8.53~(3.2\!$\times$\!)\!\!\!\! & \!\!\!\!\!\!\!\!16.27\!\!\!\!\!\!\!\! \\
        \cline{2-10}

        & ActNN & \textbf{\!\!\!\!0.81~(6.7\!$\times$\!)\!\!\!\!} & \textbf{\!\!\!\!1.34~(7.9\!$\times$\!)\!\!\!\!} & \textbf{\!\!\!\!2.39~(8.8\!$\times$\!)\!\!\!\!} & \textbf{\!\!\!\!\!\!\!\!4.47\!\!\!\!\!\!\!\!} & \textbf{\!\!\!\!1.44~(5.2\!$\times$\!)\!\!\!\!} & \textbf{\!\!\!\!2.09~(6.8\!$\times$\!)\!\!\!\!} & \textbf{\!\!\!\!3.41~(8.1\!$\times$\!)\!\!\!\!} & \!\!\!\!\!\!\!\!\textbf{6.03}\!\!\!\!\!\!\!\! \\
        \cline{2-10}

        & \!\!\!\!\Algnameabbr{}\!\!\!\! & \textbf{\!\!\!\!0.82~(6.7\!$\times$\!)\!\!\!\!} & \textbf{\!\!\!\!1.35~(7.9\!$\times$\!)\!\!\!\!} & \textbf{\!\!\!\!2.41~(8.7\!$\times$\!)\!\!\!\!} & \textbf{\!\!\!\!\!\!\!\!4.52\!\!\!\!\!\!\!\!} & \textbf{\!\!\!\!1.45~(5.2\!$\times$\!)\!\!\!\!} & \textbf{\!\!\!\!2.12~(6.7\!$\times$\!)\!\!\!\!} & \textbf{\!\!\!\!3.44~(8.0\!$\times$\!)\!\!\!\!} & \!\!\!\!\!\!\!\!\textbf{6.08}\!\!\!\!\!\!\!\! \\

        \hline

        \multirow{6}{*}{\makecell[c]{\!\!\!\!Act. \!\!\!\! \\ \!\!\!\! Mem \!\!\!\! \\ (GB)}}
        & Normal & 5.14 & 10.25 & 20.48 & \!\!\!\!\!\!\!\!\!\!\emph{OOM}\!\!\!\!\!\!\!\!\!\! & 6.70 & 13.38 & 26.75 & \!\!\!\!\!\!\!\!\!\!\emph{OOM}\!\!\!\!\!\!\!\!\!\! \\
        \cline{2-10}

        & \!\!\!\!Checkpoint\!\!\!\! & \!\!\!\!2.24~(2.3\!$\times$\!)\!\!\!\! & \!\!\!\!4.48~(2.3\!$\times$\!)\!\!\!\! & \!\!\!\!8.95~(2.3\!$\times$\!)\!\!\!\! & \!\!\!\!\!\!\!\!17.90\!\!\!\!\!\!\!\! & \!\!\!\!2.24~(3.0\!$\times$\!)\!\!\!\! & \!\!\!\!4.48~(3.0\!$\times$\!)\!\!\!\! & \!\!\!\!8.95~(3.0\!$\times$\!)\!\!\!\! & \!\!\!\!\!\!\!\!\!\!\emph{OOM}\!\!\!\!\!\!\!\!\!\! \\
        \cline{2-10}

        & BLPA & \!\!\!\!0.82~(6.3\!$\times$\!)\!\!\!\! & \!\!\!\!1.64~(6.2\!$\times$\!)\!\!\!\! & \!\!\!\!3.28~(6.2\!$\times$\!)\!\!\!\! & \!\!\!\!\!\!\!\!6.56\!\!\!\!\!\!\!\! & \!\!\!\!1.06~(6.3\!$\times$\!)\!\!\!\! & \!\!\!\!2.11~(6.3\!$\times$\!)\!\!\!\! & \!\!\!\!4.22~(6.3\!$\times$\!)\!\!\!\! & \!\!\!\!\!\!\!\!8.44\!\!\!\!\!\!\!\! \\
        \cline{2-10}
        
        & AC-GC & \!\!\!\!1.47~(3.5\!$\times$\!)\!\!\!\! & \!\!\!\!2.94~(3.5\!$\times$\!)\!\!\!\! & \!\!\!\!5.88~(3.5\!$\times$\!)\!\!\!\! & \!\!\!\!\!\!\!\!11.75\!\!\!\!\!\!\!\! & \!\!\!\!1.91~(3.5\!$\times$\!)\!\!\!\! & \!\!\!\!3.81~(3.5\!$\times$\!)\!\!\!\! & \!\!\!\!7.61~(3.5\!$\times$\!)\!\!\!\! & \!\!\!\!\!\!\!\!15.20\!\!\!\!\!\!\!\! \\
        \cline{2-10}

        & ActNN & \textbf{\!\!\!\!0.49~(10.5\!$\times$\!)\!\!\!\!} & \textbf{\!\!\!\!0.97~(10.6\!$\times$\!)\!\!\!\!} & \textbf{\!\!\!\!1.94~(10.6\!$\times$\!)\!\!\!\!} & \textbf{\!\!\!\!\!\!\!\!3.89\!\!\!\!\!\!\!\!} & \textbf{\!\!\!\!0.62~(10.8\!$\times$\!)\!\!\!\!} & \textbf{\!\!\!\!1.25~(10.7\!$\times$\!)\!\!\!\!} & \textbf{\!\!\!\!2.49~(10.7\!$\times$\!)\!\!\!\!} & \!\!\!\!\!\!\!\!\textbf{4.97}\!\!\!\!\!\!\!\! \\
        \cline{2-10}

        & \!\!\!\!\Algnameabbr{}\!\!\!\! & \textbf{\!\!\!\!0.49~(10.5\!$\times$\!)\!\!\!\!} & \textbf{\!\!\!\!0.99~(10.4\!$\times$\!)\!\!\!\!} & \textbf{\!\!\!\!1.97~(10.4\!$\times$\!)\!\!\!\!} & \textbf{\!\!\!\!\!\!\!\!3.94\!\!\!\!\!\!\!\!} & \textbf{\!\!\!\!0.64~(10.5\!$\times$\!)\!\!\!\!} & \textbf{\!\!\!\!1.27~(10.5\!$\times$\!)\!\!\!\!} & \textbf{\!\!\!\!2.52~(10.6\!$\times$\!)\!\!\!\!} & \!\!\!\!\!\!\!\!\textbf{5.02}\!\!\!\!\!\!\!\! \\
    \bottomrule[1pt]
    \end{tabular}}
    \end{minipage}
    \begin{minipage}[h]{0.25\linewidth}
    \begin{minipage}[h]{0.25\linewidth}
    \centering
    \subtable[]{
    \begin{tabular}{|l|c|c|c}
    \toprule[1pt]
         & \!\!\!\! Acc (\%) \!\!\!\! & \!\!\!\! Mem. (GB)\!\!\!\! & \!\!\! Throughput \!\!\! \\
    \hline
       \!\!\!\! Normal \!\!\!\! & 81.2 & \!\!\!\!14.43~(1$\times$)\!\!\!\! & 233.53 ips \\ 
       \!\!\!\! Mesa \!\!\!\! & \textbf{81.3} & \!\!\!\!6.56~(2.20$\times$)\!\!\!\! & 136.70 ips \\
       \!\!\!\! Checkpoint \!\!\!\! & 81.2 & \!\!\!\!6.72~(2.15$\times$)\!\!\!\! & \textbf{201.73} ips \\
       \!\!\!\! \Algnameabbr{} \!\!\!\! & 81.0 & \!\!\!\!\textbf{5.07~(2.85$\times$)}\!\!\!\! & 175.48 ips \\
    \bottomrule[1pt]
    \end{tabular}}
    \end{minipage}
    $\\$
    \begin{minipage}[h]{0.25\linewidth}
    \centering
    \subtable[]{
    \begin{tabular}{|l|c|c}
    \toprule[1pt]
         & \quad CIFAR-100 \quad & \quad ImageNet \quad \\
    \hline
       Fixed-4bit & 75.07 & 76.05 \\
       Fixed-2bit & 1 & 0.1 \\
       w/o LFC & 60.54 & 7.79 \\
       w/o HFC & 69.56 & 28.51 \\
       \Algnameabbr{} & 76.3 & 75.9 \\
    \bottomrule[1pt]
    \end{tabular}}
    \end{minipage}
    \end{minipage}
    
\vspace{-6mm}
\end{table*}


\vspace{-2mm}
\subsection{Evaluation by Model Accuracy~(RQ1)}
\vspace{-2mm}
\label{sec:exp_acc}

In this section, we evaluate the training methods in terms of model accuracy on the CIFAR-10, CIFAR-100 and ImageNet datasets. 
Specifically, \Algnameabbr{} is compared with BLPA~\cite{chakrabarti2019backprop}, AC-GC~\cite{evans2021ac} and ActNN~\cite{chen2021actnn} in Figure~\ref{fig:acc_div}~(a)-(c), respectively, where different model architectures are considered.
We do not consider Checkpoint and SWAP in this section because they 
can reduce the training memory without degradation of model accuracy.
Overall, we have the following observations:

\vspace{-2mm}
\begin{itemize}[leftmargin=10pt, topsep=0pt]
\setlength{\parskip}{1mm}
\setlength{\parsep}{0mm}
\setlength{\itemsep}{0mm}

\item \textbf{\Algnameabbr{} vs Baseline Methods:} Compared with normal training, \Algnameabbr{} achieves almost the same accuracy, which indicates a loss-less compression of training.
In contrast, BLPA and ActNN show a slight degradation; and AC-GC has lower accuracy than other methods.

\item \textbf{Flexibility of \Algnameabbr:} \Algnameabbr{} consistently achieves competitive model accuracy in the training of different architectures on different datasets. This indicates \Algnameabbr{} is a flexible framework that can be applied to different architectures and datasets.


\item \textbf{Compressibility of HFC:} Note that \Algnameabbr{} adopts a significantly high compression rate 12$\times$ for the HFC during the training, and achieves nearly loss-less accuracy.
This result indicates the HFC of activation map is highly redundant and compressible during the training.


\end{itemize}

\subsection{Evaluation by Memory Cost~(RQ1)}
\label{sec:exp_mem}

We evaluate different training methods in terms of the training memory cost on the ImageNet datasest (the configuration of our computational infrastructure is given in Appendix~\ref{appendix:infrastructure}).
Table~\ref{tab:exp_mem}~(a) indicates the training memory cost and practical compression rate of \Algnameabbr{} and baseline methods.
Overall, we have the following observations:
\begin{itemize}[leftmargin=10pt, topsep=0pt]
\setlength{\parskip}{1mm}
\setlength{\parsep}{0mm}
\setlength{\itemsep}{0mm} 

\item \textbf{\Algnameabbr{} vs Checkpoint \& BLPA:}
Checkpoint shows less effective compression because it caches some key activation maps to reconstruct other activation maps during the training. 
BLPA has lower compression rate than \Algnameabbr{} because it relies on at least 4-bit compression.






\item \textbf{\Algnameabbr{} vs AC-GC:} 
AC-GC searches the bit-width from an initial maximum value, and finalizes the bit-width as 7.01.
The compression rate of activation maps in the last epoch is $3.5\times$, which is lower than \Algnameabbr{}.

\item \textbf{\Algnameabbr{} vs ActNN:} \Algnameabbr{} has approximately the same memory cost as ActNN.
Beyond the storage of 2-bit activation maps, \Algnameabbr{} has overhead for caching the LFC; and ActNN spends almost equal overhead for storing the parameters of per-group quantization.

\item \textbf{Compression Rate:} The activation map compression rate of \Algnameabbr{} is consistent with the theoretical results ($R_{\text{ResNet-50}}, R_{\text{WRN-50-2}} \geq 10.35$, see Appendix~\ref{appendix:compression_rate}), which is not influenced by the mini-batch size.
Moreover, the overall compression rate grows with the mini-batch size.

\item \textbf{Activation Maps:} For the normal training, the storage of activation maps takes the majority of memory cost ($>\!$ 90\%, growing with the mini-batch size), which is consistent with our discussion in Section~\ref{sec:introduction}.

    
    
\end{itemize}

\begin{figure} 
\setlength{\abovecaptionskip}{-0mm}
\setlength{\belowcaptionskip}{-5mm}
    \centering
    \subfigure[ImageNet.]{
    \centering
    \begin{minipage}[t]{0.48\linewidth}
    	\includegraphics[width=1.0\linewidth]{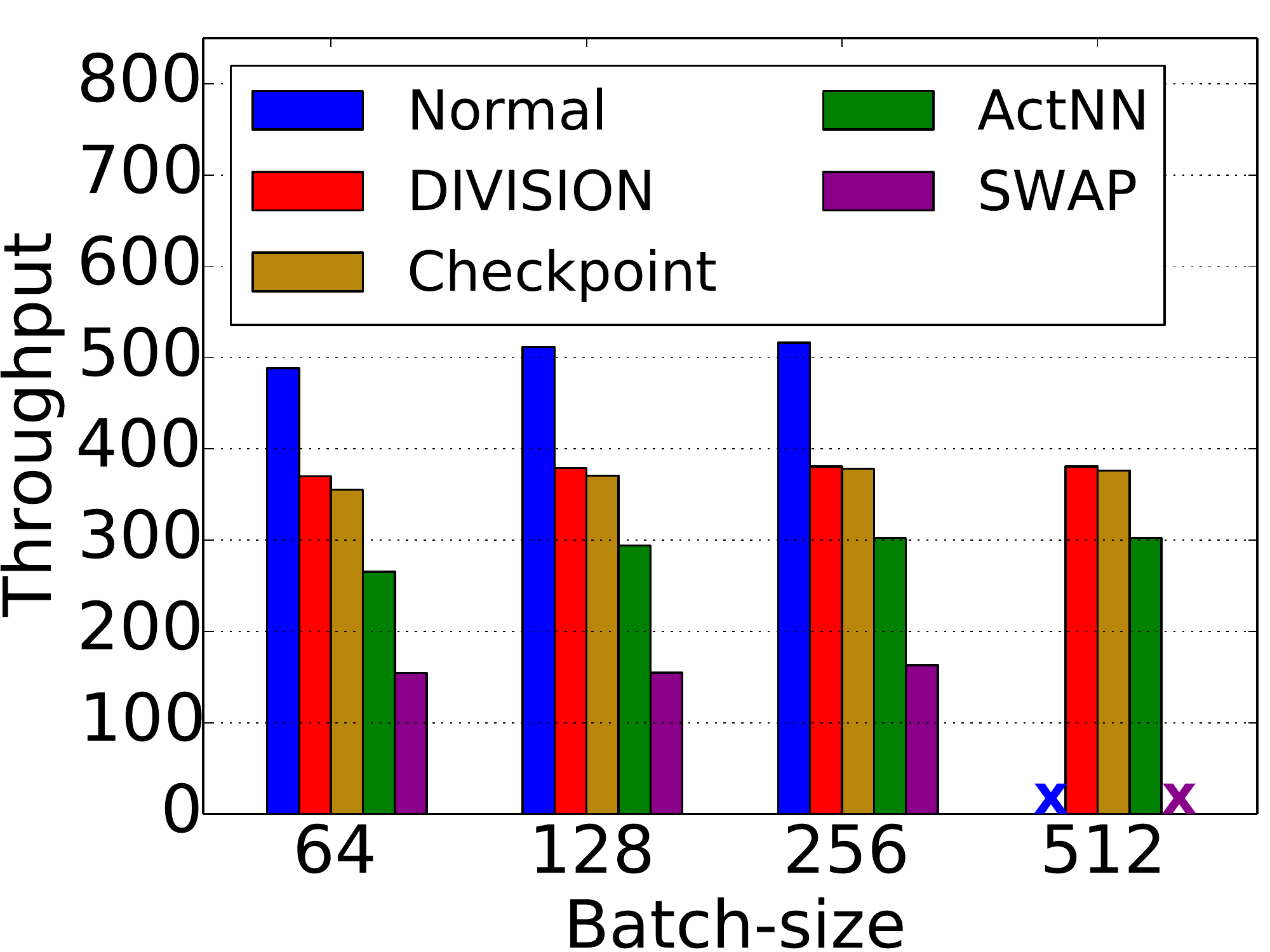}
    \end{minipage}%
    }
    \subfigure[WRN-50-2.]{
    \centering
    \begin{minipage}[t]{0.48\linewidth}
    	\includegraphics[width=1.0\linewidth]{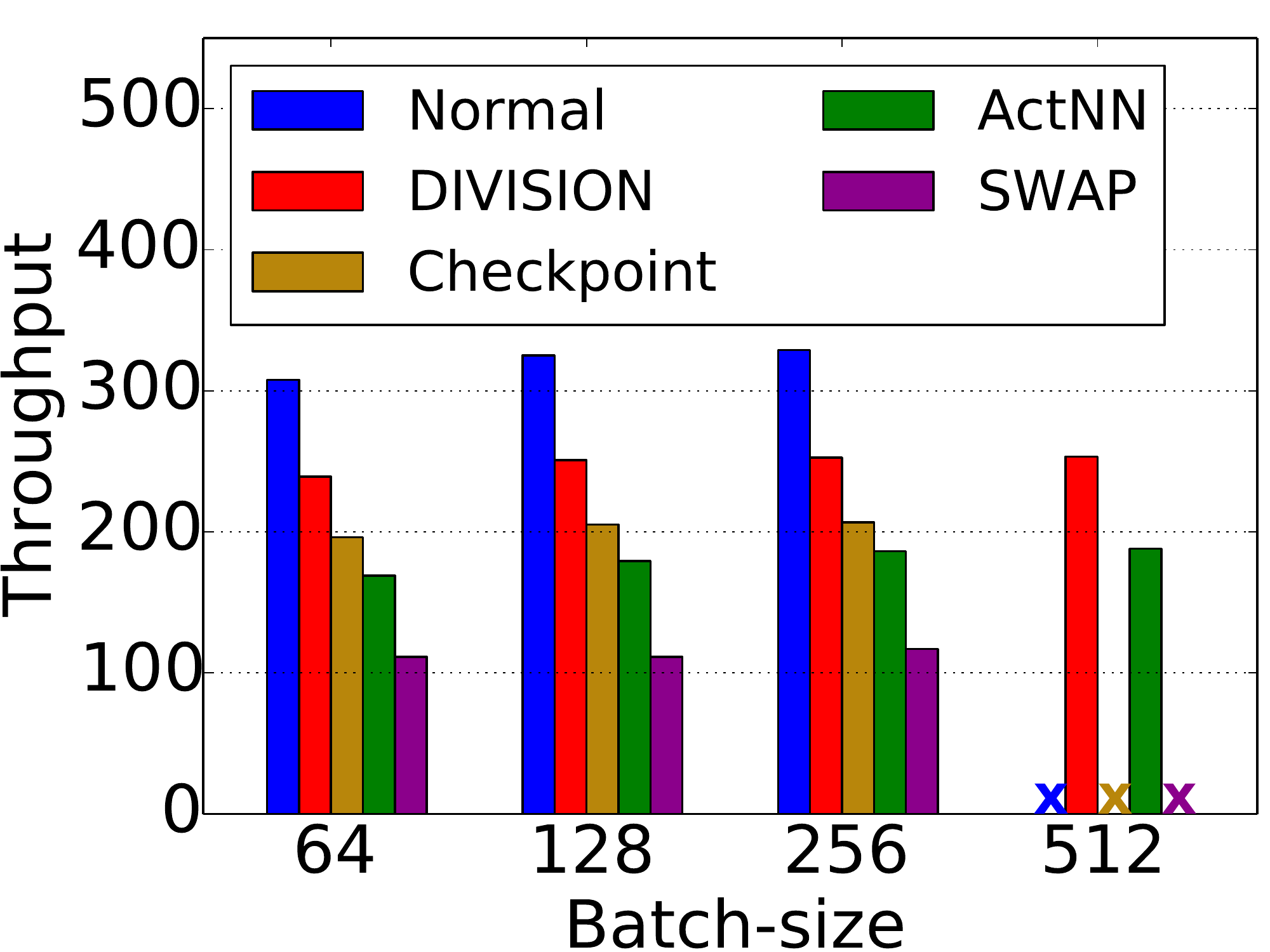}
    \end{minipage}%
    }
\caption{\label{fig:throughput} \small Training throughput $\uparrow$ of (a) Resnet-50 and (b) WRN-50-2 on the ImageNet dataset, where {\tiny \XSolidBold} \ {\small indicates out of memory. }
}
\end{figure}
\subsection{Evaluation by Training Throughput~(RQ1)}
\label{sec:exp_throughput}

We now evaluate the training methods in terms of the training throughput on the ImageNet dataset.
Generally, the throughput indicates the speed of a method via counting the number of data samples processed per second.
Formally, it is given by $\frac{\text{Mini-batch Size}}{T_\text{batch}}$, where $T_\text{batch}$ denotes the time of a single mini-batch updating.
According to the training throughput of \Algnameabbr{} and baseline methods in Figures~\ref{fig:throughput}~(a) and~(b), we have the following observations:


\begin{itemize}[leftmargin=10pt, topsep=-2pt]
\setlength{\parskip}{1mm}
\setlength{\parsep}{0mm}
\setlength{\itemsep}{0mm}


\item \textbf{Reasons for Time overhead:} Compared with normal training, the time overhead of \Algnameabbr{} comes from the estimation of LFC and quantization of HFC.
In ActNN, it mainly comes from the the dynamic bit-width allocation and activation map quantization. In Checkpoint, it comes from replaying the forward process of inter-media layers. In SWAP, the overhead mainly derives from the communication cost between the CPUs and GPUs.


\item \textbf{\Algnameabbr{} vs ActNN:} \Algnameabbr{} shows 1.3$\times$ acceleration compared to ActNN, as a result of simplified data compression.
To be concrete, \Algnameabbr{} adopts a simple average-pooling to extract the LFC, and a fixed bit-width per-channel quantization to compress the HFC.
In contrast, ActNN relies on a searching of optimal bit-width to match different samples, and adopts the searched bit-width for per-group quantization.
ActNN has a more complex processing during the training, which leads to its lower throughput than \Algnameabbr{}.

\end{itemize}

To summarize, according to Figures~\ref{fig:acc_div},~\ref{fig:throughput} and Table~\ref{tab:exp_mem}, state-of-the-art methods AC-GC, Checkpoint and ActNN shows a performance degradation in the aspect of model accuracy, compression rate, and training throughput, respectively.
According to a comprehensive comparison in terms of the three evaluation metrics in Figure~\ref{fig:radar}, \textbf{\Algnameabbr{} shows better comprehensive performance than these methods}.

\subsection{Performance of \Algnameabbr{} on Vision Transformer}

To further evaluate \Algnameabbr{} on vision transformers, we conduct experiments of Swin Transformer~\cite{liu2021swin} on the ImageNet dataset in comparison with Mesa~\cite{pan2021mesa} and Checkpoint~\cite{shoeybi2019megatron}. 
The model accuracy, memory cost~(with batch-size 128) and training throughput are given in Table~\ref{tab:exp_mem}~(b). 
It is observed that \Algnameabbr{} can effectively compress the training of the transformer, with almost the same model accuracy with normal training. 
Although \Algnameabbr{} shows slightly lower accuracy and throughput than Mesa and Checkpoint, respectively, it can significantly save more memory~(nearly 1.5GB than Mesa, and 1.7GB than Checkpoint). 
Moreover, Mesa is explicitly designed for vision transformers, and Checkpoint relies on manually selection of checkpointed layers.
\Algnameabbr{} is flexible for general vision models, including MLPs, CNNs, and vision transformers.

\subsection{Performance of \Algnameabbr{} on Depthwise and Pointwise Convolutional Layers}
\label{sec:division_depthwise}

To evaluate \Algnameabbr{} on the depthwise convlution and pointwise convulution layers, we conduct experiments of MobileNet-V2 on the CIFAR-10 and CIFAR-100 datasets.
The model accuracy of normal training and \Algnameabbr{} are given Table~\ref{tab:mobilenet_exp}~(a). 
It is observed that \Algnameabbr{} achieve nearly the same accuracy compared with normal training. 
This indicates the effectiveness of \Algnameabbr{} deployed to the depthwise convlution and pointwise convulution layers. 

\subsection{Effect of Dual Precision Strategy~(RQ2)}
\label{sec:ablation_exp}

To study the effect of our proposed dual precision strategy, \Algnameabbr{} is compared with three training methods:
\textbf{\Algnameabbr{} w/o HFC}: Merely providing the LFC for backward propagation.
\textbf{\Algnameabbr{} w/o LFC}: Merely providing the low-precision HFC for backward propagation.
\textbf{Fixed Quant}: Compressing the activation maps using a fixed bit-width quantization. 
The experiments are conducted on the CIFAR-100 and ImageNet dataset using the hyper-parameters given in Appendix~\ref{appendix:detail_ablation}.
The model accuracy are given in Table~\ref{tab:exp_mem}~(c).
Overall, we have the following insights:
\begin{itemize}[leftmargin=10pt, topsep=0pt]
\setlength{\parskip}{1mm}
\setlength{\parsep}{0mm}
\setlength{\itemsep}{0mm}

    \item \textbf{LFC \& Low Precision HFC:} Removing either HFC or LFC from \Algnameabbr{}, the training converges to far lower levels of accuracy. 
    This indicates both the LFC and low precision HFC of activation maps are necessary for leading the training to converge to an optimal solution. 

    \item \textbf{Benifits of Dual Precision:} 
    For the training with a fixed bit-width quantization, the bit-width should be at least 4.
    The noise caused by a fixed 2-bit quantization can terribly disturb the backward propagation, leading to a failure of convergence, as shown in Table~\ref{tab:exp_mem}~(c).
    \Algnameabbr{} solves this problem by combining a high-precision LFC and a fixed 2-bit quantization for the compression, and achieves nearly loss-less model accuracy.
    
    %


\end{itemize}





\begin{table}
\setlength{\abovecaptionskip}{-0mm}
\setlength{\belowcaptionskip}{-0mm}
\scriptsize
\caption{\label{tab:mobilenet_exp} \small (a) Accuracy of MoblieNet-V2 on the CIFAR-10 and CIFAR-100 datasets.
(b) Performance of \Algnameabbr{} under different hyperparameter settings. $n \times$ refers to the compression rate.
}
\begin{minipage}[h]{0.42\linewidth}
\flushleft
\subtable[]{
\begin{tabular}{l|c|c|}
\toprule[1pt]
\!\!\!\!\!\! Method \!\!\!\!\!\! & \makecell[c]{\!\!\!\!\!\! MN-V2 \!\!\!\!\!\! \\ \!\!\!\!\!\! CIFAR-10 \!\!\!\!\!\!} & \makecell[c]{\!\!\!\!\!\! MN-V2 \!\!\!\!\!\! \\ \!\!\!\!\!\! CIFAR-100 \!\!\!\!\!\!} \\ 
\hline
\!\!\!\!\!\! Normal \!\!\!\!\!\! & 91.9 & 71.0 \\
\!\!\!\!\!\! \Algnameabbr{} \!\!\!\!\!\! & 91.8 & 70.6 \\
\bottomrule[1pt]
\end{tabular}}
\end{minipage}
\begin{minipage}[h]{0.2\linewidth}
\flushright
\subtable[]{
\begin{tabular}{|l|c|c|c|c|c}
\toprule[1pt]
 & \makecell[c]{\!\!\!\!\!\! $B\!=\!18$ \!\!\!\!\!\! \\ \!\!\!\!\!\! $Q\!=\!2$ \!\!\!\!\!\!} & \makecell[c]{\!\!\!\!\!\! $B\!=\!12$ \!\!\!\!\!\! \\ \!\!\!\!\!\! $Q\!=\!2$ \!\!\!\!\!\!} & \makecell[c]{\!\!\!\!\!\! $B\!=\!8$ \!\!\!\!\!\! \\ \!\!\!\!\!\! $Q\!=\!2$ \!\!\!\!\!\!} & \makecell[c]{\!\!\!\!\!\! $B\!=\!8$ \!\!\!\!\!\! \\ \!\!\!\!\!\! $Q\!=\!4$ \!\!\!\!\!\!} & \makecell[c]{\!\!\!\!\!\! $B\!=\!8$ \!\!\!\!\!\! \\ \!\!\!\!\!\! $Q\!=\!8$ \!\!\!\!\!\!} \\
\hline
\!\!\!\!\!\! Acc(\%) \!\!\!\!\!\! & \!\!\!\!\!\!78.70\!\!\!\!\!\! & \!\!\!\!\!\!92.78\!\!\!\!\!\! & \!\!\!\!\!\!94.59\!\!\!\!\!\! & \!\!\!\!\!\!94.84\!\!\!\!\!\! & \!\!\!\!\!\!94.91\!\!\!\!\!\! \\
\!\!\!\!\!\! $n \times$ \!\!\!\!\!\! & \!\!\!\!\!\!10.18\!\!\!\!\!\! & \!\!\!\!\!\!10.18\!\!\!\!\!\! & \!\!\!\!\!\!9.74\!\!\!\!\!\! & \!\!\!\!\!\!5.74\!\!\!\!\!\! & \!\!\!\!\!\!3.25\!\!\!\!\!\! \\
\bottomrule[1pt]
\end{tabular}}
\end{minipage}
\vspace{-8mm}
\end{table}

\subsection{Hyper-parameter Tuning for \Algnameabbr{}~(RQ3)}
\label{sec:hyper_tuning_exp}

We study the effect of hyper-parameters $B$~(block-size) and $Q$~(bit-width) on the accuracy and compression rate.
Specifically, we adopt \Algnameabbr{} to train ResNet-18 on the CIFAR-10 dataset with $B \!\in\! \{ 8, 12, 18 \}$ and $Q \!\in\! \{ 2, 4, 8 \}$.
The model accuracy and compression rate are given in Table~\ref{tab:mobilenet_exp}~(b).
We have the following insights:
\begin{itemize}[leftmargin=10pt, topsep=-1mm]
\setlength{\parskip}{1mm}
\setlength{\parsep}{0mm}
\setlength{\itemsep}{0mm}

    \item \textbf{Effect of $Q$:} \Algnameabbr{} shows a stable accuracy~(nearly $94.8\%$) as the precision of HFC reduces~($Q$ reduces from $8$ to $2$).
    This indicates it only requires approximate values of HFC during backward propagation.
    
    \item \textbf{Effect of $B$:} As $B$ grows from $8$ to $18$, caching lower precision LFC during the forward pass leads to significant degradation of accuracy.
    This is because \Algnameabbr{} relies on a high-precision LFC to reconstruct the activation maps during the backward propagation. 

    \item \textbf{Optimal Setting:} \Algnameabbr{} shows optimal accuracy-compression trade-off taking $B \!=\! 8$ and $Q \!=\! 2$, where the degradation of accuracy is less than $0.35\%$. 
    



\end{itemize}

\subsection{Re-utilization of Hyper-parameter Settings Across Different Model Architectures and Datasests~(RQ3)}
\label{appendix:hyper_reuse}

We conduct follow-up experiments to study whether the hyper-parameter setting of \Algnameabbr{} has a consistent effect on different model architectures and datasets.
Specifically, the performance of \Algnameabbr{} deployed to ResNet-18 and MobileNet-V2 on the CIFAR-10 and CIFAR-100 datasets is shown in Table~\ref{tab:hp_exp}, where the hyper-parameters are selected from $B \!\in\! \{8, 18\}$ and $Q \!\in\! \{2, 8\}$. 
It is observed $B$ and $Q$ have a consistent impact on different model architectures and datasets: the accuracy slightly grows with $Q$ and considerably reduces with $B$.
This indicates we can reuse the hyper-parameters of \Algnameabbr{} on CIFAR-10 to CIFAR-100 with the same model architecture, or reuse the setting across ResNet-18 and MobileNet-V2 on the same dataset.
Moreover, $B\!=\!8$ and $Q\!=\!2$ can be a general and effective setting for most model architectures and datasets.

\begin{table}
\vspace{0mm}
\setlength{\abovecaptionskip}{-0mm}
\setlength{\belowcaptionskip}{0mm}
\caption{\label{tab:hp_exp} \small Performance of \Algnameabbr{} deployed to ResNet-18 and MoblieNet-V2 with different hyper-parameter settings.}
\vspace{2mm}
\scriptsize
\centering
\begin{tabular}{l|c|p{0.5cm}|c|p{0.5cm}|c|p{0.5cm}}
\toprule[1pt]
\makecell[c]{Hyperparameter setting \\ of \Algnameabbr{}} & \multicolumn{2}{c|}{\makecell[c]{$B=18$ \\ $Q=2$}} & \multicolumn{2}{c|}{\makecell[c]{$B=8$ \\ $Q=2$}} & \multicolumn{2}{c}{\makecell[c]{$B=8$ \\ $Q=8$}} \\
\hline
Evaluation metric & \!\!\!\!Acc(\%)\!\!\!\! & $n\times$ & \!\!\!\!Acc(\%)\!\!\!\! & $n\times$ & \!\!\!\!Acc(\%)\!\!\!\! & $n\times$ \\
\hline
RN-18 CIFAR-10 & $78.7$ & \!\!10.18$\times$ & $94.6$ & 9.74$\times$ & $94.9$ & 3.25$\times$ \\
RN-18 CIFAR-100 & $73.2$ & \!\!10.18$\times$ & $76.9$ & 9.33$\times$ & $77.0$ & 3.25$\times$ \\
MN-V2 CIFAR-10 & $10.0$ & 3.26$\times$ & $91.8$ & 3.20$\times$ & $91.0$ & 2.00$\times$ \\
MN-V2 CIFAR-100 & $62.4$ & 3.26$\times$ & $70.6$ & 3.20$\times$ & $71.6$ & 2.00$\times$ \\
\bottomrule[1pt]
\end{tabular}
\vspace{-8mm}
\end{table}


\section{Conclusion}

In this work, we propose a simple framework of activation compressed training.
Our framework is motivated by an instructive observation: \emph{DNN backward propagation mainly utilizes the LFC of the activation maps, while the majority of memory is for the storage of HFC during the training.}
This indicates the HFC of the activation maps is highly redundant and compressible during the training.
Following this direction, our proposed \Algnameabbr{} compresses the activation maps into dual precision representations: high-precision LFC and low-precision HFC, corresponding to their contributions to the backward propagation.
This dual precision compression can significantly reduce the memory cost of DNN training without loss of model accuracy.

Different from the existing work of ACT, \Algnameabbr{} is a simple and transparent framework, where the simplicity enables efficient compression and decompression; and transparency allows us to understand the compressible~(HFC) and non-compressible factors~(LFC) during DNN training. 
To this end, we hope our work could provide some inspiration for future work of DNN training.



\bibliography{citation}
\bibliographystyle{icml2023} 


\clearpage
\appendix
\setcounter{theorem}{0}

\section*{Appendix}

\section{1D/3D-DCT for 1D/3D Activation maps}
\label{appendix:1D/3D-DCT}

For 1D activation maps $\mathbf{H} \!\in\! \mathbb{R}^N$, the frequency-domain feature $\widetilde{\mathbf{H}} \!=\! \mathrm{DCT}(\mathbf{H})$ has the same shape of $N \!\times\! 1$; and each of the element $\tilde{h}_{i}$ is given by
\begin{align}
\setlength{\abovedisplayskip}{0mm}
\setlength{\belowdisplayskip}{0mm}
\small
\!\!\!\!\tilde{h}_{i} \!=\! \sum^{N-1}_{m=0} \cos \! \bigg[ \frac{\pi}{N} \bigg( m \!+\! \frac{1}{2}  \bigg) i \bigg] \!,
\nonumber
\end{align}
where $h_{i}$, $0\!\leq\! i \!\leq\! N\!-\!1$, are elements in the original matrix $\mathbf{H}$.
For 3D activation maps $\mathbf{H} \!\in\! \mathbb{R}^{N \!\times\! N \!\times\! N}$, the frequency-domain feature $\widetilde{\mathbf{H}} \!=\! \mathrm{DCT}(\mathbf{H})$ has a shape of $N \!\times\! N \!\times\! N$; and each of the element $\tilde{h}_{i,j,k}$ is given by
\begin{align}
\setlength{\abovedisplayskip}{0mm}
\setlength{\belowdisplayskip}{0mm}
\small
&\!\!\!\!\tilde{h}_{i,j,k} \!=\! \sum^{N-1}_{m=0} \sum^{N-1}_{n=0} \sum^{N-1}_{t=0} h_{m,n,t} \cos \! \bigg[ \frac{\pi}{N} \bigg( m \!+\! \frac{1}{2}  \bigg) i \bigg] 
\nonumber
\\
&\quad\quad\quad\quad \cos \! \bigg[ \frac{\pi}{N} \bigg( n \!+\! \frac{1}{2}  \bigg) j \bigg] \! \cos \! \bigg[ \frac{\pi}{N} \bigg( t \!+\! \frac{1}{2}  \bigg) k \bigg] \!,
\nonumber
\end{align}
where $h_{m,n,t}$, $0\!\leq\! m,n,t \!\leq\! N\!-\!1$, are elements in the original matrix $\mathbf{H}$.
During the training of DNNs, the frequency-domain feature is estimated via operating 1D/3D-DCT for the vector/tensor in each channel according to the shape of the activation map.
For 1D/3D activation maps, the LFC and HFC can be extracted given by  
{\setlength\abovedisplayskip{1mm}
\setlength\belowdisplayskip{1mm}
\begin{align}
    \mathbf{H}^{\mathsf{L}} &= \mathrm{iDCT}(\widetilde{\mathbf{H}} \odot \mathbf{M})
    \\[-4bp]
    \mathbf{H}^{\mathsf{H}} &= \mathrm{iDCT}(\widetilde{\mathbf{H}} \odot (\mathbf{1} - \mathbf{M})),
\end{align}}
\!\!\!\! where $\mathrm{iDCT}(\cdot)$ denotes the inverse DCT.
For 1D activation maps, $\mathbf{1}$ is $N$-dimensional vector; $\mathbf{M} = [m_{i} | 1 \leq i \leq N]$ denotes an $N$-dimensional low-pass mask satisfying $m_{i} \!=\! 1$ for $1 \leq i \leq W$ and $m_{i} \!=\! 0$ for other elements.
For 3D activation maps, $\mathbf{1}$ is $N \!\times\! N \!\times\! N$ tensor; $\mathbf{M} = [m_{i,j,k} | 1 \leq i,j,k \leq N]$ denotes an $N \!\times\! N \!\times\! N$ low-pass mask satisfying $m_{i,j,k} \!=\! 1$ for $1 \leq i,j,k \leq W$ and $m_{i,j,k} \!=\! 0$ for other elements.
$\mathbf{1} \!-\! \mathbf{M}$ indicates the high-pass mask.

\section{Implementation Details of Section~\ref{sec:pre_exp}}
\label{appendix:hyper_pre_exp}

We give the details of the experiment in Section~\ref{sec:pre_exp}.
Without loss of generality, the experiment is conducted on the CIFAR-10 dataset using ResNet-18, DenseNet-121 and ShuffleNet-V2.
During the backward propagation of normal training, the gradient of each layer $l$ is estimated by \begin{equation}
\setlength{\abovedisplayskip}{1mm}
\setlength{\belowdisplayskip}{1mm}
    [ \hat{\nabla}_{\mathbf{H}_{l-1}}, \hat{\nabla}_{\mathbf{W}_{l}} ]  = \mathrm{backward}( \hat{\nabla}_{\mathbf{H}_{l}}, \mathbf{H}_{l}, \mathbf{W}_{l} )
\end{equation}
For $\mathrm{LFC}$-$\mathrm{ACT}$, the gradient is estimated by 
\begin{equation}
\label{eq:grad_est_lfc}
\setlength{\abovedisplayskip}{1mm}
\setlength{\belowdisplayskip}{1mm}
    [ \hat{\nabla}_{\mathbf{H}_{l-1}}, \hat{\nabla}_{\mathbf{W}_{l}} ]  = \mathrm{backward}( \hat{\nabla}_{\mathbf{H}_{l}}, \mathbf{H}_{l}^{\mathsf{L}}, \mathbf{W}_{l} ),
\end{equation}
where $\mathrm{HFC}$-$\mathrm{ACT}$ denotes the HFC of $\mathbf{H}_{l}$; for $\mathrm{HFC}$-$\mathrm{ACT}$, the gradient is estimated by 
\begin{equation}
\label{eq:grad_est_hfc}
\setlength{\abovedisplayskip}{1mm}
\setlength{\belowdisplayskip}{1mm}
    [ \hat{\nabla}_{\mathbf{H}_{l-1}}, \hat{\nabla}_{\mathbf{W}_{l}} ]  = \mathrm{backward}( \hat{\nabla}_{\mathbf{H}_{l}}, \mathbf{H}_{l}^{\mathsf{H}}, \mathbf{W}_{l} ),
\end{equation}
where $\mathbf{H}_{l}^{\mathsf{H}}$ denotes the HFC of $\mathbf{H}_{l}$.
Note that Equations~(\ref{eq:grad_est_lfc}) and~(\ref{eq:grad_est_hfc}) causes the distortion of backward propagation in $\mathrm{LFC}$-$\mathrm{ACT}$ and $\mathrm{HFC}$-$\mathrm{ACT}$, respectively.
The objective of this experiment is to investigate whether this distortion of backward propagation may be powerful enough to lead training to a non-optimal solution. The hyper-parameter setting of the training is given in Table~\ref{tab:hyper_pre_exp}.


\begin{table}
\scriptsize
    \centering
    \caption{Hyper-parameters of the experiments in Section~\ref{sec:pre_exp}.}
    \label{tab:hyper_pre_exp}
    \begin{tabular}{l|c|c|c}
\toprule
         Architecture & ResNet-18 & DenseNet-121 & ShuffleNet-V2 \\
    \hline
        Epoch & 100 & 100 & 100 \\ 
        Batch-size & 256 & 256 & 256 \\
        Initial LR & 0.1 & 0.1 & 0.1 \\
        LR scheduler & Step LR & Step LR & Step LR \\
        Weight-decay & 0.0005 & 0.0005 & 0.0005 \\
        Optimizer & SGD & SGD & SGD \\
        SGD Momentum & 0.9 & 0.9 & 0.9 \\ 
        Ratio of LFC~($W/N$)& 0.3 & 0.3 & 0.5 \\
\bottomrule
    \end{tabular}
\end{table}

\section{$\lambda^L_l$ versus $\lambda^H_l$ in DenseNet-121}
\label{appendix:densenet121_lambda}

The training of DNNs follows Table~\ref{tab:hyper_pre_exp}.
For the estimation of LFC and HFC of activation maps, we take $W/N=0.5$ for the low-pass $\mathbf{M}$ mask and achieves the results in Figure~\ref{fig:lfc_vs_hfc}.
To further study whether Theorem~\ref{theorem:grad_error} holds for less $W$, we take $W/N=0.1$ and $W/N=0.2$, and achieves the values of $\lambda_l^L$ and $\lambda_l^H$ in the training (epoches 20, 40, and 60) of DenseNet-121 in Table~\ref{tab:lambda_exp_wN}, where $\lambda_l^L$ and $\lambda_l^H$ are estimated based on the input activation maps of the four DenseBlocks. It is consistently observed that $\lambda_l^L > \lambda_l^H$ for $W/N=0.1$ and $W/N=0.2$ in different training epochs. This indicates our proposed Theorem~\ref{theorem:grad_error} holds without loss of generality.

\begin{table*}
\scriptsize
\caption{\label{tab:lambda_exp_wN} Ratio of $\lambda_l^L$ to $\lambda_l^H$ on DensNet-121. (a) $W/N=0.1$ and (b) $W/N=0.2$.}
\begin{minipage}[h]{0.5\linewidth}
\centering
\subtable[$W/N=0.1$]{
\begin{tabular}{l|c|c|c|c|}
\toprule
    Epoch & 20 & 40 & 60 & \makecell[c]{Average \\ $\lambda_l^L/\lambda_l^H$} \\
\hline
    DenseBlock-1 & \makecell[c]{$\!\!\!\!\lambda_l^L\!=\!298.281$ \\ $\!\!\!\!\lambda_l^H\!=\!218.605$} & \makecell[c]{$\!\!\!\!\lambda_l^L\!=\!184.913$ \\ $\!\!\!\!\lambda_l^H\!=\!138.069$} & \makecell[c]{$\!\!\!\!\lambda_l^L\!=\!142.668$ \\ $\!\!\!\!\lambda_l^H\!=\!104.755$} & $1.36$ \\
    \hline
    DenseBlock-2 & \makecell[c]{$\!\!\!\!\lambda_l^L\!=\!3.245$ \\ $ \!\!\!\!\lambda_l^H\!=\!1.713$}       & \makecell[c]{$\!\!\!\!\lambda_l^L\!=\!1.284$ \\ $ \!\!\!\!\lambda_l^H\!=\!0.689$}     &  \makecell[c]{$\!\!\!\!\lambda_l^L\!=\!0.687$ \\ $ \!\!\!\!\lambda_l^H\!=\!0.372$} & $1.87$ \\
    \hline
    DenseBlock-3 & \makecell[c]{$\!\!\!\!\lambda_l^L\!=\!0.387$ \\ $ \!\!\!\!\lambda_l^H\!=\!0.260$}       & \makecell[c]{$\!\!\!\!\lambda_l^L\!=\!0.160$ \\ $ \!\!\!\!\lambda_l^H\!=\!0.086$}     &  \makecell[c]{$\!\!\!\!\lambda_l^L\!=\!0.084$ \\ $ \!\!\!\!\lambda_l^H\!=\!0.048$} & $1.70$ \\
    \hline
    DenseBlock-4 & \makecell[c]{$\!\!\!\!\lambda_l^L\!=\!0.062$ \\ $ \!\!\!\!\lambda_l^H\!=\!0.009$}       & \makecell[c]{$\!\!\!\!\lambda_l^L\!=\!0.011$ \\ $ \!\!\!\!\lambda_l^H\!=\!0.001$}     &  \makecell[c]{$\!\!\!\!\lambda_l^L\!=\!0.006$ \\ $ \!\!\!\!\lambda_l^H\!=\!0.001$} & $7.56$ \\
    \hline
    \makecell[c]{Average \\ $\lambda_l^L/\lambda_l^H$} & $2.95$ & $3.12$ & $3.30$ & $3.12$ \\
\bottomrule
\end{tabular}}
\end{minipage}
\begin{minipage}[h]{0.5\linewidth}
\centering
\subtable[$W/N=0.2$]{
\begin{tabular}{|l|c|c|c|c}
\toprule
Epoch & $20$ & $40$ & $60$ & \makecell[c]{Average \\ $\lambda_l^L/\lambda_l^H$} \\
\hline
DenseBlock-1 & \makecell[c]{$\!\!\!\!\lambda_l^L\!=\!362.672$ \\ $\!\!\!\!\lambda_l^H\!=\!154.214$}   & \makecell[c]{$\!\!\!\!\lambda_l^L\!=\!225.543$ \\ $\!\!\!\!\lambda_l^H\!=\!97.439$}  & \makecell[c]{$\!\!\!\!\lambda_l^L\!=\!173.595$ \\ $\!\!\!\!\lambda_l^H\!=\!73.828$}  & $2.34$ \\
\hline
DenseBlock-2 & \makecell[c]{$\!\!\!\!\lambda_l^L\!=\!3.632$ \\ $\!\!\!\!\lambda_l^H\!=\!1.326$}       & \makecell[c]{$\!\!\!\!\lambda_l^L\!=\!1.440$ \\ $\!\!\!\!\lambda_l^H\!=\!0.533$}     & \makecell[c]{$\!\!\!\!\lambda_l^L\!=\!0.774$ \\ $\!\!\!\!\lambda_l^H\!=\!0.285$}  & $2.72$ \\
\hline
DenseBlock-3 & \makecell[c]{$\!\!\!\!\lambda_l^L\!=\!0.445$ \\ $\!\!\!\!\lambda_l^H\!=\!0.202$}       & \makecell[c]{$\!\!\!\!\lambda_l^L\!=\!0.179$ \\ $\!\!\!\!\lambda_l^H\!=\!0.067$}     & \makecell[c]{$\!\!\!\!\lambda_l^L\!=\!0.095$ \\ $\!\!\!\!\lambda_l^H\!=\!0.037$}  & $2.49$ \\
\hline
DenseBlock-4 & \makecell[c]{$\!\!\!\!\lambda_l^L\!=\!0.062$ \\ $\!\!\!\!\lambda_l^H\!=\!0.009$}       & \makecell[c]{$\!\!\!\!\lambda_l^L\!=\!0.011$ \\ $\!\!\!\!\lambda_l^H\!=\!0.001$}     & \makecell[c]{$\!\!\!\!\lambda_l^L\!=\!0.006$ \\ $\!\!\!\!\lambda_l^H\!=\!0.001$}  & $7.56$ \\ 
\hline
\makecell[c]{Average \\ $\lambda_l^L/\lambda_l^H$} & $3.58$ & $3.78$ & $3.97$ & $3.78$ \\
\bottomrule
\end{tabular}}
\end{minipage}
\end{table*}

\section{Compression of 1D and 3D Activation Maps by \Algnameabbr{}}
\label{appendix:1D2D3D}

We give more details about \Algnameabbr{} considering 1D, 2D and 3D activation maps in this section.

\subsection{Activation Map Compression}  
\label{appendix:1D2D3D_compression}

\Algnameabbr{} adopts average pooling to estimate the LFC by $\mathbf{H}_{l}^{\mathsf{L}} \!=\! \mathrm{Average Pooling}(\mathbf{H}_{l})$. The value of block-size and moving stride is a unified hyper-parameter $B$, which controls the memory of $\mathbf{H}_{l}^{\mathsf{L}}$.
The average pooling of 1D, 2D and 3D activation maps are considered as follows,
\begin{align}
\small
\label{eq:average_poolnd}
\mathrm{Minibatch} \!\times\! \mathrm{Channel} \!\times\! N  &\stackrel{\mathrm{Average Pooling}1D}{\longrightarrow} 
\nonumber
\\
&\!\!\!\!\!\!\!\!\!\!\!\!\!\!\!\!\!\!\!\!\!\!\!\!\!\!\!\!\!\!\!\! \mathrm{Minibatch} \!\times\! \mathrm{Channel} \!\times\! \big\lfloor N/B \big\rfloor
\nonumber
\\
\mathrm{Minibatch} \!\times\! \mathrm{Channel} \!\times\! N \!\times\! N &\stackrel{\mathrm{Average Pooling}2D}{\longrightarrow} 
\\
&\!\!\!\!\!\!\!\!\!\!\!\!\!\!\!\!\!\!\!\!\!\!\!\!\!\!\!\!\!\!\!\!\!\!\!\!\!\!\!\!\!\!\!\!\!\!\!\!\!\!\!\! \mathrm{Minibatch} \!\times\! \mathrm{Channel} \!\times\! \big\lfloor N/B \big\rfloor \!\times\! \big\lfloor N/B \big\rfloor  
\nonumber 
\\
\mathrm{Minibatch} \!\!\times\!\! \mathrm{Channel} \!\!\times\!\! N \!\!\times\!\! N \!\!\times\!\! N  &\stackrel{\mathrm{Average Pooling}3D}{\longrightarrow}
\nonumber
\\
&\!\!\!\!\!\!\!\!\!\!\!\!\!\!\!\!\!\!\!\!\!\!\!\!\!\!\!\!\!\!\!\!\!\!\!\!\!\!\!\!\!\!\!\!\!\!\!\!\!\!\!\!\!\!\!\!\!\!\!\!\!\!\!\! \mathrm{Minibatch} \!\!\times\!\! \mathrm{Channel} \!\!\times\!\! \big\lfloor N/B \big\rfloor \!\!\times\!\! \big\lfloor N/B \big\rfloor \!\!\times\!\! \big\lfloor N/B \big\rfloor 
\nonumber
\end{align}

To estimate the HFC, \Algnameabbr{} calculates the residual value $\mathbf{H}_{l}^{\mathsf{H}} = \mathbf{H}_{l} - \mathrm{UpSampling}(\mathbf{H}_{l}^{\mathsf{L}})$, where the $\mathrm{UpSampling}(\cdot)$ enlarges $\mathbf{H}_{l}^{\mathsf{L}}$ to the shape of $\mathbf{H}_{l}$ via nearest interpolation. 
The up sampling of 1D, 2D and 3D activation maps are considered as follows,
\begin{align}
\small
\label{eq:up_samplend}
\mathrm{Minibatch} \!\times\! &\mathrm{Channel} \!\times\! \big\lfloor N/B \big\rfloor
\nonumber
\\
&\stackrel{\mathrm{UpSampling}1D}{\longrightarrow} \mathrm{Minibatch} \!\times\! \mathrm{Channel} \!\times\! N
\nonumber
\\
\mathrm{Minibatch} \!\times\! &\mathrm{Channel} \!\times\! \big\lfloor N/B \big\rfloor \!\times\! \big\lfloor N/B \big\rfloor  
\\
&\stackrel{\mathrm{UpSampling}2D}{\longrightarrow} \mathrm{Minibatch} \!\times\! \mathrm{Channel} \!\times\! N \!\times\! N
\nonumber 
\\
\mathrm{Minibatch} \!\times\! &\mathrm{Channel} \!\!\times\!\! \big\lfloor N/B \big\rfloor \!\!\times\!\! \big\lfloor N/B \big\rfloor \!\!\times\!\! \big\lfloor N/B \big\rfloor  
\nonumber
\\
&\!\!\!\! \stackrel{\mathrm{UpSampling}3D}{\longrightarrow} \mathrm{Minibatch} \!\times\! \mathrm{Channel} \!\!\times\!\! N \!\!\times\!\! N \!\!\times\!\! N 
\nonumber
\end{align}

Then, \Algnameabbr{} adopts $Q$-bit per-channel quantization for the compression, where the bit-width $Q$ controls the precision and memory cost of HFC after the compression.
Let $\mathbf{V}_{l}^{\mathsf{H}}$ denote a $Q$-bit integer matrix, as the low-precision representation of $\mathbf{H}_{l}^{\mathsf{H}}$.
The detailed procedure of compressing $\mathbf{H}_{l}^{\mathsf{H}}$ into $\mathbf{V}_{l}^{\mathsf{H}}$ is given by
\begin{equation}
\label{eq:appendix_quantization}
\mathbf{V}_{l}^{\mathsf{H}} = \mathrm{Quant} (\mathbf{H}_{l}^{\mathsf{H}}) = \big\lfloor \Delta_{l}^{-1} (\mathbf{H}_{l}^{\mathsf{H}} - \delta_{l}) \big\rceil,
\end{equation}
where $\delta_{l}$ denotes the minimum element in $\mathbf{H}^{\mathsf{H}}_{l}$; $\Delta_{l} = (h_{\max}-\delta_{l})/(2^Q-1)$ denotes the quantization step; $h_{\max}$ denotes the maximum element in $\mathbf{H}^{\mathsf{H}}_{l}$; $\lfloor \bullet \rceil$ denotes the stochastic rounding.

After the compression, as the representation of $\mathbf{H}_l$, the tuple of $( \mathbf{H}_{l}^{\mathsf{L}}, \mathbf{V}_{l}^{\mathsf{H}}, \Delta_{l}, \delta_{l} )$ is cached to the memory for reconstructing the activation maps during the backward pass.

\subsection{Activation Map Decompression}  
\label{appendix:1D2D3D_decompression}

During the backward pass, \Algnameabbr{} adopts the cached tuples of $\{( \mathbf{H}_{l}^{\mathsf{L}}, \mathbf{V}_{l}^{\mathsf{H}}, \Delta_{l}, \delta_{l} ) \!\mid\! 0 \leq l \leq L\!-\!1 \}$ to reconstruct the activation map layer-by-layer.
Specifically, for each layer $l$, \Algnameabbr{} dequantizes the HFC via $\hat{\mathbf{H}}_{l}^{\mathsf{H}} \!=\! \Delta_{l} \mathbf{V}_{l}^{\mathsf{H}} \!+\! \delta_{l}$, which is the inverse process of Equation~(\ref{eq:appendix_quantization}). 
Then, the activation map is reconstructed via $\hat{\mathbf{H}}_{l} = \mathrm{UpSampling}(\mathbf{H}_{l}^{\mathsf{L}}) + \hat{\mathbf{H}}_{l}^{\mathsf{H}}$, where $\mathrm{UpSampling}(\cdot)$ enlarges $\mathbf{H}_{l}^{\mathsf{L}}$ to the shape of $\mathbf{H}_{l}$ via nearest interpolation. 
The cases of 1D, 2D and 3D activation maps are considered in Equation~(\ref{eq:up_samplend}).

After the decompression, \Algnameabbr{} frees the caching of $( \mathbf{H}_{l}^{\mathsf{L}}, \mathbf{V}_{l}^{\mathsf{H}}, \Delta_{l}, \delta_{l} )$, and takes $\hat{\mathbf{H}}_{l}$ into $[\hat{\nabla}_{\mathbf{H}_{l-1}}, \hat{\nabla}_{\mathbf{W}_{l}}] = \mathrm{backward}( \hat{\nabla}_{\mathbf{H}_{l}}, \hat{\mathbf{H}}_{l-1}, \mathbf{W}_{l} )$ to estimate the gradient for backward propagation.

\section{Experiment Setting}
\label{appendix:exp_set}

We give the experiment setting including the datasets, baseline methods and model architectures in this section.

\noindent
\textbf{Datasets.} We consider CIFAR-10, CIFAR-100~\cite{krizhevsky2009learning} and ImageNet~\cite{deng2009imagenet} datasets in our experiments.
\textbf{CIFAR-10:} An image dataset with 60,000 color images in 10 different classes, where each image has $32 \!\times\! 32$ pixels.
\textbf{CIFAR-100:} An image dataset with 60,000 color images in 100 different classes, where each image has $32 \!\times\! 32$ pixels.
\textbf{ImageNet:} A large scale image dataset which has over one million color images covering 1000 categories, where each image has $224 \!\times\! 224$ pixels.

\noindent
\textbf{Baseline Methods.}
\textbf{Normal:} Caching the exact activation map for backward propagation.
\textbf{BLPA:} A systemic implementation of ACT by~\cite{chakrabarti2019backprop}, which only supports ResNet-related architectures. 
\textbf{AC-GC:} A framework of ACT with automatic searched bit-width for the quantization of activation maps~\cite{evans2021ac}.
\textbf{ActNN:} Activation compression training with dynamic bit-width quantization, where the bit-allocation minimizes the variance of activation maps via dynamic processing~\cite{chen2021actnn}.
\textbf{Mesa:} Mesa is an ACT-based memory-efficient training method explicitly designed for vision transformers~\cite{pan2021mesa}.
\textbf{Checkpoint:} Caching some key activation maps to reconstruct other activation maps via replaying parts of the forward pass during the backward pass~\cite{chen2016training}.
\textbf{SWAP:} Swapping the activation maps to the CPU during the forward pass the memory consumption of GPU, and reload the activation maps to GPU during the backward pass~\cite{huang2020swapadvisor}.

\noindent
\textbf{DNN Architectures.}
For benchmarking the model accuracy, we consider ResNet-18~(top-1 accuracy 94.89\%), ResNet-164~(top-1 accuracy 94.9\%), and MobileNet-V2~(top-1 accuracy 91.9\%) on the CIFAR-10 dataset; ResNet-34~(top-1 accuracy 77.1\%), DenseNet-121~(top-1 accuracy 79.75\%), ResNet-164~(top-1 accuracy 77.3\%), and MobileNet-V2~(top-1 accuracy 71.0\%) on the CIFAR-100 dataset; and ResNet-50~(top-1 accuracy 76.15\%), DenseNet-161~(top-1 accuracy 77.65\%) and Swin Transformer-T~(top-1 accuracy 81.2\%) on the ImageNet dataset.
Our reproduced validating accuracy on the ImageNet dataset is consistent with the official results of torchvision\footnote{\scriptsize \url{https://paperswithcode.com/lib/torchvision}}.
Moreover, for benchmarking the memory cost and training throughput, we consider the large models ResNet-50 and WRN-50-2 on the ImageNet dataset.
To study the performance of DIVISION on depthwise and pointwise convolutional Layers, we consider MobileNet-V2.
The comprehensive comparison in Figure~\ref{fig:radar} considers ResNet-50 and ResNet-34 on the ImageNet and CIFAR-100 datasets, respectively.

\section{Implementation details about \Algnameabbr{} and Baseline Methods}
\label{appendix:config_division}

\textbf{\Algnameabbr{}:}
\Algnameabbr{} adopts block-size 8 ($B=8$) and 2-bit quantization~($Q=2$) to compress the activation maps of linear, convolutional and BatchNorm layers, where the theoretical compression rate is not less than $10.35\times$.
For the operators without quantization error during backward propagation such as pooling layers, ReLu activation, and Dropout, \Algnameabbr{} follows the algorithms in Appendix~\ref{appendix:relu_compression} to compress the activation maps. 
Other hyper-parameter settings are given in Table~\ref{tab:Hyper-parameter setting}.
\textbf{BLPA:}
Existing work~\cite{chakrabarti2019backprop} has shown that BLPA requires at least 4-bit ACT for loss-less DNN training.
We follow this setting for BLPA, where the compression rate of activation maps is not more than $8\times$.
\textbf{AC-GC:}
AC-GC follows existing work~\cite{evans2021ac} to take the multiplicative error $(1+e^2_{\text{AC-GC}})=1.5$, where the searched bit-width enables AC-GC to satisfy this loss bound~(training loss not more than $150\%$ of normal training).
In this setting, AC-GC finalizes the bit-with as 7.01 after the searching, which has a nearly $3.5\times$ compression rate of activation maps.
\textbf{ActNN:}
ActNN adopts 2-bit ACT and dynamic programming for searching the optimal bit-width specific for each layer, and uses per-group quantization for compressing the activation map, which has approximately $10.5\times$ compression rate of activation maps.
Such experimentally setting is denoted as L3 strategy in the original work~\cite{chen2021actnn}, and we follow this setting in this section.
\textbf{Mesa:} We follow the default setting in the original work of Mesa~\cite{pan2021mesa}.
\textbf{Checkpoint:}
Checkpoint relies on a manually design of the checkpointing layers.
We follow the checkpoint strategy of Megatron-LM in our experiment~\cite{shoeybi2019megatron}. 
Specifically, Megatron-LM checkpoints the activation map after each transformer block. 
We follow this strategy to checkpoint the activation map after each transformer block in the Swin Transformer, and after each Bottleneck block in the ResNet-50. 
For all methods, the memory cost is measured in a single mini-batch updating; and the throughput is estimated by averaging that of 20 mini-batch updating to achieve a stable result.


\section{Compression of Pooling layers, Relu activations, and Dropout}
\label{appendix:relu_compression}

\Algnameabbr{} follows Algorithms~\ref{alg:max-pool},~\ref{alg:avg-pool},~\ref{alg:relu} and~\ref{alg:dropout} to compresse the activation map of a Max-Pooling layer, Average-Pooling layer, Relu activation and Dropout operator, respectively. 
For the pooling layers, we consider a simple case $\mathrm{kernel size} \!=\! \mathrm{moving stride} \!=\! k$.
General cases with different $\mathrm{kernel size}$ and $\mathrm{moving stride}$ can be designed in analogous ways.

\begin{algorithm}
\small
\caption{\small Max-Pooling layer.}
\label{alg:max-pool}
\small
\begin{algorithmic}[1]

\STATE $\textbf{Function}$ Forward ($\mathbf{H}_{l-1}$, $k$, **kwargs)
\STATE \quad $\mathbf{H}_{l}, \mathbf{V}_{l-1}\footnotemark[11] = $Max-Pooling($\mathbf{H}_{l-1}$, $k$, kwargs)
\STATE \quad Pack \& Cache $\mathbf{V}_{l-1}$ using Int8.
\STATE \quad \textbf{return} $\mathbf{H}_{l}$
\STATE
\STATE $\textbf{Function}$ Backward($\nabla_{\mathbf{H}_l}$)
\STATE \quad Load $\mathbf{V}_{l-1}$ and $k$.
\STATE \quad $\nabla_{\mathbf{H}_l}' = \mathbf{1}_{k \times k} \otimes \nabla_{\mathbf{H}_l}$
\STATE \quad $\nabla_{\mathbf{H}_{l-1}} = \mathbf{V}_{l-1} \odot \nabla_{\mathbf{H}_l}'$
\STATE \quad \textbf{return} $\nabla_{\mathbf{H}_{l-1}}$

\end{algorithmic}
\end{algorithm}

\begin{algorithm}
\small
\caption{\small Average-Pooling layer.}
\label{alg:avg-pool}
\small
\begin{algorithmic}[1]

\STATE $\textbf{Function}$ Forward ($\mathbf{H}_{l-1}$, $k$, **kwargs)
\STATE \quad $\mathbf{H}_{l} = $Avg-Pooling($\mathbf{H}_{l-1}$, $k$, kwargs)
\STATE \quad \textbf{return} $\mathbf{H}_{l}$
\STATE
\STATE $\textbf{Function}$ Backward($\nabla_{\mathbf{H}_l}$)
\STATE \quad $\nabla_{\mathbf{H}_l}' = \mathbf{1}_{k \times k} \otimes \nabla_{\mathbf{H}_l}$
\STATE \quad $\nabla_{\mathbf{H}_{l-1}} = k^{-2} \nabla_{\mathbf{H}_l}$
\STATE \quad \textbf{return} $\nabla_{\mathbf{H}_{l-1}}$

\end{algorithmic}
\end{algorithm}

\section{Evaluation of \Algnameabbr{} on Multi-layer Perceptrons (MLPs)}
\label{appendix:exp_mlp}

We conduct experiments on the GAS dataset~\cite{Dua:2019} (128-dimensional features, 13910 instances, 6 classification task).
The classification model is a 4-layer MLP (128 neuros in the input layer, 6 neuros in the output layer, and 64 neuros in the hidden layer); The setting of
\Algnameabbr{} is $B=16$ and $Q=2$. The model accuracy and memory cost of activation map are given in Table~\ref{tab:mlp_exp}. It is observed that \Algnameabbr{} has $7.3\times$ compression rate with only $0.07\%$ degradation of model accuracy. This indicates the effectiveness of \Algnameabbr{} on the MLP models.

\begin{table}
\scriptsize
\centering
\caption{\label{tab:mlp_exp} Model Accuracy on the GAS dataset.}
\begin{tabular}{lccc}
\toprule
Training & Testing Accuracy (\%) & Memory (KB) & Compression \\
\midrule
Normal Training & $98.92$  &  $250$ & N/A \\
\Algnameabbr{} & $98.85$ &  $34.2$ & $7.3\times$ \\
\bottomrule
\end{tabular}
\end{table}

\footnotetext{\scriptsize $\mathbf{V}_{l-1}$ reserves the locations of each kernel-wise max-values in $\mathbf{H}_{l-1}$.}

\section{Evaluation of \Algnameabbr{} on NLP tasks}
\label{appendix:exp_nlp}

To evaluate \Algnameabbr{} on the tasks of natural language processing, we conducted experiments of deploying DIVISION (with $B=8$ and $Q=4$) to the T5-Base~\cite{raffel2020exploring} language model on the CoLA, SST2, MRPC, and STS-B datasets. 
The input text is padded to a maximum length of 128 during the training. The results of evaluation metrics on different datasets are shown in the Table~\ref{tab:division_glue}. 
It is observed that DIVISION achieves nearly loss-less model accuracy compared with normal training. 
The memory cost considers that of model weight and activation maps, and \Algnameabbr{} achieves over 4.2$\times$ compression rate on all of the datasets, further emphasizing its effectiveness on language models. 

\begin{table}[]
    \centering
    \caption{\small Evaluation results on GLUE tasks.}
\resizebox{0.5\textwidth}{!}{
    \begin{tabular}{lccc}
    \toprule
    Dataset	& Standard Evaluation Metric &	Normal &	\Algnameabbr{}	  \\
    \midrule
    CoLA &	Matthew’s Correlation &	60.3 &	60.6	\\
    SST2 &	Accuracy &	93.9 &	94.6	\\
    MRPC &	F1 score &	91.4 &	91.9	\\
    STS-B &	Pearson-Spearman correlation &	90.7 &	90.4 \\
    \bottomrule
    \end{tabular}
}
    \label{tab:division_glue}
    \vspace{-5mm}
\end{table}

\begin{table}[h]
    \vspace{-5mm}
    \centering
    \caption{\small Comparison of \Algnameabbr{} with TinyScript.}
\resizebox{0.5\textwidth}{!}{
    \begin{tabular}{lccc}
    \toprule
    Top-1 & Error &	Top-5 Error &	Compression Rate \\
    \midrule
    TinyScript n=8	& N/A &	7.74 &	9.8$\times$ \\
    \Algnameabbr{} & 24.1 &	7.33 &	10.4$\times$ \\
    \bottomrule
    \end{tabular}
}
    \label{tab:division_tinyscript}
\end{table}

\section{Comparison with TinyScript}
\label{appendix:exp_nlp}

In this section, we conducted a comparison between \Algnameabbr{} and TinyScript~\cite{fu2020don} for training ResNet-50 on the ImageNet dataset. 
The model accuracy and compression rate are given in Table~\ref{tab:division_tinyscript}. 
It is observed that DIVISION outperforms TinyScript, achieving a higher compression rate with lower Top-5 error. 
These results demonstrate the superiority of DIVISION over TinyScript.

\begin{algorithm}
\caption{\small Relu operator.}
\label{alg:relu}
\small
\begin{algorithmic}[1]

\STATE $\textbf{Function}$ Forward ($\mathbf{H}_{l-1}$)
\STATE \quad $\mathbf{V}_{l-1} = \text{sgn}(\mathbf{H}_{l-1})$
\STATE \quad $\mathbf{H}_l = \mathbf{V}_{l-1} \odot \mathbf{H}_{l-1}$
\STATE \quad Pack \& Cache $\mathbf{V}_{l-1}$ using Int8
\STATE \quad \textbf{return} $\mathbf{H}_{l}$
\STATE
\STATE
\STATE $\textbf{Function}$ Backward($\nabla_{\mathbf{H}_l}$)
\STATE \quad Load $\mathbf{V}_{l-1}$.
\STATE \quad $\nabla_{\mathbf{H}_{l-1}} = \mathbf{V}_{l-1} \odot \nabla_{\mathbf{H}_l}$
\STATE \quad \textbf{return} $\nabla_{\mathbf{H}_{l-1}}$

\end{algorithmic}
\end{algorithm}

\begin{algorithm}[h]
\caption{Dropout operator.}
\label{alg:dropout}
\small
\begin{algorithmic}[1]

\STATE $\textbf{Function}$ Forward ($\mathbf{H}_{l-1}$)
\STATE \quad Generate a $\mathrm{Minibatch} \!\times\! \mathrm{Channel} \!\times\! N \!\times\! N$ binary matrix  
\STATE \quad $\!\mathbf{V}_{l\!-\!1}\!$ following the Bernoulli distribution with dropout 
\STATE \quad probability $p$.
\STATE \quad $\mathbf{H}_l = \mathbf{V}_{l-1} \odot \mathbf{H}_{l-1}$
\STATE \quad Pack \& Cache $\mathbf{V}_{l-1}$ using Int8.
\STATE \quad \textbf{return} $\mathbf{H}_{l}$
\STATE
\STATE $\textbf{Function}$ Backward($\nabla_{\mathbf{H}_l}$)
\STATE \quad Load $\mathbf{V}_{l-1}$.
\STATE \quad $\nabla_{\mathbf{H}_{l-1}} = \mathbf{V}_{l-1} \odot \nabla_{\mathbf{H}_l}$
\STATE \quad \textbf{return} $\nabla_{\mathbf{H}_{l-1}}$

\end{algorithmic}
\end{algorithm}

\section{Computation Infrastructure}
\label{appendix:infrastructure}

The details about our physical computing infrastructure for testing the training memory cost and throughput are given in Table~\ref{tab:computing_infrastructure}.

\begin{table}[]
\vspace{-5mm}
\centering
\caption{Computing infrastructure for the experiments.}
\begin{tabular}{l|c}
\toprule
Device Attribute & Value \\
\hline
Computing infrastructure & GPU \\
GPU model & Nvidia-RTX3090 \\
GPU number & 1 \\
CUDA Version & 12.0 \\
\bottomrule
\end{tabular}
\label{tab:computing_infrastructure}
\vspace{-5mm}
\end{table}

\section{Implementation Details of the Experiment in Section~\ref{sec:ablation_exp}}
\label{appendix:detail_ablation}

We give the implementation details of the experiment in Section~\ref{sec:ablation_exp}.
The model of image classification on the CIFAR-100 and ImageNet datasets are ResNet-34 and ResNet-50, respectively.
Regarding to the training methods, \emph{\Algnameabbr{} w/o HFC} takes block-size $B \!=\! 4$ for estimating LFC; \emph{\Algnameabbr{} w/o LFC} takes the bit-width $Q \!=\! 2$ for the quantization of HFC;
\Algnameabbr{} combines these settings for the training;
and \emph{Fixed Quant} adopts 4-bit and 2-bit per-group quantization to compress the activation maps during the training, where the group size of quantization follows existing work~\cite{chen2021actnn} to be 256.
Other training hyper-parameters are given in Table~\ref{tab:Hyper-parameter setting}.

\newpage
\onecolumn

\begin{table}[]
    \centering
    \scriptsize
    \caption{Hyper-parameter setting.}
    \label{tab:Hyper-parameter setting}
    \begin{tabular}{l|c|c|c|c|c|c|c|c|c|c}
\toprule
         Dataset & \multicolumn{2}{c|}{CIFAR-10} & \multicolumn{3}{c|}{CIFAR-100} & \multicolumn{5}{c}{ImageNet} \\
    \hline
         Architecture & \!\!ResNet-18\!\! & \!\!ResNet-164\!\! & \!\!MobileNet-V2\!\! & \!\!ResNet-34\!\!  & \!\!ResNet-164\!\! & \!\!DenseNet-121\!\! & \!\!MobileNet-V2\!\! & \!\!ResNet-50\!\! & \!\!DenseNet-161\!\! & Swin-T \\
        Epoch & 100 & 100 & 100 & 100 & 200 & 100 & 100 & 120 & 120 & 300 \\ 
        Batch-size & 256 & 256 & 256 & 256 & 256 & 256 & 256 & 256 & 256 & 128 \\
        Initial LR & 0.1 & 0.1 & 0.1 & 0.1 & 0.15 & 0.1 & 0.1 & 0.1 & 0.1 & 5e-4 \\
        LR scheduler & Cos LR & Cos LR & Cos LR & Cos LR & Cos LR & Cos LR & Cos LR & Cos LR & Cos LR & Cos LR \\
        Weight-decay & 0.0005 & 0.0005 & 0.00004 & 0.0005 & 0.0005 & 0.0005 & 0.00004 & 0.0001 & 0.0001 & 0.05 \\
        Optimizer & SGD & SGD & SGD & SGD & SGD & SGD & SGD & SGD & SGD & SGD \\
        SGD Momentum & 0.9 & 0.9 & 0.9 & 0.9 & 0.9 & 0.9 & 0.9 & 0.9 & 0.9 & 0.9 \\ 
        Block-size $B$ & 8 & 8 & 8 & 8 & 8 & 8 & 8 & 8 & 8 & 64 \\
        Bit-width $Q$ & 2 & 2 & 2 & 2 & 2 & 2 & 2 & 2 & 2 & 2\\
\bottomrule
    \end{tabular}
\end{table}

\section{Proof of Theorem~\ref{theorem:grad_error}}
\label{appendix:proof_grad_error}

We prove Theorem~\ref{theorem:grad_error} in this section.

\textbf{Theorem 1}
\textit{
During the backward pass of a convolutional layer~$l$, $\mathrm{GEB}^{\mathsf{L}}_l$ and $\mathrm{GEB}^{\mathsf{H}}_l$ satisfy
\begin{equation}
\mathrm{GEB}^{\mathsf{L}}_l \!-\! \mathrm{GEB}^{\mathsf{H}}_l 
\!=\! \Big( \alpha_{l,l} || \mathbf{H}_{l-1}^{\mathsf{T}} ||_F \!+\! \beta_l \Big) ( \lambda^{\mathsf{H}}_l \!-\! \lambda^{\mathsf{L}}_l ) \!+ || \mathbf{H}_{l-1}^{\mathsf{T}} ||_F \!\!\!\! \sum_{i=l+1}^L \!\! \alpha_{l,i} ( \lambda^{\mathsf{H}}_i \!-\! \lambda^{\mathsf{L}}_i ) \prod_{j=l}^{i-1} \gamma_j,
\end{equation}
where $\alpha_{l,i}, \beta_l, \gamma_l > 0$ for $1 \!\leq\! l,i \!\leq\! L$ are given by Equation~(\ref{eq:beta}); $\lambda^{\mathsf{L}}_l \!=\! || \widetilde{\mathbf{H}}_l \!\odot\! \mathbf{M} ||_F$; $\lambda^{\mathsf{H}}_l \!=\! || \widetilde{\mathbf{H}}_l \!\odot\! ( \mathbf{1} \!-\! \mathbf{M}) ||_F$; 
$\widetilde{\mathbf{H}}_l = \mathrm{DCT}({\mathbf{H}}_l)$; and $\mathbf{M}$ denotes the loss-pass mask given by Equation~(\ref{eq:lfc_estimate}).}


\begin{proof}

For simplicity of derivation, we study the case with a single input channel and output channel number.
In this case, $\mathbf{H}_l$ and $\mathbf{W}_l$ are 2-D matrix for each layer $l$, where $1 \leq l \leq L$.
The backward propagation of a convolutional layer is given by
\begin{equation}
\begin{aligned}
\label{eq:CNN_grad}
&\hat{\nabla}_{\mathbf{Z}_{l}} \, \ = \hat{\nabla}_{\mathbf{Z}_{l+1}} \ast  \mathbf{W}_{l+1}^{\mathsf{rot}} \odot \sigma'(\hat{\mathbf{Z}}_{l}), 
\\
&\hat{\nabla}_{\mathbf{W}_{l}} = \hat{\nabla}_{\mathbf{Z}_{l}} \ast  \hat{\mathbf{H}}_{l-1}^{\mathsf{T}},
\end{aligned}
\end{equation}
where $\ast$ denotes a convolutional operation; $\hat{\mathbf{Z}}_{l} = \mathbf{W}_{l} \ast \hat{\mathbf{H}}_{l-1} + b_{l}$; $b_{l}$ denotes the bias of layer $l$; and $\mathbf{W}_{l}^{\mathsf{rot}}$ denotes to rotate $\mathbf{W}_{l}$ by $180^\circ$.
The case of multiple input and output channels can be proved in an analogous way, which is omitted in this work.

According to Equation~(\ref{eq:CNN_grad}), we have the gradient of $\mathbf{Z}_{l}$ given by
\begin{align}
&\hat{\nabla}_{\mathbf{Z}_{l}} - \nabla_{\mathbf{Z}_{l}}
\nonumber
\\
&= \hat{\nabla}_{\mathbf{Z}_{l+1}} \ast  \mathbf{W}_{l+1}^{\mathsf{rot}} \odot \sigma'(\hat{\mathbf{Z}}_{l}) - \nabla_{\mathbf{Z}_{l+1}} \ast  \mathbf{W}_{l+1}^{\mathsf{rot}} \odot \sigma'(\mathbf{Z}_{l}),
\nonumber
\\
&= \!\hat{\nabla}_{\mathbf{Z}_{l+1}} \!\!\!\!\ast\!  \mathbf{W}_{l+1}^{\mathsf{rot}} \!\odot\! \sigma'(\hat{\mathbf{Z}}_{l}) \!-\! \hat{\nabla}_{\mathbf{Z}_{l+1}} \!\!\!\!\ast \! \mathbf{W}_{l+1}^{\mathsf{rot}} \!\odot\! \sigma'(\mathbf{Z}_{l}) \!+\! \hat{\nabla}_{\mathbf{Z}_{l+1}} \!\!\!\!\ast\!  \mathbf{W}_{l+1}^{\mathsf{rot}} \!\odot\! \sigma'(\mathbf{Z}_{l}) \!-\! \nabla_{\mathbf{Z}_{l+1}} \!\ast\!  \mathbf{W}_{l+1}^{\mathsf{rot}} \!\odot\! \sigma'(\mathbf{Z}_{l}),
\nonumber
\\
\label{eq:delta_grad_z}
&= \hat{\nabla}_{\mathbf{Z}_{l+1}} \ast \mathbf{W}_{l+1}^{\mathsf{rot}} \odot [ \sigma'(\hat{\mathbf{Z}}_{l}) - \sigma'(\mathbf{Z}_{l}) ] + (\hat{\nabla}_{\mathbf{Z}_{l+1}} - \nabla_{\mathbf{Z}_{l+1}}) \ast  \mathbf{W}_{l+1}^{\mathsf{rot}} \odot \sigma'(\mathbf{Z}_{l}). 
\end{align}
For the activation functions $\mathrm{ReLu}(\cdot)$, $\mathrm{LeakyReLu}(\cdot)$, $\mathrm{Sigmoid}(\cdot)$, $\mathrm{Tanh}(\cdot)$ and $\mathrm{SoftPlus}(\cdot)$, the gradient $\sigma'(\cdot)$ satisfies $| \sigma''(\cdot) | \leq 1$ in the differentable domains.
Note that we have $|| \mathbf{W}_l \ast \mathbf{H}_{l-1} ||_F \leq (K_l + N_l - 1) || \mathbf{W}_l ||_F || \mathbf{H}_{l-1} ||_F$ according to Corollary~\ref{corol:convolution_norm_inequation}.
$|| \sigma'(\hat{\mathbf{Z}}_{l}) - \sigma'(\mathbf{Z}_{l}) ||_F$ satisfies
\begin{align}
\label{eq:delta_grad_sigma}
|| \sigma'(\hat{\mathbf{Z}}_{l}) - \sigma'(\mathbf{Z}_{l}) ||_F \leq || \hat{\mathbf{Z}}_{l} - \mathbf{Z}_{l} ||_F \leq (K_l + N_l - 1) || \hat{\mathbf{H}}_{l-1} - \mathbf{H}_{l-1} ||_F || \mathbf{W}'_{l} ||_F,
\end{align}
where $K_l$ and $N_l$ denote the size of convolutional kernel $\mathbf{W}_l$ and activation map $\mathbf{H}_l$ in layer $l$, respectively.
After taking Equation~(\ref{eq:delta_grad_sigma}) into Equation~(\ref{eq:delta_grad_z}), we have
\begin{align}
|| \hat{\nabla}_{\mathbf{Z}_{l}} - \nabla_{\mathbf{Z}_{l}} ||_F 
&\leq (K_l + N_l - 1) || \hat{\nabla}_{\mathbf{Z}_{l+1}} ||_F || \mathbf{W}_{l+1}^{\mathsf{rot}} ||_F || \sigma'(\hat{\mathbf{Z}}_{l}) - \sigma'(\mathbf{Z}_{l}) ||_F 
\nonumber
\\
&+ (K_l + N_l - 1) || \hat{\nabla}_{\mathbf{Z}_{l+1}} - \nabla_{\mathbf{Z}_{l+1}} ||_F || \mathbf{W}_{l+1}^{\mathsf{rot}} ||_F || \sigma'(\mathbf{Z}_{l}) ||_F,
\nonumber
\\
&= (K_l + N_l - 1)^2 || \hat{\nabla}_{\mathbf{Z}_{l+1}} ||_F || \mathbf{W}_{l+1}^{\mathsf{rot}} ||_F || \hat{\mathbf{H}}_{l-1} \!-\! \mathbf{H}_{l-1} ||_F || \mathbf{W}'_{l} ||_F 
\nonumber
\\
&+ (K_l + N_l - 1) || \hat{\nabla}_{\mathbf{Z}_{l+1}} \!-\! \nabla_{\mathbf{Z}_{l+1}} ||_F || \mathbf{W}_{l+1}^{\mathsf{rot}} ||_F || \sigma'(\mathbf{Z}_{l}) ||_F,
\nonumber
\\
\label{eq:grad_Z_upperbound}
&= \eta_l || \hat{\mathbf{H}}_{l-1} - \mathbf{H}_{l-1} ||_F + \gamma_l || \hat{\nabla}_{\mathbf{Z}_{l+1}} - \nabla_{\mathbf{Z}_{l+1}} ||_F,
\end{align}
where $\eta_l$ and $\gamma_l$ are given by
\begin{equation}
\begin{aligned}
\label{eq:alpha_gamma}
\eta_l &= (K_l + N_l - 1)^2 || \hat{\nabla}_{\mathbf{Z}_{l+1}} ||_F || \mathbf{W}_{l+1} ||_F || \mathbf{W}'_{l} ||_F;
\\
\gamma_l &= (K_l + N_l - 1) || \mathbf{W}_{l+1} ||_F || \sigma'(\mathbf{Z}_{l}) ||_F;
\end{aligned}
\end{equation}
the value $\eta_l$ and $\gamma_l$ depend on the model weight before backward propagation, which is constant with respect to the gradient. 
Iterate Equation~(\ref{eq:grad_Z_upperbound}) until $l\!=\!L$ where $|| \hat{\nabla}_{\mathbf{Z}_{L}} - \nabla_{\mathbf{Z}_{L}} ||_F \!\leq\! \eta_L || \hat{\mathbf{H}}_{L-1} - \mathbf{H}_{L-1} ||_F$. 
In this way, we have
\begin{align}
\label{eq:grad_Z_upperbound2}
\!\!\!\! || \hat{\nabla}_{\mathbf{Z}_{l}} \!-\! \nabla_{\mathbf{Z}_{l}} ||_F \!\leq\! \eta_l || \hat{\mathbf{H}}_{l-1} \!-\! \mathbf{H}_{l-1} ||_F + \!\!\sum_{i=l+1}^L \!\! \eta_i || \hat{\mathbf{H}}_{i-1} \!-\! \mathbf{H}_{i-1} ||_F \prod_{j=l}^{i-1} \gamma_j.
\end{align} 

According to Equation~(\ref{eq:CNN_grad}), we have the gradient of $\mathbf{W}_{l}$ given by
\begin{align}
\hat{\nabla}_{\mathbf{W}_{l}} - \nabla_{\mathbf{W}_{l}}
&= \hat{\nabla}_{\mathbf{Z}_{l}} \ast  \hat{\mathbf{H}}_{l-1}^{\mathsf{T}} - \nabla_{\mathbf{Z}_{l}} \ast  \mathbf{H}_{l-1}^{\mathsf{T}},
\nonumber
\\
&= \hat{\nabla}_{\mathbf{Z}_{l}} \ast  \hat{\mathbf{H}}_{l-1}^{\mathsf{T}} - \hat{\nabla}_{\mathbf{Z}_{l}} \ast \mathbf{H}_{l-1}^{\mathsf{T}} + \hat{\nabla}_{\mathbf{Z}_{l}} \ast \mathbf{H}_{l-1}^{\mathsf{T}} - \nabla_{\mathbf{Z}_{l}} \ast  \mathbf{H}_{l-1}^{\mathsf{T}},
\nonumber
\\
\label{eq:grad_W_difference}
&= \hat{\nabla}_{\mathbf{Z}_{l}} \ast ( \hat{\mathbf{H}}_{l-1}^{\mathsf{T}} - \mathbf{H}_{l-1}^{\mathsf{T}} ) + ( \hat{\nabla}_{\mathbf{Z}_{l}} - \nabla_{\mathbf{Z}_{l}} ) \ast  \mathbf{H}_{l-1}^{\mathsf{T}}.
\end{align}

Taking Equation~(\ref{eq:grad_Z_upperbound2}) into Equation~(\ref{eq:grad_W_difference}), we have
\begin{align}
&||\hat{\nabla}_{\mathbf{W}_{l}} - \nabla_{\mathbf{W}_{l}}||_F 
\nonumber
\\
&\leq (K_l + N_l - 1) || \hat{\nabla}_{\mathbf{Z}_{l}} ||_F || \hat{\mathbf{H}}_{l-1}^{\mathsf{T}} - \mathbf{H}_{l-1}^{\mathsf{T}} ||_F + (K_l + N_l - 1) || \hat{\nabla}_{\mathbf{Z}_{l}} - \nabla_{\mathbf{Z}_{l}} ||_F || \mathbf{H}_{l-1}^{\mathsf{T}} ||_F,
\nonumber
\\
&\leq \!\! (K_l \!\!+\!\! N_l \!\!-\!\! 1) \! \Bigg( \!\! || \hat{\nabla}_{\mathbf{Z}_{l}} ||_F \! || \hat{\mathbf{H}}_{l-1}^{\mathsf{T}} \!\!-\!\! \mathbf{H}_{l-1}^{\mathsf{T}} ||_F \!\!+\!\! || \mathbf{H}_{l-1}^{\mathsf{T}} ||_F \!\! \bigg[ \! \eta_l || \hat{\mathbf{H}}_{l-1} \!\!-\!\! \mathbf{H}_{l-1} ||_F \!\! + \!\!\!\!\! \sum_{i=l+1}^L \!\!\!\! \eta_i || \hat{\mathbf{H}}_{i-1} \!\!-\!\! \mathbf{H}_{i-1} ||_F \!\! \prod_{j=l}^{i-1} \!\! \gamma_j \bigg] \Bigg) \!\!
\nonumber
\\
&= \!\! (K_l \!+\! N_l \!-\! 1) \bigg[ \big( || \hat{\nabla}_{\mathbf{Z}_{l}} ||_F \!+\! \eta_l || \mathbf{H}_{l-1}^{\mathsf{T}} ||_F \big) || \hat{\mathbf{H}}_{l-1}^{\mathsf{T}} \!\!-\!\! \mathbf{H}_{l-1}^{\mathsf{T}} ||_F \!+\! || \mathbf{H}_{l-1}^{\mathsf{T}} ||_F \!\!\!\! \sum_{i=l+1}^L \!\!\!\! \eta_i || \hat{\mathbf{H}}_{i-1} \!\!-\!\! \mathbf{H}_{i-1} ||_F \! \prod_{j=l}^{i-1} \! \gamma_j \bigg],
\nonumber
\\
\label{eq:W_GER}
&= \Big( \beta_l \!+\! \alpha_{l,l} || \mathbf{H}_{l-1}^{\mathsf{T}} ||_F \Big) || \hat{\mathbf{H}}_{l-1}^{\mathsf{T}} \!-\! \mathbf{H}_{l-1}^{\mathsf{T}} ||_F \!+\! || \mathbf{H}_{l-1}^{\mathsf{T}} ||_F \!\!\!\! \sum_{i=l+1}^L \alpha_{l,i} || \hat{\mathbf{H}}_{i-1}^{\mathsf{T}} \!-\! \mathbf{H}_{i-1}^{\mathsf{T}} ||_F \prod_{j=l}^{i-1} \gamma_j,
\end{align}
where 
\begin{equation}    
\begin{aligned}
\label{eq:beta}
\alpha_{l,i} &= (K_l + N_l - 1) (K_i + N_i - 1)^2 || \hat{\nabla}_{\mathbf{Z}_{i+1}} ||_F || \mathbf{W}_{i+1} ||_F || \mathbf{W}'_{i} ||_F;
\\
\beta_l &= (K_l + N_l - 1) || \hat{\nabla}_{\mathbf{Z}_{l}} ||_F;
\\
\gamma_l &= (K_l + N_l - 1) || \mathbf{W}_{l+1} ||_F || \sigma'(\mathbf{Z}_{l}) ||_F;
\end{aligned}
\end{equation}
$K_l$ and $N_l$ denote the size of convolutional kernel $\mathbf{W}_l$ and activation map $\mathbf{H}_l$ in layer $l$, respectively.

During the $\mathrm{LFC}\text{-}\mathrm{ACT}$ and $\mathrm{HFC}\text{-}\mathrm{ACT}$ trainings, the activation map of a convolutional layer satisfies
\begin{align}
\label{eq:H_difference_lfc}
|| \mathbf{H}_l - \mathbf{H}^{\mathsf{L}}_l ||_F &= || \widetilde{\mathbf{H}}_l - \widetilde{\mathbf{H}}^{\mathsf{L}}_l ||_F = || \widetilde{\mathbf{H}}_l \odot( \mathbf{1} - \mathbf{M}) ||_F \triangleq \lambda^{\mathsf{H}}_l,
\\
\label{eq:H_difference_hfc}
|| \mathbf{H}_l - \mathbf{H}^{\mathsf{H}}_l ||_F &= || \widetilde{\mathbf{H}}_l - \widetilde{\mathbf{H}}^{\mathsf{H}}_l ||_F = || \widetilde{\mathbf{H}}_l \odot \mathbf{M} ||_F \triangleq \lambda^{\mathsf{L}}_l.
\end{align}
Taking Equations~(\ref{eq:H_difference_lfc}) and~(\ref{eq:H_difference_hfc}) into~(\ref{eq:W_GER}), we have $\mathrm{GEB}^{\mathsf{L}}_l$ and $\mathrm{GEB}^{\mathsf{H}}_l$ of a convolutional layer by
\begin{align}
\label{eq:GER_L_upperbound}
||\hat{\nabla}_{\mathbf{W}_{l}} \!-\! \nabla^{\mathsf{L}}_{\mathbf{W}_{l}}||_F 
\!&\leq\! \Big( \alpha_{l,l} || \mathbf{H}_{l-1}^{\mathsf{T}} ||_F \!+\! \beta_l \Big) \lambda^{\mathsf{H}}_l + || \mathbf{H}_{l-1}^{\mathsf{T}} ||_F \!\! \sum_{i=l+1}^L \alpha_{l,i} \lambda^{\mathsf{H}}_i \prod_{j=l}^{i-1} \gamma_j \triangleq \mathrm{GEB}^{\mathsf{L}}_l,
\\
\label{eq:GER_H_upperbound}
||\hat{\nabla}_{\mathbf{W}_{l}} \!-\! \nabla^{\mathsf{H}}_{\mathbf{W}_{l}}||_F 
\!&\leq\! \Big( \alpha_{l,l} || \mathbf{H}_{l-1}^{\mathsf{T}} ||_F \!+\! \beta_l \Big) \lambda^{\mathsf{L}}_l + || \mathbf{H}_{l-1}^{\mathsf{T}} ||_F \!\!\sum_{i=l+1}^L \alpha_{l,i} \lambda^{\mathsf{L}}_i \prod_{j=l}^{i-1} \gamma_j \triangleq \mathrm{GEB}^{\mathsf{H}}_l.
\end{align}

Given the expression of $\mathrm{GEB}^{\mathsf{L}}_l$ and $\mathrm{GEB}^{\mathsf{H}}_l$ by Equations~(\ref{eq:GER_L_upperbound}) and~(\ref{eq:GER_H_upperbound}), respectively, we have the $\mathrm{GEB}$ for a convolutional layer given by
\begin{align}
\mathrm{GEB}^{\mathsf{L}}_l \!-\! \mathrm{GEB}^{\mathsf{H}}_l 
\!=\! \Big( \alpha_{l,l} || \mathbf{H}_{l-1}^{\mathsf{T}} ||_F \!+\! \beta_l \Big) ( \lambda^{\mathsf{H}}_l \!-\! \lambda^{\mathsf{L}}_l ) \!+ || \mathbf{H}_{l-1}^{\mathsf{T}} ||_F \!\!\!\! \sum_{i=l+1}^L \!\! \alpha_{l,i} ( \lambda^{\mathsf{H}}_i \!-\! \lambda^{\mathsf{L}}_i ) \prod_{j=l}^{i-1} \gamma_j. 
\nonumber
\end{align}


\end{proof}

\begin{corollary}
\label{corol:convolution_norm_inequation}
For a $K \times K$ convolutional kernel and a $N \times N$ square matrix $\mathbf{H}$, we have the 
\begin{equation}
    || \mathbf{W} \ast \mathbf{H} ||_F \leq (K+N-1) || \mathbf{W} ||_F || \mathbf{H} ||_F
\end{equation}
\end{corollary}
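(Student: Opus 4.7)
The plan is to bound the Frobenius norm of the convolution entrywise via Cauchy--Schwarz and then sum over the output positions. Recall that a full 2D convolution between a $K \times K$ kernel $\mathbf{W}$ and an $N \times N$ matrix $\mathbf{H}$ produces an output of size $(K+N-1) \times (K+N-1)$, whose entries are given by
\begin{equation}
(\mathbf{W} \ast \mathbf{H})_{i,j} = \sum_{m,n} w_{m,n}\, h_{i-m, j-n},
\nonumber
\end{equation}
where $h_{p,q}$ is taken to be $0$ whenever $(p,q)$ falls outside $\{0,\dots,N-1\}^2$ (equivalently, the sum ranges only over admissible index pairs). The inequality to prove is then a clean consequence of two facts: a pointwise Cauchy--Schwarz bound on each output entry, and the observation that the output has exactly $(K+N-1)^2$ entries.

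First I would fix an output index $(i,j)$ and apply Cauchy--Schwarz to the defining sum:
\begin{equation}
|(\mathbf{W} \ast \mathbf{H})_{i,j}|^2 \;\leq\; \Bigl(\sum_{m,n} w_{m,n}^2\Bigr) \Bigl(\sum_{m,n} h_{i-m,j-n}^2\Bigr) \;\leq\; \|\mathbf{W}\|_F^2\, \|\mathbf{H}\|_F^2,
\nonumber
\end{equation}
since the second inner sum is simply a partial sum of squares of entries of $\mathbf{H}$ (with zero-padding accounting for out-of-range indices), and is therefore at most $\|\mathbf{H}\|_F^2$. Next I would sum this bound over all $(K+N-1)^2$ output entries:
\begin{equation}
\|\mathbf{W} \ast \mathbf{H}\|_F^2 \;=\; \sum_{i,j} |(\mathbf{W} \ast \mathbf{H})_{i,j}|^2 \;\leq\; (K+N-1)^2\, \|\mathbf{W}\|_F^2\, \|\mathbf{H}\|_F^2,
\nonumber
\end{equation}
and take square roots on both sides to obtain the claimed inequality.

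The only mildly delicate point is being consistent about the convolution convention and boundary indexing: one must verify that the total number of output entries really is $(K+N-1)^2$ and that each of the two Cauchy--Schwarz bounds is valid after zero-padding. If a different convention is used (e.g.\ ``valid'' or ``same'' convolution producing an $(N-K+1)^2$ or $N^2$ output), the number of output positions only shrinks, so the same argument yields an even tighter bound and the stated inequality still holds. No use of Young's inequality or operator-norm machinery is required; the entire proof is two inequalities and a sum, which is why I do not anticipate any real obstacle.
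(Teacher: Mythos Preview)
Your proof is correct and entirely self-contained, but it takes a different route from the paper. The paper passes to the frequency domain: it applies the convolution theorem $\mathrm{FFT}(\mathbf{W}\ast\mathbf{H})=\mathrm{FFT}(\mathrm{ZP}(\mathbf{W}))\odot\mathrm{FFT}(\mathrm{ZP}(\mathbf{H}))$ on the zero-padded $(K{+}N{-}1)\times(K{+}N{-}1)$ grid, invokes Parseval's identity $\|\mathrm{FFT}(\mathbf{A})\|_F=(K{+}N{-}1)\|\mathbf{A}\|_F$ on each factor, and then uses the Hadamard-product bound $\|\mathbf{A}_1\odot\mathbf{A}_2\|_F\le\|\mathbf{A}_1\|_F\|\mathbf{A}_2\|_F$. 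After cancelling one factor of $(K{+}N{-}1)$ this yields exactly the stated inequality. Your argument, by contrast, stays purely in the spatial domain: a pointwise Cauchy--Schwarz bound on each output entry followed by a count of the $(K{+}N{-}1)^2$ output positions. Your approach is more elementary and requires no Fourier machinery, while the paper's route is thematically aligned with its frequency-domain analysis elsewhere; both arrive at the identical constant $(K{+}N{-}1)$, and your remark that ``valid'' or ``same'' convolution conventions only shrink the output count (hence preserve the bound) is a nice robustness observation the paper does not make.
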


\begin{proof}
According to the relations between convolutional operation and Discrete Fourier Transformation~\cite{sundararajan2001discrete}, $\mathbf{W} \ast \mathbf{H}$ satisfies
\begin{equation}
\label{eq:convolution_property}
    \mathrm{FFT}(\mathbf{W} \ast \mathbf{H}) = \mathrm{FFT}(\mathrm{ZP}(\mathbf{W})) \odot \mathrm{FFT}(\mathrm{ZP}(\mathbf{H})),
\end{equation}
where $\mathrm{FFT}(\cdot)$ denotes the discrete Fourier transformation; $\mathrm{ZP}(\mathbf{W})$ denotes zero-padding $\mathbf{W}$ into a $(K+N-1) \!\times\! (K+N-1)$ matrix. 
According to the Parseval's theorem~\cite{diniz2010digital}, $\mathrm{FFT}(\mathrm{ZP}(\mathbf{W}))$ and $\mathrm{FFT}(\mathrm{ZP}(\mathbf{H}))$ and $\mathrm{FFT}(\mathbf{W} \ast \mathbf{H})$ satisfy
\begin{equation}
\begin{aligned}
\label{eq:Parseval_theory}
||\mathrm{FFT}(\mathrm{ZP}(\mathbf{W}))||_F &= (K+N-1) ||\mathbf{W}||_F,
\\
||\mathrm{FFT}(\mathrm{ZP}(\mathbf{H}))||_F &= (K+N-1) ||\mathbf{H}||_F,
\\
||\mathrm{FFT}(\mathbf{W} \ast \mathbf{H})||_F &= (K+N-1) ||\mathbf{W} \ast \mathbf{H}||_F.
\end{aligned}
\end{equation}
Taking $||\mathbf{A}_1 \odot \mathbf{A}_2||_F \leq ||\mathbf{A}_1||_F ||\mathbf{A}_2||_F$ into Equation~(\ref{eq:Parseval_theory}), we have 
\begin{equation}
    \label{eq:matrix_inequality}
    \mathrm{FFT}(\mathrm{ZP}(\mathbf{W})) \odot \mathrm{FFT}(\mathrm{ZP}(\mathbf{H})) \leq || \mathrm{FFT}(\mathbf{W}) ||_F || \mathrm{FFT}(\mathbf{H}) ||_F
\end{equation}
Taking Equation~(\ref{eq:Parseval_theory}) into Equation~(\ref{eq:matrix_inequality}), we have
\begin{align}
(K+N-1) || \mathbf{W} \ast \mathbf{H} ||_F &= || \mathrm{FFT}(\mathbf{W}) \odot \mathrm{FFT}(\mathbf{H}) ||_F
\nonumber
\\
&\leq || \mathrm{FFT}(\mathbf{W}) ||_F || \mathrm{FFT}(\mathbf{H}) ||_F
\nonumber
\\
&= (K+N-1) || \mathbf{W}||_F (K+N-1) || \mathbf{H} ||_F
\nonumber
\end{align}


\end{proof}

\section{Gradient Error Bound ($\mathrm{GEB}$) of a Linear Layer}
\label{appendix:linear_grad_error}

We give the Gradient Error upper Bound ($\mathrm{GEB}$) of a linear layer and proof in this section.

\textbf{Theorem 1B.} 
\textit{During the backward pass of a linear layer~$l$, $\mathrm{GEB}^{\mathsf{L}}_l$ and $\mathrm{GEB}^{\mathsf{H}}_l$ satisfy
\begin{equation}
\label{eq:MLP_grad_error}
\mathrm{GEB}^{\mathsf{L}}_l \!-\! \mathrm{GEB}^{\mathsf{H}}_l  \!=\! \big( \alpha_l || \mathbf{H}_{l-1}^{\mathsf{T}} ||_F \!+\! \beta_l \big) (\lambda^{\mathsf{H}}_l \!-\! \lambda^{\mathsf{L}}_l) \!+\! || \mathbf{H}_{l-1}^{\mathsf{T}} ||_F \!\!\!\!\sum_{i=l+1}^L \!\!\! \alpha_i  (\lambda^{\mathsf{H}}_i \!-\! \lambda^{\mathsf{L}}_i) \!\prod_{j=l}^{i-1}\! \gamma_j,
\end{equation}
where $\alpha_l, \beta_l, \gamma_l > 0$ for $1 \!\leq\! l \!\leq\! L$ are given by Equation~(\ref{eq:beta_MLP}); $\lambda^{\mathsf{L}}_l \!=\! || \widetilde{\mathbf{H}}_l \!\odot\! \mathbf{M} ||_F$; $\lambda^{\mathsf{H}}_l \!=\! || \widetilde{\mathbf{H}}_l \!\odot\! ( \mathbf{1} \!-\! \mathbf{M}) ||_F$; 
$\widetilde{\mathbf{H}}_l = \mathrm{DCT}({\mathbf{H}}_l)$; and $\mathbf{M}$ denotes the 1-D loss-pass mask.}


\begin{proof}

For simplicity of derivation, we consider the case $\mathrm{MiniBatch} \!\!=\!\! 1$.
In this case, $\mathbf{H}_l$ is a vector; and $\mathbf{W}_l$ is a 2-D matrix, for $1 \!\leq\! l \!\leq\! L$.
The backward propagation of a linear layer is given by
\begin{equation}
\begin{aligned}
\label{eq:MLP_grad}
&\hat{\nabla}_{\mathbf{Z}_{l}} \, \ = (\mathbf{W}_{l+1} \hat{\nabla}_{\mathbf{Z}_{l+1}}) \odot \sigma'(\hat{\mathbf{Z}}_{l}), 
\\
&\hat{\nabla}_{\mathbf{W}_{l}} = \hat{\nabla}_{\mathbf{Z}_{l}}  \hat{\mathbf{H}}_{l-1}^{\mathsf{T}},
\end{aligned}
\end{equation}
where $\hat{\mathbf{Z}}_{l} = \mathbf{W}_{l}^{\mathsf{T}} \hat{\mathbf{H}}_{l-1} + b_{l}$; and $b_{l}$ denotes the bias of layer $l$.
The case of $\mathrm{MiniBatch}\!\geq\! 2$ can be proved in an analogous way, which is omitted in this work.

According to Equation~(\ref{eq:MLP_grad}), we have the gradient of $\mathbf{Z}_{l}$ given by
\begin{align}
&\hat{\nabla}_{\mathbf{Z}_{l}} - \nabla_{\mathbf{Z}_{l}}
\nonumber
\\
&= \mathbf{W}_{l+1} \hat{\nabla}_{\mathbf{Z}_{l+1}} \odot \sigma'(\hat{\mathbf{Z}}_{l}) - \mathbf{W}_{l+1} \nabla_{\mathbf{Z}_{l+1}} \odot \sigma'(\mathbf{Z}_{l}),
\nonumber
\\
&= \! \mathbf{W}_{l+1} \hat{\nabla}_{\mathbf{Z}_{l+1}} \!\odot\! \sigma'(\hat{\mathbf{Z}}_{l}) \!-\! \mathbf{W}_{l+1} \hat{\nabla}_{\mathbf{Z}_{l+1}}  \!\odot\! \sigma'(\mathbf{Z}_{l}) \!+\! \mathbf{W}_{l+1} \hat{\nabla}_{\mathbf{Z}_{l+1}} \!\odot\! \sigma'(\mathbf{Z}_{l}) \!-\! \mathbf{W}_{l+1} \nabla_{\mathbf{Z}_{l+1}} \!\odot\! \sigma'(\mathbf{Z}_{l}),
\nonumber
\\
&= \mathbf{W}_{l+1} \hat{\nabla}_{\mathbf{Z}_{l+1}} \odot [ \sigma'(\hat{\mathbf{Z}}_{l}) - \sigma'(\mathbf{Z}_{l}) ] + (\hat{\nabla}_{\mathbf{Z}_{l+1}} - \mathbf{W}_{l+1} \nabla_{\mathbf{Z}_{l+1}}) \odot \sigma'(\mathbf{Z}_{l}). 
\end{align}
For activation functions $\mathrm{ReLu}(\cdot)$, $\mathrm{LeakyReLu}(\cdot)$, $\mathrm{Sigmoid}(\cdot)$, $\mathrm{Tanh}(\cdot)$ and $\mathrm{SoftPlus}(\cdot)$, the gradient $\sigma'(\cdot)$ satisfies $| \sigma''(\cdot) | \!\leq\! 1$ in each differentiable domain.
Combined with Cauchy–Schwarz inequality $|| \mathbf{A}_1 \mathbf{A}_2 ||_F \leq || \mathbf{A}_1 ||_F || \mathbf{A}_2 ||_F$ ~\cite{horn2012matrix}, we have 
\begin{align}
|| \sigma'(\hat{\mathbf{Z}}_{l}) - \sigma'(\mathbf{Z}_{l}) ||_F \leq || \hat{\mathbf{Z}}_{l} - \mathbf{Z}_{l} ||_F \leq || \mathbf{W}_{l} ||_F || \hat{\mathbf{H}}_{l-1} - \mathbf{H}_{l-1} ||_F.
\end{align}
According to inequality $|| \mathbf{A}_1 \odot \mathbf{A}_2 ||_F \!\leq\! || \mathbf{A}_1 ||_F || \mathbf{A}_2 ||_F$~\cite{horn2012matrix},  we have the upper bound of $|| \hat{\nabla}_{\mathbf{Z}_{l}} - \nabla_{\mathbf{Z}_{l}} ||_F$ given by
\begin{align}
&|| \hat{\nabla}_{\mathbf{Z}_{l}} - \nabla_{\mathbf{Z}_{l}} ||_F 
\nonumber
\\
&\leq || \mathbf{W}_{l+1} ||_F || \hat{\nabla}_{\mathbf{Z}_{l+1}} ||_F  || \sigma'(\hat{\mathbf{Z}}_{l}) - \sigma'(\mathbf{Z}_{l}) ||_F + || \mathbf{W}_{l+1} ||_F || \hat{\nabla}_{\mathbf{Z}_{l+1}} - \nabla_{\mathbf{Z}_{l+1}} ||_F || \sigma'(\mathbf{Z}_{l}) ||_F,
\nonumber
\\
&= || \mathbf{W}_{l+1} ||_F || \hat{\nabla}_{\mathbf{Z}_{l+1}} ||_F || \hat{\mathbf{H}}_{l-1} \!-\! \mathbf{H}_{l-1} ||_F || \mathbf{W}'_{l} ||_F + || \mathbf{W}_{l+1} ||_F || \hat{\nabla}_{\mathbf{Z}_{l+1}} \!-\! \nabla_{\mathbf{Z}_{l+1}} ||_F || \sigma'(\mathbf{Z}_{l}) ||_F,
\nonumber
\\
\label{eq:grad_Z_upperbound_MLP}
&= \alpha_l || \hat{\mathbf{H}}_{l-1} - \mathbf{H}_{l-1} ||_F + \gamma_l || \hat{\nabla}_{\mathbf{Z}_{l+1}} - \nabla_{\mathbf{Z}_{l+1}} ||_F,
\end{align}
where $\alpha_l$ and $\gamma_l$ are given by
\begin{equation}
\begin{aligned}
\label{eq:alpha_gamma_MLP}
\alpha_l &= || \mathbf{W}_{l+1} ||_F || \hat{\nabla}_{\mathbf{Z}_{l+1}} ||_F || \mathbf{W}'_{l} ||_F;
\\
\gamma_l &= || \mathbf{W}_{l+1} ||_F || \sigma'(\mathbf{Z}_{l}) ||_F;
\end{aligned}
\end{equation}
the value $\alpha_l$ and $\gamma_l$ depend on the model weight before backward propagation, which are constant with respect to the gradient. 
Iterate Equation~(\ref{eq:grad_Z_upperbound_MLP}) until $l=L$ where $|| \hat{\nabla}_{\mathbf{Z}_{L}} - \nabla_{\mathbf{Z}_{L}} ||_F 
\leq \alpha_l || \hat{\mathbf{H}}_{L-1} \!-\! \mathbf{H}_{L-1} ||_F$.
In such a manner, we have 
\begin{equation}
\label{eq:grad_Z_upperbound_MLP2}
|| \hat{\nabla}_{\mathbf{Z}_{l}} - \nabla_{\mathbf{Z}_{l}} ||_F 
\leq \alpha_l || \hat{\mathbf{H}}_{l-1} \!-\! \mathbf{H}_{l-1} ||_F + \!\!\sum_{i=l+1}^L \!\! \alpha_i || \hat{\mathbf{H}}_{i-1} \!-\! \mathbf{H}_{i-1} ||_F \prod_{j=l}^{i-1} \gamma_j.
\end{equation}

According to Equation~(\ref{eq:MLP_grad}), we have the gradient of $\mathbf{W}_{l}$ given by
\begin{align}
\hat{\nabla}_{\mathbf{W}_{l}} - \nabla_{\mathbf{W}_{l}}
&= \hat{\nabla}_{\mathbf{Z}_{l}}  \hat{\mathbf{H}}_{l-1}^{\mathsf{T}} - \nabla_{\mathbf{Z}_{l}}  \mathbf{H}_{l-1}^{\mathsf{T}},
\nonumber
\\
&= \hat{\nabla}_{\mathbf{Z}_{l}}  \hat{\mathbf{H}}_{l-1}^{\mathsf{T}} - \hat{\nabla}_{\mathbf{Z}_{l}} \mathbf{H}_{l-1}^{\mathsf{T}} + \hat{\nabla}_{\mathbf{Z}_{l}} \mathbf{H}_{l-1}^{\mathsf{T}} - \nabla_{\mathbf{Z}_{l}}  \mathbf{H}_{l-1}^{\mathsf{T}},
\nonumber
\\
\label{eq:grad_W_difference_MLP}
&= \hat{\nabla}_{\mathbf{Z}_{l}} ( \hat{\mathbf{H}}_{l-1}^{\mathsf{T}} - \mathbf{H}_{l-1}^{\mathsf{T}} ) + ( \hat{\nabla}_{\mathbf{Z}_{l}} - \nabla_{\mathbf{Z}_{l}} )  \mathbf{H}_{l-1}^{\mathsf{T}}.
\end{align}

Taking Equation~(\ref{eq:grad_Z_upperbound_MLP2}) into Equation~(\ref{eq:grad_W_difference_MLP}), we have
\begin{align}
&||\hat{\nabla}_{\mathbf{W}_{l}} - \nabla_{\mathbf{W}_{l}}||_F 
\nonumber
\\
&\leq || \hat{\nabla}_{\mathbf{Z}_{l}} ||_F || \hat{\mathbf{H}}_{l-1}^{\mathsf{T}} - \mathbf{H}_{l-1}^{\mathsf{T}} ||_F + || \hat{\nabla}_{\mathbf{Z}_{l}} - \nabla_{\mathbf{Z}_{l}} ||_F || \mathbf{H}_{l-1}^{\mathsf{T}} ||_F,
\nonumber
\\
&\leq || \hat{\nabla}_{\mathbf{Z}_{l}} ||_F || \hat{\mathbf{H}}_{l-1}^{\mathsf{T}} \!-\! \mathbf{H}_{l-1}^{\mathsf{T}} ||_F \!+\! || \mathbf{H}_{l-1}^{\mathsf{T}} ||_F \Big[ \alpha_l || \hat{\mathbf{H}}_{l-1} \!-\! \mathbf{H}_{l-1} ||_F \!+ \!\!\!\! \sum_{i=l+1}^L \!\! \alpha_i || \hat{\mathbf{H}}_{i-1} \!-\! \mathbf{H}_{i-1} ||_F \prod_{j=l}^{i-1} \gamma_j \Big],
\nonumber
\\
\label{eq:W_GER_MLP}
&= \Big( \beta_l \!+\! \alpha_l || \mathbf{H}_{l-1}^{\mathsf{T}} ||_F \Big) || \hat{\mathbf{H}}_{l-1}^{\mathsf{T}} \!-\! \mathbf{H}_{l-1}^{\mathsf{T}} ||_F \!+\! || \mathbf{H}_{l-1}^{\mathsf{T}} ||_F \!\! \sum_{i=l+1}^L \alpha_i || \hat{\mathbf{H}}_{i-1}^{\mathsf{T}} \!-\! \mathbf{H}_{i-1}^{\mathsf{T}} ||_F \prod_{j=l}^{i-1} \gamma_j,
\end{align}
where $\beta_l$ is given by
\begin{equation}
\begin{aligned}
\label{eq:beta_MLP}
\alpha_l &= || \mathbf{W}_{l+1} ||_F || \hat{\nabla}_{\mathbf{Z}_{l+1}} ||_F || \mathbf{W}'_{l} ||_F;
\\
\beta_l &= || \hat{\nabla}_{\mathbf{Z}_{l}} ||_F;
\\
\gamma_l &= || \mathbf{W}_{l+1} ||_F || \sigma'(\mathbf{Z}_{l}) ||_F.
\end{aligned}
\end{equation}

During the $\mathrm{LFC}\text{-}\mathrm{ACT}$ and $\mathrm{HFC}\text{-}\mathrm{ACT}$ trainings, the activation map of a linear layer satisfies
\begin{align}
\label{eq:H_difference_lfc_MLP}
|| \mathbf{H}_l - \mathbf{H}^{\mathsf{L}}_l ||_F &= || \widetilde{\mathbf{H}}_l - \widetilde{\mathbf{H}}^{\mathsf{L}}_l ||_F = || \widetilde{\mathbf{H}}_l \odot( \mathbf{1} - \mathbf{M}) ||_F \triangleq \lambda^{\mathsf{H}}_l,
\\
\label{eq:H_difference_hfc_MLP}
|| \mathbf{H}_l - \mathbf{H}^{\mathsf{H}}_l ||_F &= || \widetilde{\mathbf{H}}_l - \widetilde{\mathbf{H}}^{\mathsf{H}}_l ||_F = || \widetilde{\mathbf{H}}_l \odot \mathbf{M} ||_F \triangleq \lambda^{\mathsf{L}}_l.
\end{align}
Taking Equations~(\ref{eq:H_difference_lfc_MLP}) and~(\ref{eq:H_difference_hfc_MLP}) into~(\ref{eq:W_GER_MLP}), we have the $\mathrm{GEB}^{\mathsf{L}}_l$ and $\mathrm{GEB}^{\mathsf{H}}_l$ of a linear layer given by
\begin{align}
\label{eq:GER_L_upperbound_MLP}
||\hat{\nabla}_{\mathbf{W}_{l}} \!-\! \nabla^{\mathsf{L}}_{\mathbf{W}_{l}}||_F 
\!&\leq\! \Big( \alpha_l || \mathbf{H}_{l-1}^{\mathsf{T}} ||_F + \beta_l \Big) \lambda^{\mathsf{H}}_l + || \mathbf{H}_{l-1}^{\mathsf{T}} ||_F \!\! \sum_{i=l+1}^L \alpha_i \lambda^{\mathsf{H}}_i \prod_{j=l}^{i-1} \gamma_j \triangleq \mathrm{GEB}^{\mathsf{L}}_l,
\\
\label{eq:GER_H_upperbound_MLP}
||\hat{\nabla}_{\mathbf{W}_{l}} \!-\! \nabla^{\mathsf{H}}_{\mathbf{W}_{l}}||_F 
\!&\leq\! \Big( || \alpha_l || \mathbf{H}_{l-1}^{\mathsf{T}} ||_F + \beta_l \Big) \lambda^{\mathsf{L}}_l + || \mathbf{H}_{l-1}^{\mathsf{T}} ||_F  \!\!\sum_{i=l+1}^L \alpha_i \lambda^{\mathsf{L}}_i \prod_{j=l}^{i-1} \gamma_j \triangleq \mathrm{GEB}^{\mathsf{H}}_l.
\end{align}

Given the expression of $\mathrm{GEB}^{\mathsf{L}}_l$ and $\mathrm{GEB}^{\mathsf{H}}_l$ by Equations~(\ref{eq:GER_L_upperbound_MLP}) and~(\ref{eq:GER_H_upperbound_MLP}), we have the $\mathrm{GEB}$ difference for a linear layer given by
\begin{equation}
\mathrm{GEB}^{\mathsf{L}}_l \!-\! \mathrm{GEB}^{\mathsf{H}}_l
\!=\! \Big( || \alpha_l || \mathbf{H}_{l-1}^{\mathsf{T}} ||_F + \beta_l \Big) (\lambda^{\mathsf{H}}_l \!-\! \lambda^{\mathsf{L}}_l) \!+ || \mathbf{H}_{l-1}^{\mathsf{T}} ||_F  \!\!\!\! \sum_{i=l+1}^L \!\! \alpha_i (\lambda^{\mathsf{H}}_i \!-\! \lambda^{\mathsf{L}}_i) \prod_{j=l}^{i-1} \gamma_j. 
\nonumber
\end{equation}

\end{proof}

\section{Proof of Theorem~\ref{theorem:low_pass}}
\label{appendix:proof_low_pass}

We give the proof of Theorem~\ref{theorem:low_pass} in this section.

\textbf{Theorem 2.} 
\textit{
For any real-valued function $f(x)$ and its moving average $\bar{f}(x) = \frac{1}{2B} \int_{x}^{x+2B} f(t) \mathrm{d}t$, let $F(\omega)$ and $\overline{F}(\omega)$ denote the Fourier transformation~\cite{madisetti1997digital} of $f(x)$ and $\bar{f}(x)$, respectively. 
Generally, we have $\overline{F}(\omega) = H(\omega) F(\omega)$, where $|H(\omega)| = \big| \frac{\sin \omega B}{\omega B} \big|$.}

\begin{proof}

We adopt the limit operator to reformulate $\bar{f}(x)$ into
\begin{equation}
\label{eq:f_bar}
    \bar{f}(x) = \frac{1}{2B} \int_{x}^{x+2B} \!\!\!\!\!\!\!\!\!\! f(t) \mathrm{d}t = \frac{1}{2B} \lim_{N \to \infty} \sum_{n=0}^{N-1} \frac{2B}{N} f(x + \frac{2Bn}{N}) = \lim_{N \to \infty} \sum_{n=0}^{N-1} \frac{1}{N} f(x + \frac{2Bn}{N})
\end{equation}

Taking Equation~(\ref{eq:f_bar}) into the Fourier Transform of $\bar{f}(x)$, we have 
\begin{equation}
\begin{aligned}
F'(\omega) &= \int_{-\infty}^{\infty} \bar{f}(x) e^{-i \omega x} \mathrm{d}x
= \int_{-\infty}^{\infty} \frac{1}{N} \lim_{N \to \infty} \sum_{n=0}^{N-1} f(x + \frac{2Bn}{N}) e^{-i \omega x} \mathrm{d}x
\\
&= \lim_{N \to \infty} \frac{1}{N} \sum_{n=0}^{N-1} \int_{-\infty}^{\infty} f(x + \frac{2Bn}{N}) e^{-i \omega x} \mathrm{d}x
\\
&= \lim_{N \to \infty} \frac{1}{N} \sum_{n=0}^{N-1} e^{i \omega \frac{2Bn}{N}} \int_{-\infty}^{\infty} f(x) e^{-i \omega x} \mathrm{d}x
= F(\omega) \lim_{N \to \infty} \frac{1}{N} 
 \sum_{n=0}^{N-1} e^{i \omega \frac{2Bn}{N}} 
\\
&= F(\omega) (1-e^{i \omega 2B}) \lim_{N \to \infty} \frac{1}{N (1 - e^{i \omega \frac{2B}{N}})} 
= F(\omega) \frac{1 - e^{i \omega 2B}}{-i \omega 2B},
\end{aligned}
\end{equation}
where $i$ denotes the imaginary unit.

Let $H(\omega) = \frac{1 - e^{i \omega 2B}}{-i \omega 2B}$.
The magnitude of $H(\omega)$ is given by
\begin{equation}
\begin{aligned}
\Big| H(\omega) \Big| &= \frac{| 1 - \cos \omega 2B + i \sin \omega 2B |}{| \omega 2B |}
= \frac{\sqrt{ (1 - \cos \omega 2B)^2 + \sin^2 \omega 2B |}}{| \omega 2B}
\\
&= \frac{\sqrt{ 4 \sin^4 \omega B + 4 \sin^2 \omega B \cos^2 \omega B}}{| \omega 2B |}
= \frac{\sqrt{ 4 \sin^2 \omega B (\sin^2 \omega B +  \cos^2 \omega B )}}{| \omega 2B |}
\\
&= \Big| \frac{\sin \omega B}{\omega B} \Big|
\end{aligned}
\end{equation}

\end{proof}

\section{Theoretical Compression Rate of \Algnameabbr{}}
\label{appendix:compression_rate}

The compression rate of \Algnameabbr{} is estimated in this section.
A general case of convolutional neural networks~(CNN) and multi-layer perceptron~(MLP) are considered for the estimation.

\subsection{Compression Rate of CNN training}

Without loss of generality, we estimate the compression rate for a block of convolutional layer~(\texttt{conv}), batch normalization layer~(\texttt{BN}) and Relu activation. 
Most of existing backbones purely stacks \texttt{conv-BN-Relu} blocks~\cite{he2016deep, huang2017densely, szegedy2015going, tan2019efficientnet, simonyan2014very}, which makes our estimated compression rate hold in practice.
Generally, the compression rate is defined as the memory reduction ratio after the compression.
To be concrete, let $\mathrm{Minibatch} \!\times\! \mathrm{Channel} \!\times\! N \!\times\! N$ denote the shape of activaition maps for a \texttt{conv-BN-Relu} block; given the block-size $B$ and bit-width $Q$, \Algnameabbr{} has the compression rate of activation maps given by Theorem~3A.


%

\textbf{Theorem 3A.}
\textit{\Algnameabbr{} has average activation map compression rate for a $\texttt{conv}$-$\texttt{BN}$-$\texttt{Relu}$ block given by 
\begin{equation}
\label{eq:appendix_compresstion_rate_conv}
R = \frac{\text{Mem of } \mathbf{H}}{\text{Mem of } ( \mathbf{H}^{\mathsf{L}}, \mathbf{V}^{\mathsf{H}}, \Delta, \delta )} = \frac{9}{ \frac{4}{\min\{B^2,N^2\}} + \frac{Q}{4} + \frac{8}{N^2} + \frac{1}{8}},
\end{equation}
where $\mathrm{Minibatch} \!\times\! \mathrm{Channel} \!\times\! N \!\times\! N$ is the shape of activation map $\boldsymbol{H}_l$; $B$ denotes the block-size of LFC average pooling; and $Q$ denotes the bit-width of HFC quantization.}

\begin{proof}

For each mini-batch updating of normal training, a \texttt{conv}-layer or \texttt{BN}-layer caches $N^2 \texttt{float32} \times 4 \mathrm{byte}/\texttt{float32} = 4N^2 \mathrm{byte}$ activation map; a $\mathrm{Relu}$ operator caches $N^2 \texttt{int8} \times 1 \mathrm{byte}/\texttt{int8} = N^2 \mathrm{byte}$ activation map.
For each mini-batch updating of \Algnameabbr{}, a \texttt{conv}-layer or \texttt{BN}-layer caches $\frac{N^2}{\min\{B^2,N^2\}} \texttt{bfloat16} \times 2 \mathrm{byte}/\texttt{bfloat16} = \frac{2N^2}{\min\{B^2,N^2\}} \mathrm{byte}$ LFC; and $QN^2 \mathrm{bit} \times \frac{1}{8} \mathrm{bit}/\mathrm{byte}= \frac{Q}{8}N^2 \mathrm{byte}$ HFC; and spends $2\texttt{bfloat16} \times 2 \mathrm{byte}/\texttt{bfloat16} = 4 \mathrm{byte}$ for $\Delta_{l}$ and $\delta_{l}$.
Moreover, a $\mathrm{Relu}$ operator caches $N^2 \texttt{bit} \times \frac{1}{8} \mathrm{byte}/\texttt{bit} = \frac{N^2}{8} \mathrm{byte}$ activation map.
Therefore, the average activation map compression rate of a $\texttt{conv}$-$\texttt{BN}$-$\texttt{Relu}$ block is given by 
\begin{equation}
R = \frac{4N^2 \times 2 + N^2}{ \big( \frac{2N^2}{\min\{B^2,N^2\}} + \frac{Q}{8}N^2 + 4 \big) \times 2 + \frac{1}{8}N^2} = \frac{9}{ \frac{4}{\min\{B^2,N^2\}} + \frac{Q}{4} + \frac{8}{N^2} + \frac{1}{8}}.
\end{equation}

\end{proof}

A higher compression rate indicates more effective compression. 
It is observed that the compression rate grows with $B$ and $N$, and decreases with $Q$.
In our experiments, we found $B \!=\! 8$ and $Q \!=\! 2$ can provide loss-less model accuracy.
In this condition, the shape of activation maps satisfies $N \geq 7$ for ResNet-50 and WRN-50-2 on the ImageNet dataset~\cite{he2016deep}.
According to Equation~(\ref{eq:appendix_compresstion_rate_conv}), we have $R_{\text{ResNet-50}}, R_{\text{WRN-50-2}} \geq 10.35$.

\subsection{Compression Rate of MLP Training}

We estimate the compression rate for a \texttt{linear}-\texttt{Relu} block in Theorem 3B.
An MLP simply stacks multiple \texttt{linear}-\texttt{Relu} blocks, such that our estimated compression rate holds for MLP models.


\textbf{Theorem 3B.}
\textit{
\Algnameabbr{} has average activation map compression rate for a \texttt{linear}-\texttt{Relu} block given by 
\begin{equation}
\label{eq:appendix_compresstion_rate_linear}
R = \frac{\text{Mem of } \mathbf{H}}{\text{Mem of } ( \mathbf{H}^{\mathsf{L}}, \mathbf{V}^{\mathsf{H}}, \Delta, \delta )} = \frac{5}{ \frac{2}{\min\{B,N\}} + \frac{Q}{8} + \frac{4}{N} + \frac{1}{8}},
\end{equation}
where $\mathrm{Minibatch} \!\times\! N$ is the shape of activation map $\boldsymbol{H}_l$; $B$ denotes the block-size of LFC average pooling; and $Q$ denotes the bit-width of HFC quantization.
}

\begin{proof}
For each mini-batch updating of normal training, a \texttt{linear}-layer caches $N \texttt{float32} \times 4 \mathrm{byte}/\texttt{float32} = 4N \mathrm{byte}$ activation map; a $\mathrm{Relu}$ operator caches $N \texttt{int8} \times 1 \mathrm{byte}/\texttt{int8} = N \mathrm{byte}$ activation map.
For each mini-batch updating of \Algnameabbr{}, a \texttt{linear}-layer caches $\frac{N}{\min\{B,N\}} \texttt{bfloat16} \times 2 \mathrm{byte}\texttt{bfloat16} = \frac{2N}{\min\{B,N\}} \mathrm{byte}$ LFC; and $QN \mathrm{bit} \times \frac{1}{8} \mathrm{bit}/\mathrm{byte}= \frac{Q}{8}N \mathrm{byte}$ HFC; and spends $2\texttt{bfloat16} \times 2 \mathrm{byte}/\texttt{bfloat16} = 4 \mathrm{byte}$ for $\Delta_{l}$ and $\delta_{l}$.
Moreover, a $\texttt{Relu}$ operator caches $N \texttt{bit} \times \frac{1}{8} \mathrm{byte}/\texttt{bit} = \frac{N}{8} \mathrm{byte}$ activation map.
Therefore, the average activation map compression rate of a \texttt{linear}-\texttt{Relu} block given by 
\begin{equation}
R = \frac{4N + N}{ \frac{2N}{\min\{B,N\}} + \frac{Q}{8}N + 4 + \frac{1}{8}N} = \frac{5}{ \frac{2}{\min\{B,N\}} + \frac{Q}{8} + \frac{4}{N} + \frac{1}{8}}.
\end{equation}
\end{proof}

\section{Related Work}

We discuss more related work about memory efficient training in this section. 
The discussion of existing work is from the perspectives of model pruning, quantization, model distributed training, randomized approximate method,  embedding table sharding, and local sensitive harshing.

\paragraph{Model Pruning.} Model pruning aims to reduce the memory footprint of DNNs by eliminating unnecessary connections or weights. These techniques leverage the observation that many weights in a trained network are redundant or have minimal impact on the model's performance. By removing these parameters, memory usage can be significantly reduced without compromising accuracy. Notable approaches include magnitude pruning, structured pruning, and iterative pruning, which selectively prune weights based on their importance or magnitude~\cite{zhong2022revisit, duan2022comprehensive, duan2022benchmarking}.

\paragraph{Quantization.}
Quantization focuses on reducing the precision of network weights and activations. Instead of using full-precision (32-bit) floating-point numbers, quantization methods represent weights and activations with lower bit-widths, such as 8-bit integers or even binary values. By quantizing parameters, memory requirements can be significantly reduced, allowing for efficient storage and computation. Recent advancements in quantization techniques, such as learned quantization and differentiable quantization~\cite{wang2022bed}, have shown promising results in preserving model accuracy.

\paragraph{Model Distributed Training.}
Model Distributed Training divides the neural network across multiple devices or machines, allowing each device to handle a specific portion of the model~\cite{10.14778/3457390.3457399, 10.14778/3529337.3529343}. This approach is particularly useful for training extremely large models that cannot fit into the memory of a single device. By partitioning the model and carefully orchestrating data and computation flows, memory requirements can be distributed across multiple devices, enabling the training of models with higher capacity.

\paragraph{Randomized Approximate Method.}
Randomized algorithms often employ sampling and sketching techniques to reduce memory requirements while providing useful insights about the data. These techniques involve selecting a representative subset of the data or summarizing the data using compact data structures. By utilizing random sampling, the algorithm can approximate characteristics of the entire dataset with a smaller memory footprint, making it more memory-efficient~\cite{wang2022dragonn, xu2021breaking, zirui2022, liu2022rsc}.

\paragraph{Embedding Table Sharding.}
Embedding table sharding refers to the practice of partitioning or dividing the embedding table into multiple smaller tables, known as shards. Each shard contains a subset of the entire embedding table's entries. Specifically, sharding provides flexibility in allocating memory resources. Instead of allocating a single large block of memory for the entire embedding table, we can allocate memory separately for each shard. This flexibility enables more efficient memory management, as memory can be allocated dynamically based on the specific needs of each shard~\cite{zha2022autoshard, zha2022dreamshard, zha2022neuroshard, ferber2022surco}. It also allows for more effective utilization of memory hierarchies, such as caching mechanisms, by optimizing the storage and retrieval of shard-specific embeddings.

\paragraph{Local sensitive Harshing~(LSH).}
LSH is a technique used in machine learning to approximate nearest neighbor search efficiently.
It is known for its scalability in handling large-scale datasets. By employing LSH, it becomes possible to partition the data into smaller subsets and store them separately. This distributed representation of the data reduces the memory requirements for each subset, making it easier to handle and process large amounts of data within the available memory resources~\cite{xu2021locality, xu2023tale, desai2021raw}.


\end{document}